\documentclass{article}

\usepackage[preprint]{icml2026}
\setcitestyle{round}

\usepackage[utf8]{inputenc} 
\usepackage[T1]{fontenc}    
\usepackage{hyperref}       
\usepackage{url}            
\usepackage{booktabs}       
\usepackage{amsfonts}       
\usepackage{microtype}      
\usepackage{xcolor}         

\usepackage{subcaption}
\usepackage{wrapfig}
\usepackage{xspace}
\usepackage{amsmath}
\usepackage{amssymb}
\usepackage{multirow}
\usepackage{algorithm}
\usepackage{algpseudocode}
\usepackage{mathtools}
\usepackage[nice]{nicefrac}
\usepackage[english]{babel}
\usepackage{amsthm}
\usepackage{makecell}
\usepackage[capitalize]{cleveref}
\usepackage{thmtools,thm-restate}

\declaretheorem[name=Theorem,numberwithin=section]{theorem}
\declaretheorem[name=Lemma,sibling=theorem]{lemma}

\addtocontents{toc}{\protect\setcounter{tocdepth}{0}}
\usepackage[textsize=tiny]{todonotes}

\newcommand{\E}[2]{\mathbb{E}_{#1}\left[#2\right]}

\DeclarePairedDelimiter\autobracket{(}{)}
\newcommand{\br}[1]{\autobracket*{#1}}
\DeclarePairedDelimiter\autosquarebracket{[}{]}
\newcommand{\sbr}[1]{\autosquarebracket*{#1}}
\newcommand{\norm}[1]{\left\lVert#1\right\rVert}
\newcommand{\normsq}[1]{\left\lVert#1\right\rVert^2}

\newcommand{\taum}{\tau_{\text{mix}}}

\definecolor{mydarkblue}{rgb}{0,0.08,0.45}
\definecolor{mydarkgreen}{rgb}{0,0.45,0.08}

\newcommand{\red}[1]{\textcolor{red}{#1}}
\newcommand{\blue}[1]{\textcolor{blue}{#1}}
\newcommand{\green}[1]{\textcolor{mydarkgreen}{#1}}

\newcommand{\correspondenceemail}{\detokenize{vaswani.sharan@gmail.com}}

\makeatletter
\DeclareRobustCommand\onedot{\futurelet\@let@token\@onedot}
\def\@onedot{\ifx\@let@token.\else.\null\fi\xspace}

\def\ie{\emph{i.e}\onedot}

\makeatother

\long\def\/*#1*/{}
\icmltitlerunning{Towards Parameter-Free Temporal Difference Learning}

\begin{document}

%

%

\twocolumn[
  \icmltitle{Towards Parameter-Free Temporal Difference Learning}

  \begin{icmlauthorlist}
    \icmlauthor{Yunxiang Li}{ubc}
    \icmlauthor{Mark Schmidt}{ubc}
    \icmlauthor{Reza Babanezhad}{samsung}
    \icmlauthor{Sharan Vaswani}{sfu}
  \end{icmlauthorlist}

  \icmlaffiliation{ubc}{University of British Columbia}
  \icmlaffiliation{samsung}{Samsung AI - Montreal}
  \icmlaffiliation{sfu}{Simon Fraser University}
  \icmlcorrespondingauthor{Sharan Vaswani}{\correspondenceemail}

  \icmlkeywords{Reinforcement Learning, Temporal Difference Learning}

  \vskip 0.3in
]

\printAffiliationsAndNotice{}

\begin{abstract}
Temporal difference (TD) learning is a fundamental algorithm for estimating value functions in reinforcement learning. Recent finite-time analyses of TD with linear function approximation quantify its theoretical convergence rate. However, they often require setting the algorithm parameters using problem-dependent quantities that are difficult to estimate in practice --- such as the minimum eigenvalue of the feature covariance (\(\omega\)) or the mixing time of the underlying Markov chain (\(\taum\)). In addition, some analyses rely on nonstandard and impractical modifications, exacerbating the gap between theory and practice. To address these limitations, we use an exponential step-size schedule with the standard TD(0) algorithm. We analyze the resulting method under two sampling regimes: independent and identically distributed (i.i.d.) sampling from the stationary distribution, and the more practical Markovian sampling along a single trajectory. In the i.i.d.\ setting, the proposed algorithm does not require knowledge of problem-dependent quantities such as \(\omega\), and attains the optimal bias-variance trade-off for the last iterate. In the Markovian setting, we propose a regularized TD(0) algorithm with an exponential step-size schedule. The resulting algorithm achieves a comparable convergence rate to prior works, without requiring projections, iterate averaging, or knowledge of \(\taum\) or \(\omega\).

\end{abstract}

\section{Introduction}
Reinforcement learning (RL) is a general framework for sequential decision making under uncertainty, with successes in robotics~\citep{kober2013reinforcement} and in aligning language models~\citep{uc2023survey}. Value functions underpin value-based algorithms~\citep{sutton2018reinforcement} and are central to actor-critic methods~\citep{konda1999actor}, making efficient policy evaluation a core RL task.

Temporal-difference (TD) learning~\citep{sutton1988learning} is an incremental policy-evaluation method that bootstraps value estimates and scales with linear function approximation. While convergence of TD with linear function approximation~\citep{tsitsiklis1997an,dalal2018finite,lakshminarayanan2018linear,mou2020on,Bhandari2018AFT,patil2023finite,samsonov2024improved} and its variants~\citep{Liu2020TemporalDL,mustafin2024closing} have been analyzed, many analyses require hard-to-estimate problem-dependent parameters or nonstandard modifications (e.g., projections or iterate averaging). We therefore seek to \textit{design a theoretically principled TD algorithm that requires only minimal modifications and does not rely on knowledge of problem-dependent constants.}

To that end, we first consider the independent and identically distributed (i.i.d.) sampling regime, where states are sampled from the stationary distribution of the underlying Markov chain for the evaluated policy. The i.i.d.\ sampling regime is often used as a testbed for designing and analyzing policy-evaluation algorithms~\citep{lakshminarayanan2018linear,dalal2018finite,Bhandari2018AFT,patil2023finite,samsonov2024improved}. 

\citet{dalal2018finite} analyze the TD(0) algorithm using tools from stochastic approximation. Their choice of step-size trades off bias (the rate at which the initialization is forgotten) and variance (from i.i.d.\ sampling) for the last iterate. Follow-up work by \citet{Bhandari2018AFT} relates TD to stochastic gradient descent (SGD) and uses optimization tools to analyze TD. They study three step-size schedules (see \cref{tab:comparison}) under i.i.d.\ sampling. Some require knowledge of the smallest eigenvalue of the state-weighted feature covariance \(\omega\), while others yield slower rates. While these rates hold for the last iterate, they do not achieve the optimal bias--variance trade-off. 
More recent work~\citep{patil2023finite,samsonov2024improved} adopts stochastic approximation, a different technique from ours, and uses tail averaging to achieve the optimal bias--variance trade-off without problem-dependent constants. However, we note that typical practical implementations of TD do not use iterate averaging. \\


\begin{table*}[!h]
    \centering
    \resizebox{\textwidth}{!}{%
    \renewcommand{\arraystretch}{2.0}
    \begin{tabular}{lccccc}
    \hline
    \textbf{Sampling} & \textbf{Step-size} & \textbf{Convergence rate} & 
    \makecell{\textbf{Parameters} \\ \textbf{needed}} & 
    \textbf{Projection} & 
    \makecell{\textbf{Last or Average} \\ \textbf{iterate convergence}}  \\
    \hline
    \multirow{3}{*}{\makecell{i.i.d.}}
        & \makecell{ \(O(t+1)^{-z}, z \in (0, 1)\) \\ \citep{dalal2018finite}}
        & \(O\!\left(\exp\!\left(-\omega T^{1-z}\right) + 1/T^z\right)\)
        & None
        & No
        & Last
        \\[6pt]
    \cline{2-6} 
        & \makecell{\(1/\sqrt{T}\) \\ \citep{Bhandari2018AFT}}
        & \(O\!\left(\tfrac{\sigma^2}{\sqrt{T}}\right)\)
        & None
        & No
        & Average
        \\[6pt]
    \cline{2-6}
        & \makecell{\(O(\omega)\) \\ \citep{Bhandari2018AFT}}
        & \(O\!\left(\exp(-\omega^2 T) + \sigma^2\right)\)
        & \(\omega\)
        & No
        & Last
        \\[6pt]
    \cline{2-6}
        & \makecell{\(O\br{\frac{1}{1 + t \, \omega}}\) \\ \citep{Bhandari2018AFT}}
        & \(O\br{\frac{\sigma^2}{T\omega}}\)
        & \(\omega\)
        & No
        & Last
        \\[6pt]
    \cline{2-6}

    

        & \makecell{ \(O(1)\) \\ \citep{samsonov2024improved}}
        & \(\tilde{O}\br{\exp\br{-\omega T} + \frac{\sigma^2}{\omega^2 \, T}} \)
        & None
        & No 
        & Average
        \\[6pt]
        \cline{2-6}
        
        &\(O\br{\frac{1}{T}^{\nicefrac{t}{T}}}\)  (\textbf{Ours})
        & \(\tilde{O}\br{\exp\br{-\omega T} + \frac{\sigma^2}{\omega^2T}} \)
        & None
        & No
        & Last
        \\[6pt]
    \hline
    \hline
    \multirow{3}{*}{\makecell{Markovian \\samples}}
        & \makecell{\(1/\sqrt{T}\) \\ \citep{Bhandari2018AFT}}
        & \(O\br{\frac{\bigl(1+\taum(1/\sqrt{T})\bigr)}{\omega^2\sqrt{T}}}\)
        & No
        & Yes
        & Average
        \\[6pt]
        \cline{2-6}
        & \makecell{\(O(1/\omega)\) \\ \citep{Bhandari2018AFT}}
        & \(O\br{\exp\br{-2\eta\omega\,T}}
            + O\br{\frac{\eta\,\bigl(1+\taum(\eta)\bigr)}{\,\omega^3}}\)
        & \(\omega\)
        & Yes
        & Last
        \\[6pt]
        \cline{2-6}
        & \makecell{\(O(1/(\omega \, (t+1)))\) \\ \citep{Bhandari2018AFT}}
        & \(O\br{\frac{\bigl(1+\taum(\alpha_T)\bigr)}{\,\omega^3}\cdot\frac{1+\log T}{T}}\)
        & \(\omega\)
        & Yes
        & Average
        \\[6pt]


        \cline{2-6}

    & \makecell{ \(O(1)\) for TD with data drop \\ \citep{samsonov2024improved}}
        & \(\tilde{O}\br{\exp\br{-\omega T} + \frac{\taum}{\omega^2 \, T}} \)
        & $\taum$
        & No 
        & Average
        \\[6pt]
        \cline{2-6}

        & \makecell{\(O\br{\frac{\omega}{\taum}}\) \\ \citep{mitra2025a}}
        & \(O\br{\exp\br{-\frac{\omega^2(T+1)}{\taum}}} + \tilde{O}\br{\frac{\taum}{\omega^2(T+1)}}\)
        & \(\taum, \omega\)
        & No
        & Average
        \\[10pt]
        \cline{2-6}


        & \makecell{\(O\br{\frac{1}{\ln{T}}\frac{1}{T}^{\nicefrac{t}{T}}}\) \\ (\textbf{Ours})}
        & \(O\br{\exp\br{-\frac{\omega T}{\ln^3(T)}} + \frac{\ln^4 (T)}{\omega^2T}\exp\br{\frac{m}{\ln (1/\rho)}}} \)
        & $\omega$
        & No
        & Last
        \\[6pt]
        \cline{2-6}

        & \makecell{\(O\br{\frac{1}{\sqrt{T} \, \ln{T}}\frac{1}{T}^{\nicefrac{t}{T}}}\) for regularized TD \\ (\textbf{Ours})}
        & \(O\br{\exp\br{-\frac{\omega \sqrt{T}}{\ln^3(T)}} + \frac{\ln^3 (T)}{\omega^2T}\exp\br{\frac{m}{\ln (1/\rho)}}} \)
        & None
        & No
        & Last
        \\[6pt]
        \hline
    \end{tabular}
    }
    \\[6pt]
    \caption{Comparison of our method and other methods. \(\omega\) is the smallest eigenvalue of the state feature covariance matrix, \(T\) is the number of updates, \(\taum\) is the mixing time defined in~\cref{eq:reg_taum_def}, and \(m,\rho\) are constants in~\cref{def:reg_mixing_time},
    Our i.i.d. result and regularized TD(0) under Markovian sampling require no projections, no prior knowledge of \(\taum\) or \(\omega\), and no iterate averaging.
    }
    \label{tab:comparison}
\end{table*}

\textbf{Contribution 1}. For TD(0) with linear function approximation under i.i.d. sampling, we take an optimization lens similar to \citet{Bhandari2018AFT} and develop a TD algorithm that uses exponentially decaying step-sizes~\citep{li2021second}. Such exponential decaying step-sizes have been used with SGD for minimizing smooth, strongly convex objectives~\citep{vaswani2022towards}. Although the TD(0) update shares certain properties with SGD, it is not the gradient of a fixed objective. Nevertheless, we are the first to prove that TD(0) with exponentially decaying step-sizes achieves the optimal bias-variance trade-off for the last iterate, and does not require knowledge of problem-dependent constants such as \(\omega\) (\cref{sec:exp_iid}).

Since obtaining direct access to the stationary distribution is unrealistic, the i.i.d.\ regime is impractical. Consequently, many theoretical works analyze TD(0) with Markovian sampling~\citep{Bhandari2018AFT,samsonov2024improved,patil2023finite,mou2020on,chandak2025concentration}. In this setting, data are collected along a single Markovian trajectory, introducing temporal dependence that complicates analysis. To enable analysis, prior work often assumes fast mixing so the state distribution approaches stationarity exponentially quickly. Under this assumption, \citet{Bhandari2018AFT} analyze a projected variant of TD(0) under three step-size schedules. Similar to the i.i.d.\ case, the algorithm does not achieve the optimal trade-off between bias and dependence on the mixing time. Moreover, the projection step is nonstandard in practice and requires knowledge of \(\omega\). \citet{chandak2025concentration} treat TD(0) as a contractive stochastic approximation algorithm. However, they require \(\omega\) to set the initial step-size and guarantee convergence only to a neighborhood of the solution.

Other recent Markovian analyses fall into two categories: (i) \citet{srikant2019finite,mitra2025a}, which control correlations between consecutive samples and prove convergence without projection; and (ii) \citet{samsonov2024improved,patil2023finite}, which study TD with data drop, a nonstandard variant that does not explicitly analyze consecutive-sample correlations. Both approaches can achieve the optimal trade-off between bias and mixing-time dependence, but they require knowledge of the mixing time (which is difficult to estimate) and prove only average-iterate convergence. Furthermore, algorithms that discard samples~\citep{samsonov2024improved,patil2023finite} are sample-inefficient, and unlikely to be used in practice. 

\textbf{Contribution 2}. In the Markovian sampling regime, we show that standard TD(0) with linear function approximation and exponentially decaying step-sizes achieves the optimal bias--mixing time trade-off. The algorithm requires neither projections, iterate averaging, nor data drop, and it does not require the mixing time (\cref{sec:standard_td0}); however, it still depends on \(\omega\). We remove this dependence by analyzing a regularized TD(0) variant~\citep{patil2023finite} with exponentially decaying step-sizes. Unlike \citet{patil2023finite}, who use regularization to improve constants, we use it to make the algorithm parameter-free from problem-dependent constants. Our result remains parameter-free while retaining the benefits of standard TD(0) (\cref{sec:regularized_td0}).



Table~\ref{tab:comparison} compares our results with prior work by convergence rate, required parameters, projection, and whether average- or last-iterate convergence is guaranteed. The rest of the paper is organized as follows: \cref{sec:notations} formalizes the problem and notation and introduces TD(0) with exponentially decaying step-sizes. \cref{sec:exp_iid} presents the i.i.d. analysis. \cref{sec:exp_markov} extends the analysis to Markovian sampling. \cref{sec:regularized_td0} establishes our parameter-free regularized TD(0) guarantees.

\section{Problem Formulation}\label{sec:notations}
In this section, we formalize the setting and notation, including the Markov decision process (MDP) and TD(0), linear value-function approximation and assumptions, and an exponential step-size schedule.
\vspace{-2ex}
\paragraph{Markov decision process.} We consider a discounted MDP $M = (\mathcal{S}, \mathcal{A}, \pi, P_\pi, \mu_0, r, \gamma)$, where $\mathcal{S}$ is the state space, $\mathcal{A}$ is the action space, $\pi$ is a fixed policy mapping each state $s \in \mathcal{S}$ to a distribution over actions in $\mathcal{A}$, $P_\pi \in \mathbb{R}^{|\mathcal{S}|\times |\mathcal{S}|}$ is the transition matrix induced by $\pi$ with entries $(P_\pi)_{ij} \triangleq \mathbb{P}(s_{t+1} = s_j \mid s_t = s_i)$, $\mu_0$ is the initial state distribution, $r: \mathcal{S} \times \mathcal{A} \rightarrow \mathbb{R}$ is the reward function, and $\gamma \in (0, 1]$ is the discount factor.
At time $t$, an action $a_t \sim \pi(\cdot \mid s_t)$ is selected, the next state $s_{t+1} \sim P_\pi(\cdot \mid s_t)$ is sampled, and a reward $r(s_t, a_t)$ is received. Because the policy $\pi$ is fixed, we define the expected immediate reward as \(r(s) \triangleq \mathbb{E}_{a \sim \pi(\cdot \mid s)}[r(s, a)]\). For simplicity, we assume $r(s) \in [0, 1]$.
Iterating this interaction produces a trajectory $\tau = (s_0, a_0, s_1, \dots)$ with distribution $p_\pi(\tau)$ under policy $\pi$. Let $\mu_\pi$ denote the stationary state distribution induced by $\pi$. The initial state distribution $\mu_0$ may differ from $\mu_\pi$. The state distribution at time $t$ is $P_\pi^t \mu_0$.
The value function $V^\pi$ gives the expected cumulative discounted reward starting from $\mu_0$ and following policy $\pi$, i.e., \(V^\pi(s) = \mathbb{E}_{\tau \sim p_\pi(\cdot \mid \mu_0)}\left[ \sum_{t=0}^\infty \gamma^t r(s_t) \right]\).

We consider three regimes for sampling: mean-path, i.i.d., and Markovian. In the mean-path setting, expectations are evaluated exactly under \(\mu_\pi\). In the i.i.d. setting, samples are independent draws from \(\mu_\pi\). In the Markovian setting, the process starts from \(\mu_0\) and evolves along a single trajectory.

With these sampling regimes in place, we now turn to policy evaluation for \(\pi\) using TD methods. Given a sampled transition $(s_t,a_t,s_t^\prime)$ at iteration $t$, TD with one-step bootstrapping (referred to as TD(0)) updates the value estimate as follows:
\begin{align*}
V(s_t) \leftarrow V(s_t) + \eta_t \left(r(s_t) + \gamma V(s_t^\prime) - V(s_t)\right)\tag{TD(0) update} 
\end{align*}
where $\eta_t > 0$ is the step-size at time $t$. For simplicity, we omit the superscript $\pi$ in $V^\pi$.


\paragraph{Linear value function approximation.} The above TD(0) update is done on a per-state basis, and becomes computationally expensive for MDPs with a large state space. Consequently, previous works \citep{tsitsiklis1997an,Bhandari2018AFT} consider a linear approximation of the value function. Specifically, for parameters $w \in \mathbb{R}^d$ to be estimated, these works assume that $V_w(s) = w^\top \phi(s)$, where $\phi(s) \in \mathbb{R}^d$ is the known feature vector of state $s$. Given a sampled transition $(s_t, a_t, s_t^\prime)$, the linear TD(0) update \citep{sutton1988learning} is given by:
\begin{align}
w_{t+1} 
&= w_t + \eta_t \left(r(s_t) + \gamma w_t^\top \phi(s_t^\prime) - w_t^\top \phi(s_t)\right) \phi(s_t) \nonumber\\
&= w_t + \eta_t g_t(w_t)\,. \label{eq:td_update} 
\end{align}
where \(g_t(w_t) := \left(r(s_t) + \gamma w_t^\top \phi(s_t^\prime) - w_t^\top \phi(s_t)\right) \phi(s_t)\) is the TD(0) direction at iteration $t$ and $\eta_t$ is the corresponding step-size. 
It is convenient to define the expected TD(0) update direction $g(w)$ where the expectation is over the stationary distribution $\mu_\pi$. In particular, \(g(w) := \mathbb{E}_{s\sim \mu_\pi, s^\prime\sim P(\cdot\mid s)}\left[ \phi(s) \left( r(s) + (\gamma \phi(s^\prime) - \phi(s))^\top w \right) \right]\), referred to as the \textit{mean-path update}.
To analyze convergence, we adopt the following standard MDP assumptions:
\begin{restatable}{assumption}{StationaryDisAssump} \label{ass:stationary_dis}
The Markov chain induced by policy $\pi$ is irreducible and aperiodic, and there exists a unique stationary distribution $\mu_\pi$.
\end{restatable}

The next assumption concerns the feature vectors used in the linear approximation. Let $n$ denote the number of states. Define $\Phi \in \mathbb{R}^{n \times d}$ as the feature matrix whose $i$-th row is $\phi(s_i)^\top$, and $D := \mathrm{diag}(\mu_\pi(s_1), \ldots, \mu_\pi(s_n))$ as the diagonal matrix of stationary state probabilities. Finally, let $\Sigma := \Phi^\top D \Phi$, and let $\omega$ denote its smallest eigenvalue.

\begin{restatable}{assumption}{PhiAssump}  \label{ass:phi}
The feature matrix \(\Phi = [\phi(s_1)^\top, \dots, \phi(s_n)^\top] \in \mathbb{R}^{n \times d}\) has full column rank, which ensures a unique solution \(w^*\). In addition, \(\|\phi(s)\|^2 \leq 1\) for all \(s\).
\end{restatable}
Under \cref{ass:stationary_dis} and \cref{ass:phi}, TD(0) with suitable step-size \(\eta_t\) converges to the unique fixed point $w^*$ with \(g(w^*) = 0\) \citep{Bhandari2018AFT}. We next study how to choose \(\eta_t\) for the TD(0) update in~\cref{eq:td_update}. 

\paragraph{Exponential step-size schedule.}
We adopt an exponential schedule \citep{li2021second}. For a fixed number of iterations $T$, set \(\eta_t = \eta_0 \, \alpha^t\) with \(\alpha = (\nicefrac{1}{T})^{\nicefrac{1}{T}}\). This schedule is effective for smooth, strongly convex problems and adapts to noise without prior knowledge of its level.

\section{Exponential step-size with i.i.d. sampling} \label{sec:exp_iid}
Prior analyses under i.i.d.\ sampling either set step sizes based on problem-dependent constants \citep{Bhandari2018AFT,mustafin2024closing} or provide guarantees only for an averaged iterate \citep{Bhandari2018AFT,samsonov2024improved}, limiting practical utility. We adopt an exponential step-size schedule and develop a variant of TD(0) for i.i.d.\ sampling that does not require problem-dependent constants and establishes a last-iterate guarantee. We first present optimization-style lemmas that we will use, and then show how the exponential schedule delivers our objective.



We first provide a one-step expansion as follows.
\begin{align*}
\mathbb{E} \left[ \|w_{t+1} - w^*\|^2 \right] & = \|w_t - w^*\|^2 \\
& + \red{2\eta_t \E{s_t\sim\mu_\pi}{ g_t(w_t)^\top (w_t - w^*) } } \\ & + \blue{\eta_t^2 \E{s_t\sim\mu_\pi}{ \|g_t(w_t)\|^2 } }.
\end{align*}
The expectation is taken over i.i.d.\ sampling from the stationary distribution \(\mu_\pi\). We omit the subscript below for brevity.
We next provide lemmas that give upper bounds on the \red{red} and \blue{blue} terms. The \red{red} term can be analyzed using the following lemmas.

\begin{restatable}{lemma}{BoundProdWithV} \label{lemma:bound_prod_with_V}
    [Lemma 3 from \citet{Bhandari2018AFT}] Under the i.i.d.\ sampling, $\forall w, w^*\in \mathbb{R}^d$, 
    \begin{align*}
    \E{}{g_t(w_t)^\top (w_t - w^*)} \ge (1-\gamma) \norm{V_{w} - V_{w^*}}_D^2,         
    \end{align*}
\end{restatable}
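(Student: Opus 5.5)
The plan is to follow the proof of Lemma 3 in \citet{Bhandari2018AFT}. First reduce the stochastic inner product to a mean-path quantity. Then write that quantity as a quadratic form in the value-error vector $V_w - V_{w^*} = \Phi(w-w^*)$. Finally, use non-expansiveness of $P_\pi$ in $\norm{\cdot}_D$ to extract the factor $(1-\gamma)$. Throughout, $w_t = w$ is the fixed (conditioned-on) iterate and the expectation is over $s_t\sim\mu_\pi$, $s_t'\sim P_\pi(\cdot\mid s_t)$.

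\textbf{Step 1: reduce to the mean-path direction.} Under i.i.d.\ sampling, and conditioning on $w_t=w$, we have $\E{}{g_t(w)} = g(w)$ by the definition of the mean-path update. Since $w^*$ is the fixed point, $g(w^*)=0$, so
\begin{align*}
g(w) = g(w)-g(w^*) = \Phi^\top D\,(\gamma P_\pi \Phi - \Phi)(w-w^*).
\end{align*}
The reward terms cancel because $g$ is affine in $w$.

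\textbf{Step 2: express as a quadratic form in the value error.} Set $v := \Phi(w-w^*) = V_w - V_{w^*}$. Then
\begin{align*}
(w-w^*)^\top g(w) = \gamma\, v^\top D P_\pi v - v^\top D v = \gamma \langle v, P_\pi v\rangle_D - \norm{v}_D^2 .
\end{align*}
By Cauchy--Schwarz in the $D$-inner product, $|\langle v, P_\pi v\rangle_D| \le \norm{v}_D \norm{P_\pi v}_D$.

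\textbf{Step 3: non-expansiveness, the key ingredient.} Show $\norm{P_\pi v}_D \le \norm{v}_D$. Apply Jensen's inequality statewise, $(P_\pi v)(s)^2 \le \E{s'\sim P_\pi(\cdot\mid s)}{v(s')^2}$, and then use stationarity $\mu_\pi^\top P_\pi = \mu_\pi^\top$ (available from \cref{ass:stationary_dis}) to get $\sum_s \mu_\pi(s)(P_\pi v)(s)^2 \le \sum_{s'} \mu_\pi(s') v(s')^2$. Consequently $|\gamma\langle v,P_\pi v\rangle_D| \le \gamma\norm{v}_D^2$, which gives
\begin{align*}
\bigl|(w-w^*)^\top g(w) + \norm{v}_D^2\bigr| \le \gamma \norm{v}_D^2 .
\end{align*}
In particular, $(w-w^*)^\top g(w)$ lies at distance at least $(1-\gamma)\norm{V_w-V_{w^*}}_D^2$ from zero, on the side fixed by the sign of $-\norm{v}_D^2$.

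\textbf{Step 4: the sign convention.} This is the main obstacle, and it is about conventions rather than analysis. With $g_t$ exactly as in \cref{eq:td_update}, Steps 1--3 yield $\E{}{g_t(w_t)^\top(w^* - w_t)} \ge (1-\gamma)\norm{V_w-V_{w^*}}_D^2$. This is the inequality of \citet{Bhandari2018AFT}, and it is what makes the red term in the one-step expansion contribute $-2\eta_t(1-\gamma)\norm{V_{w_t}-V_{w^*}}_D^2$. The lemma as worded, with $(w_t-w^*)$, is the same inequality read with the error direction oriented as $w^*-w_t$, which is the orientation in which TD(0) is a descent direction. I would state this identification explicitly when invoking the lemma in the expansion, so that the sign used downstream is consistent with Steps 1--3. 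No further estimates are needed beyond Step 3.
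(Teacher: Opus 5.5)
Your proof is correct and is essentially the argument the paper gives for the $\lambda=0$ case of \cref{lemma:reg_convex_mean} (following \citet{Bhandari2018AFT}). There, the cross term $\mathbb{E}[\xi\xi'] = \langle v, P_\pi v\rangle_D$ is bounded by Cauchy--Schwarz over the joint law of $(s,s')$ plus the fact that $s$ and $s'$ share the stationary marginal, which is your Cauchy--Schwarz and non-expansiveness step with the Jensen inequality folded in.

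You are right that the printed orientation is a sign typo, but say so plainly rather than calling it a re-reading. With $(w_t-w^*)$ the left side is at most $-(1-\gamma)\norm{V_w-V_{w^*}}_D^2$, so for $\gamma<1$ the literal statement fails whenever $V_w\neq V_{w^*}$. The version you proved, with $(w^*-w_t)$, is the one the paper actually uses (e.g.\ in \cref{lemma:reg_opt_distance} and when bounding the red term).
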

This is analogous to a one-point strong monotonicity (or strong convexity) condition in optimization analysis. It lower bounds the alignment between the direction from the optimum to the current iterate \(w^* - w_t\), and the  update direction \(g_t(w)\) by the value-space error \(\norm{V_{w} - V_{w^*}}_D^2\).

\begin{restatable}{lemma}{VwLemma} \label{lemma:V_and_w}
    [Extended Lemma 1 from \citet{Bhandari2018AFT}] Under the i.i.d. sampling, \(\forall w_1, w_2 \in \mathbb{R}^d\), 
    \begin{align*}
    \omega \norm{w_1 - w_2}^2 \le \norm{V_{w_1} - V_{w_2}}_D^2 = \norm{w_1 - w_2}^2_\Sigma 
    \end{align*}     
\end{restatable}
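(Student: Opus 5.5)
The plan is to unfold the definitions and reduce both claims to facts about the matrix \(\Sigma = \Phi^\top D \Phi\); the argument is short and purely linear-algebraic.

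\textbf{Step 1 (the equality).} By definition \(V_w = \Phi w\), since the \(i\)-th entry of \(\Phi w\) is \(\phi(s_i)^\top w\). Hence \(V_{w_1} - V_{w_2} = \Phi(w_1 - w_2)\). Using the weighted norm \(\norm{x}_D^2 = x^\top D x\), we get
\begin{align*}
\norm{V_{w_1} - V_{w_2}}_D^2 = (w_1 - w_2)^\top \Phi^\top D \Phi (w_1 - w_2) = \norm{w_1 - w_2}_\Sigma^2 .
\end{align*}
Equivalently, this is \(\sum_{s} \mu_\pi(s)\,\big(\phi(s)^\top (w_1 - w_2)\big)^2\), which matches the expectation over \(s \sim \mu_\pi\). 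That is the sense in which the i.i.d.\ stationary sampling enters: the weights in \(D\) are exactly the stationary probabilities.

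\textbf{Step 2 (the inequality).} Here \(\Sigma\) is symmetric positive semidefinite, and \(\omega\) is its smallest eigenvalue. The Rayleigh quotient, via the spectral decomposition \(\Sigma = \sum_i \lambda_i u_i u_i^\top\), then gives \(x^\top \Sigma x \ge \omega \norm{x}^2\) for every \(x\). Applying this with \(x = w_1 - w_2\) yields the left inequality.

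\textbf{Step 3 (meaningfulness: \(\omega > 0\)).} For the bound to be nontrivial we need \(\omega > 0\), and this is the only step requiring any thought. By \cref{ass:stationary_dis}, the chain is irreducible on a finite state space, so every \(\mu_\pi(s_i) > 0\) and \(D\) is positive definite. By \cref{ass:phi}, \(\Phi\) has full column rank. Therefore \(x^\top \Sigma x = \norm{D^{1/2}\Phi x}^2 > 0\) for every \(x \neq 0\), so \(\Sigma \succ 0\) and \(\omega > 0\).
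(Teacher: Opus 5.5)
Your proof is correct and follows the standard argument behind the paper's citation of Lemma 1 in Bhandari et al. The paper gives no separate proof of this lemma, but it uses the same reasoning inline in the proof of \cref{lemma:reg_opt_distance}: it expands $\norm{V_w - V_{w^*}}_D^2$ as the quadratic form of $\Sigma$ in $w - w^*$ and applies the smallest-eigenvalue bound. Your Step 3 ($\omega>0$ via full support of $\mu_\pi$ and full column rank of $\Phi$) is not needed for the inequality itself, but it is correct and worth keeping.
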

This lemma lower bounds \(\norm{V_{w_1} - V_{w_2}}_D^2\) with \(\omega\norm{w_1 - w_2}^2\), allowing us to use strong-convexity-style arguments. Using these lemmas, we can bound the \red{red} term as follows:
\begin{align*}
2\eta_t \E{}{ g_t(w_t)^\top (w_t - w^*) }  
\le - 2 \eta_t \, (1-\gamma) \, \omega \, \normsq{w_t - w^*}.
\end{align*}

In order to bound the \blue{blue} term, we use the following lemma. 
\begin{restatable}{lemma}{BoundGWithV} \label{lemma:bound_g_with_V}
    [Lemma 5 from \citet{Bhandari2018AFT}] Under the i.i.d. sampling, \(\E{}{\norm{g_t(w)}^2} \le 2\sigma^2 + 8\norm{V_{w} - V_{w^*}}_D^2\), where \(\sigma^2 = \E{}{\norm{g_t(w^*)}^2}\).
\end{restatable}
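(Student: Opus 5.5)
The plan is to split the stochastic direction at $w$ into its value at the fixed point plus a term linear in $w - w^*$, and then apply $\|a+b\|^2 \le 2\|a\|^2 + 2\|b\|^2$. The TD direction $g_t(w)$ is affine in $w$:
\[
g_t(w) - g_t(w^*) = \phi(s_t)\br{\gamma\phi(s_t') - \phi(s_t)}^\top (w - w^*).
\]
This gives
\[
\E{}{\norm{g_t(w)}^2} \le 2\,\E{}{\norm{g_t(w^*)}^2} + 2\,\E{}{\norm{\phi(s_t)\br{\gamma\phi(s_t')-\phi(s_t)}^\top (w-w^*)}^2}.
\]
The first term is $2\sigma^2$ by definition of $\sigma^2$. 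It remains to show that the second expectation is at most $4\norm{V_w - V_{w^*}}_D^2$.

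For the second term, use $\|\phi(s_t)\|^2 \le 1$ from \cref{ass:phi}. The vector is $\phi(s_t)$ times a scalar, so its squared norm is at most $\br{\gamma V_{w-w^*}(s_t') - V_{w-w^*}(s_t)}^2$, where $V_{u}(s) = u^\top\phi(s)$. Write $u = w - w^*$, so that $V_u = V_w - V_{w^*}$. Applying the same inequality again, together with $\gamma \le 1$, gives
\[
\br{\gamma V_u(s_t') - V_u(s_t)}^2 \le 2\,V_u(s_t')^2 + 2\,V_u(s_t)^2 .
\]

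Now take expectations. Under i.i.d.\ sampling, $s_t \sim \mu_\pi$, so $\E{}{V_u(s_t)^2} = \norm{V_u}_D^2$. The key point, and the only nontrivial step, is that $s_t' \sim P_\pi(\cdot\mid s_t)$ with $s_t \sim \mu_\pi$ is again distributed as $\mu_\pi$, by stationarity of $\mu_\pi$ under \cref{ass:stationary_dis}. Hence $\E{}{V_u(s_t')^2} = \norm{V_u}_D^2$ as well. The second expectation is therefore bounded by $4\norm{V_w - V_{w^*}}_D^2$, and multiplying by $2$ yields the claimed $8\norm{V_w - V_{w^*}}_D^2$.

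I do not expect a real obstacle; the only points needing care are the following.
\begin{itemize}
\item The stationarity argument for the marginal of $s_t'$ is the step that uses the sampling model. It is exactly where Markovian sampling would break the argument.
\item Keep the cross term out of the expansion by using the crude factor $2$ rather than an exact expansion. This is what produces the constants $2$ and $8$.
\end{itemize}
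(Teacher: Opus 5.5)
Your proof is correct and takes essentially the same route as the proof of Lemma 5 in \citet{Bhandari2018AFT}, which the paper cites rather than reproves. The same steps appear in the paper's own proof of \cref{lemma:reg_var_mean}: peel off $g_t(w^*)$ via $\|a+b\|^2 \le 2\|a\|^2 + 2\|b\|^2$, use $\|\phi(s_t)\|^2 \le 1$, split $(\gamma V_u(s_t') - V_u(s_t))^2$ again, and use that $s_t'$ has the stationary marginal $\mu_\pi$, which gives $2\sigma^2 + 8\|V_w - V_{w^*}\|_D^2$.
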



\(\sigma^2\) is the variance of the TD update at the optimum. This lemma is analogous to the variance control in optimization. 

Combining these bounds on the \red{red} and \blue{blue} terms and setting \(\eta_0 \le \frac{1-\gamma}{8}\), we get the following bound:
\begin{align*}
&\mathbb{E} \left[ \|w_{t+1} - w^*\|^2 \right] \\
\le& \red{\|w_t - w^*\|^2} \left( 1 \red{- 2\eta_0\alpha^t (1-\gamma)\omega} \right) + \blue{2\eta_0^2 \alpha^{2t} \sigma^2}.
\end{align*}

Taking expectation over \(t\in [T]\) and using the fact \((1 - x) \le e^{-x}\), we have 
\begin{align*}
    &\mathbb{E} \left[ \| w_{T} - w^*\|^2 \right] \\
    \le& \|w_0 - w^*\|^2 \exp{\left(-\eta_0\omega(1-\gamma) \sum_{t=1}^T \alpha^t\right)} \\
    &+ 2\sigma^2\eta_0^2 \sum_{t=1}^T\alpha^{2t}\exp{\left( -\eta_0\omega(1-\gamma)\sum_{i=t+1}^T \alpha^i \right)}.
\end{align*}

As shown in \cref{app:helper}, the exponential step-size yields \(\sum_{t=1}^T \alpha^t \ge \frac{\alpha T}{\ln T} - \frac{1}{\ln T}\) and \(\sum_{t=1}^T \alpha^{2t}
\exp\left(-\sum_{i=t+1}^T \alpha^i\right) \le O\br{\frac{\left(\ln(T)\right)^2}{\alpha^2T}}\). The exponential step-size achieves bias-variance trade-off without iterate averaging~\citep{patil2023finite,samsonov2024improved}.
Combining these bounds gives the final rate:
\begin{restatable}{theorem}{ExpIidThm}\label{thm:exp_iid}
    Under Assumption \ref{ass:stationary_dis} and \ref{ass:phi}, TD(0) under i.i.d.\ sampling from the stationary distribution with \(\eta_t = \eta_0 \alpha_t\), where \(\eta_0 = \frac{1-\gamma}{8}\), \(\alpha_t = \alpha^t = \frac{1}{T}^{\nicefrac{t}{T}}\), \(\alpha = \frac{1}{T}^{\nicefrac{1}{T}}\), has the following convergence: 
    \begin{align*}
    &\mathbb{E} \left[ \| w_{T+1} - w^*\|^2 \right] \\
    \le&
    \|w_1 - w^*\|^2 e \exp\left( -\eta_0\omega(1-\gamma)\frac{\alpha T}{\ln T} \right) \\
    &\quad+ \frac{8\sigma^2}{e\left(\omega (1-\gamma)\right)^2}
    \frac{\ln^2 T}{\alpha^2T},
    \end{align*}
    where \(\sigma^2 = \E{}{\norm{g_t(w^*)}^2}\). 
\end{restatable}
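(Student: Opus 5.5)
The argument has four steps: a one-step contraction, unrolling it, and then two estimates on sums of the exponential step-sizes.

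\textbf{Step 1: one-step contraction.} Expand $\|w_{t+1}-w^*\|^2$ using the update \cref{eq:td_update} and take conditional expectation given $w_t$. For the cross term, apply \cref{lemma:bound_prod_with_V}; note that the sign must be read as $\mathbb{E}[g_t(w)^\top(w^*-w)] \ge (1-\gamma)\|V_w-V_{w^*}\|_D^2$. This bounds the cross term by $-2\eta_t(1-\gamma)\|V_{w_t}-V_{w^*}\|_D^2$. For the quadratic term, \cref{lemma:bound_g_with_V} gives
\begin{align*}
\eta_t^2\bigl(2\sigma^2+8\|V_{w_t}-V_{w^*}\|_D^2\bigr).
\end{align*}
Since $\eta_t\le\eta_0=\frac{1-\gamma}{8}$, we have $8\eta_t^2\le\eta_t(1-\gamma)$. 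So the two $\|V\|_D^2$ terms combine to at most $-\eta_t(1-\gamma)\|V_{w_t}-V_{w^*}\|_D^2$. Applying \cref{lemma:V_and_w}, this is at most $-\eta_t(1-\gamma)\omega\|w_t-w^*\|^2$. Taking total expectations, with $c:=\eta_0\omega(1-\gamma)\le 1$, we get
\begin{align*}
\mathbb{E}\|w_{t+1}-w^*\|^2\le(1-c\alpha^t)\,\mathbb{E}\|w_t-w^*\|^2+2\eta_0^2\alpha^{2t}\sigma^2.
\end{align*}

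\textbf{Step 2: unroll.} Unroll the recursion from $t=1$ to $T$ and use $1-x\le e^{-x}$. This gives
\begin{align*}
\mathbb{E}\|w_{T+1}-w^*\|^2\le\|w_1-w^*\|^2 e^{-c\sum_{t=1}^T\alpha^t}+2\eta_0^2\sigma^2\sum_{t=1}^T\alpha^{2t}e^{-c\sum_{i=t+1}^T\alpha^i}.
\end{align*}

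\textbf{Step 3: bias term.} Compute the geometric sum directly:
\begin{align*}
\sum_{t=1}^T\alpha^t=\frac{\alpha-\alpha^{T+1}}{1-\alpha}.
\end{align*}
Using $1-\alpha\le\ln(1/\alpha)=\frac{\ln T}{T}$ and $\alpha^{T+1}=\alpha/T$, this is at least
\begin{align*}
\frac{T}{\ln T}\Bigl(\alpha-\frac{\alpha}{T}\Bigr)\ge\frac{\alpha T}{\ln T}-\frac{1}{\ln T}.
\end{align*}
The $-\frac{1}{\ln T}$ part contributes a factor $e^{c/\ln T}\le e$ for $T\ge 3$, since $c\le1$. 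This yields the stated factor $e\exp(-c\,\alpha T/\ln T)$.

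\textbf{Step 4: variance term (the main obstacle).} The same computation gives
\begin{align*}
\sum_{i=t+1}^T\alpha^i\ge\frac{\alpha^{t+1}-\alpha^{T+1}}{\ln(1/\alpha)}=\frac{T}{\ln T}\bigl(\alpha^{t+1}-\alpha^{T+1}\bigr).
\end{align*}
The factor $e^{c\frac{T}{\ln T}\alpha^{T+1}}=e^{c\alpha/\ln T}$ is again at most a constant. For the remaining part, I would use the elementary bound $x^2e^{-\beta x}\le \frac{4}{e^2\beta^2}$, maximized at $x=2/\beta$, with $x=\alpha^{t+1}$ and $\beta=cT/\ln T$. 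Writing $\alpha^{2t}=\alpha^{-2}(\alpha^{t+1})^2$, each summand is $O\bigl(\frac{\ln^2T}{\alpha^2c^2T^2}\bigr)$, and summing $T$ terms gives $O\bigl(\frac{\ln^2 T}{\alpha^2c^2T}\bigr)$.

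Multiplying by $2\eta_0^2\sigma^2$ cancels the $\eta_0^2$ inside $c^2=\eta_0^2\omega^2(1-\gamma)^2$. This produces $\frac{\sigma^2}{(\omega(1-\gamma))^2}\frac{\ln^2T}{\alpha^2T}$ times an explicit constant. Tracking constants, $2\cdot 4/e^2$ times the leftover factor $e$ from the $\alpha^{T+1}$ correction gives $8/e$, matching the statement.

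The care needed here is the careful per-term maximization: the sum must be handled without losing extra $\ln T$ factors, and the constants must be tracked. If the pointwise bound proves too lossy, the fallback is to compare the sum with the integral $\int x^2e^{-\beta x}\,dx/x$ over the geometric grid.
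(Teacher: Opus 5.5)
Your proposal is correct and follows the paper's proof essentially step for step. It uses the same one-step contraction from \cref{lemma:bound_prod_with_V,lemma:bound_g_with_V,lemma:V_and_w} (reading the sign of the first lemma correctly, as you note), the same unrolling via $1-x\le e^{-x}$, the same geometric-sum estimates as \cref{lemma:X-bound,lemma:Y-bound}, and arrives at the same constant $8/e$. The only cosmetic difference is ordering: you apply $1-\alpha\le\ln(1/\alpha)$ inside the exponent before maximizing $x^2e^{-\beta x}$, whereas the paper applies \cref{lem:ineq2} with $\nu=2$ first and bounds $(1-\alpha)^2\le\ln^2(1/\alpha)$ afterward.
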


The proof of this theorem is in \cref{app:exp_iid}. Our analysis adopts the optimization perspective in \citet{Bhandari2018AFT}. Compared with other methods in~\cref{tab:comparison}, our main contribution here is showing how exponential step-sizes, combined with optimization-based arguments, yield a cleaner proof and improved convergence guarantees for the last iterate. \textit{We emphasize that for TD(0) in the i.i.d. setting, ours is the first work that attains the optimal bias-variance trade-off for the last iterate.}

Compared to the most relevant prior works, we note that ~\citet{Bhandari2018AFT} either obtain a slower \(O(1/\sqrt{T})\) rate or require knowledge of \(\omega\) to obtain faster rates without optimal trade-off between the bias and variance. In contrast, the exponential step-size attains the optimal bias-variance trade-off without requiring difficult-to-estimate problem-dependent quantities. 
\citet{samsonov2024improved,patil2023finite} establish a similar convergence bound attaining the optimal bias-variance trade-off with a universal step-size that does not rely on unknown problem-dependent constants. Compared to~\cref{thm:exp_iid}, both these works rely on the stochastic approximation perspective, and derive the convergence rate only for the averaged iterate. Compared to~\citet{samsonov2024improved,patil2023finite},~\cref{thm:exp_iid} has an additional $O(\ln(T))$ dependence in the variance term. This additional \(\log\) factor comes from \cref{lemma:X-bound,lemma:Y-bound} and also appears in stochastic smooth, strongly convex minimization (see \citet{vaswani2022towards}). Hence, our results provide a complementary bias–variance trade-off: we accept a mild \(\log\) factor in exchange for a meaningful last-iterate guarantee without averaging.  In the future, we will investigate whether these bounds can be sharpened to reduce or remove the extra $\log$ factors.

One limitation of our result is that the dependence on \(\omega\) is quadratic, which matches the best-known results without prior knowledge of \(\omega\). Step-size rules that do not rely on knowledge of \(\omega\) (or a related quantity) to set the step-size incur a quadratic dependence on \(1/\omega\) \citep{patil2023finite,samsonov2024improved}. In contrast, step-sizes with knowledge of \(\omega\) yield linear dependence on \(1/\omega\) \citep{Bhandari2018AFT,samsonov2024improved}. This gap reflects the price of adaptivity to an unknown \(\omega\), rather than a weakness specific to our approach.




\section{Handling Markovian sampling} \label{sec:exp_markov}
We now relax the i.i.d. assumption and consider Markovian sampling, where the TD update uses samples drawn sequentially from a single trajectory of the Markov chain. This setting is more realistic because it does not assume that samples are drawn from the hard-to-estimate stationary distribution, but are instead collected by interacting with the environment. However, since the samples are temporally correlated, the update direction is biased relative to the mean-path update. In particular, when the chain is not at stationarity, in general \(\E{}{g_t(w_t)} \neq g(w_t)\). This requires controlling an additional error term. 

In order to do so, we will use the property that the Markov chain is fast-mixing. This is a standard assumption in the analysis of the TD algorithm~\citep{Bhandari2018AFT,mitra2025a}. In particular, under \cref{ass:stationary_dis}, the \(t\)-step state distribution $\mu_0 P_\pi^t$ started from any \(\mu_0\) converges to the stationary distribution \(\mu_\pi\) geometrically fast, \ie, 
\begin{equation}\label{eq:geo_decay}
    \sup_{\mu_0} d_{\text{TV}}\br{P_\pi^t \mu_0, \mu_\pi} \le m \rho^t, \forall t\in \mathbb{N}_0,
\end{equation}
where \(d_{\text{TV}}\) is the total variation distance, and initial distance \(m\) and mixing speed \(\rho\in(0, 1)\) are positive constants that depend on the underlying Markov chain. This deviation from stationarity is quantified via the \textit{mixing time}.
\begin{restatable}{definition}{RegMixingTimeDef} \label{def:reg_mixing_time}
    Define the mixing time as \(\tau_{\delta}  = \min\{ t\in \mathbb{N}_0 \mid m\rho^t \le \delta \}\), where \(\delta \in (0,1)\).
\end{restatable}
We define $\taum$ as \(\tau_\delta\) for an appropriate $\delta$ to be determined later. 

Using this property of fast-mixing,~\citet{Bhandari2018AFT,mitra2025a} controlled the error term from Markovian sampling. In particular, the analysis in~\citet{Bhandari2018AFT} requires projecting the iterates onto a bounded set containing $w^*$. This projection step is nonstandard in practice, and requires knowledge of $\omega$.~\citet{mitra2025a} avoids this projection step by using an induction argument to show the iterates remain bounded. However, they can only prove convergence for the average iterate (obtained by Polyak--Ruppert averaging) and require knowledge of both $\taum$ and $\omega$. Unlike the analysis in~\citet{mitra2025a}, we show that using exponential step-sizes allows us to prove convergence for the \textit{last iterate} without knowledge of $\taum$. Moreover, to remove dependence on \(\omega\), we use the regularized TD(0) update in~\citet{patil2023finite}. The regularized TD(0) update at iteration $t$ is given by
\begin{align*}
w_{t+1} & = w_t + \eta_t g_t^r(w), \quad \text{where} \\
g_t^r(w) &:= \phi(s_t) \left( r(s_t) + (\gamma \phi(s_{t+1}) - \phi(s_t))^\top w \right) - \lambda w \\
&= g_t(w) - \lambda w,
\end{align*}
and \(\lambda>0\) is the strength of regularization.
The corresponding mean-path regularized direction is defined as
\begin{align*}
g^r(w) & := \E{}{\phi(s_t) \left( r(s_t) + (\gamma \phi(s_{t+1}) - \phi(s_t))^\top w \right)} - \lambda w \\
& = g(w) - \lambda w.    
\end{align*}
We define \(w_r^*\) as the fixed point of the regularized TD(0) update satisfying \(g^r(w_r^*) = 0\). Note that the standard TD(0) update involving \(g_t(w)\) is a special case of \(g_t^r(w)\) with \(\lambda = 0\). We now establish the properties shared by both the regularized and standard TD(0) variants. 

In particular, we first show that the following two lemmas provide upper bounds on \(\norm{g^r_t(w)}\) and \(\norm{g^r(w)}\) for the regularized and standard TD(0) variants.
\begin{restatable}{lemma}{RegMitraSev} \label{lemma:reg_mitra7}
    For stochastic update \(g^r_t\), we have
    \begin{equation*}
        \norm{g^r_t(w)} \le (2+\lambda)\norm{w - w_r^*} + (3+\lambda) \zeta,
    \end{equation*}
    where \(\zeta = \max\{1, \norm{w_r^*}\}\).
    
    For standard TD(0) corresponding to \(\lambda = 0\), \( \norm{g^r_t(w)} \le 2\norm{w - w_r^*} + 3 \zeta \), \(\zeta = \max\{1, \norm{w^*}\}\).
\end{restatable}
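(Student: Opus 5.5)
The bound follows from the triangle inequality once we expand $g^r_t(w)$ around the regularized fixed point $w_r^*$. Define the random matrix $A_t := \phi(s_t)\br{\gamma\phi(s_{t+1}) - \phi(s_t)}^\top$ and the random vector $b_t := r(s_t)\phi(s_t)$. Then $g_t(w) = b_t + A_t w$, and
\begin{align*}
g^r_t(w) = A_t(w - w_r^*) - \lambda(w - w_r^*) + \br{b_t + A_t w_r^* - \lambda w_r^*}.
\end{align*}
Applying the triangle inequality gives
\begin{align*}
\norm{g^r_t(w)} \le \br{\norm{A_t} + \lambda}\norm{w - w_r^*} + \norm{b_t} + \norm{A_t}\norm{w_r^*} + \lambda\norm{w_r^*}.
\end{align*}
This is a pathwise (almost sure) bound. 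It uses only the sampled transition, so no stationarity or mixing argument is needed, and it holds under both i.i.d.\ and Markovian sampling.

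The next step bounds the deterministic pieces using \cref{ass:phi} and $r(s)\in[0,1]$. Since $A_t$ is rank one, $\norm{A_t} \le \norm{\phi(s_t)}\,\norm{\gamma\phi(s_{t+1}) - \phi(s_t)}$. The first factor is at most $1$, and the second is at most $\gamma + 1 \le 2$. Hence $\norm{A_t}\le 2$ and $\norm{b_t} \le 1$. Substituting these estimates gives
\begin{align*}
\norm{g^r_t(w)} \le (2+\lambda)\norm{w - w_r^*} + 1 + (2+\lambda)\norm{w_r^*}.
\end{align*}
By the definition $\zeta = \max\{1,\norm{w_r^*}\}$, we have $1 \le \zeta$ and $\norm{w_r^*}\le\zeta$. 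The constant term is therefore at most $(3+\lambda)\zeta$, which is the claimed inequality.

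The standard TD(0) case is the specialization $\lambda = 0$. In that case $g^0 = g$, so the fixed point $w_r^*$ coincides with the TD fixed point $w^*$, which is unique by \cref{ass:phi}. The same computation then yields $\norm{g_t(w)} \le 2\norm{w-w^*} + 3\zeta$ with $\zeta=\max\{1,\norm{w^*}\}$.

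No step is a real obstacle. The only care needed is to expand around $w_r^*$ rather than $w^*$, so that the $\lambda w$ term is absorbed into a $\lambda\norm{w-w_r^*}$ term plus a constant, and to use $\gamma\le 1$ when bounding $\norm{\gamma\phi(s_{t+1})-\phi(s_t)}$. Existence of $w_r^*$ is taken as given in the statement. If needed, it follows from the fact that $g^r(w) = (A-\lambda I)w + b$, where $A$ is the mean-path matrix, which is negative definite by \cref{lemma:bound_prod_with_V} and \cref{lemma:V_and_w}. Consequently $A-\lambda I$ is invertible.
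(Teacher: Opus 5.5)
Your proof is correct and takes essentially the same route as the paper. The paper adds and subtracts $g^r_t(w_r^*)$ (equivalently, your decomposition $g^r_t(w) = (A_t - \lambda I)(w - w_r^*) + g^r_t(w_r^*)$), applies the triangle inequality, and then uses $\norm{A_t} \le 2$, $\norm{r(s_t)\phi(s_t)} \le 1$, and $1, \norm{w_r^*} \le \zeta$ to absorb the constant term into $(3+\lambda)\zeta$.
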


\begin{restatable}{lemma}{RegMitraEig} \label{lemma:reg_mitra8}
    For mean-path update \(g^r\), we have
    \begin{equation*}
        \norm{g^r(w)} \le (2+\lambda)\norm{w - w_r^*},
    \end{equation*}
    where \(\zeta = \max\{1, \norm{w_r^*}\}\).
    For standard TD(0) corresponding to \(\lambda = 0\), \( \norm{g^r(w)} \le 2\norm{w - w_r^*} \).
\end{restatable}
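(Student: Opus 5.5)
The plan is to use the fixed-point property $g^r(w_r^*) = 0$ and write $g^r(w) = g^r(w) - g^r(w_r^*)$. Since $g^r$ is affine in $w$, this difference is linear in $w - w_r^*$:
\begin{align*}
g^r(w) - g^r(w_r^*) = \E{}{\phi(s)\,(\gamma\phi(s') - \phi(s))^\top (w - w_r^*)} - \lambda (w - w_r^*),
\end{align*}
where $s\sim\mu_\pi$ and $s'\sim P(\cdot\mid s)$. The reward term $\phi(s) r(s)$ cancels exactly because it does not depend on $w$. This cancellation is why the mean-path bound, unlike \cref{lemma:reg_mitra7}, has no additive $\zeta$ term.

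The remaining work is to bound the norm of this expression. Write $\Delta := w - w_r^*$ and apply Jensen's inequality (convexity of the norm) to move the norm inside the expectation. For every pair $(s,s')$, use \cref{ass:phi}, which gives $\norm{\phi(s)}\le 1$ and $\norm{\phi(s')}\le 1$. Together with Cauchy--Schwarz this yields
\begin{align*}
\norm{\phi(s)(\gamma\phi(s')-\phi(s))^\top \Delta} \le \norm{\phi(s)}\br{\gamma\norm{\phi(s')} + \norm{\phi(s)}}\norm{\Delta} \le (1+\gamma)\norm{\Delta} \le 2\norm{\Delta},
\end{align*}
using $\gamma\le 1$. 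The triangle inequality then handles the regularization term, contributing $\lambda\norm{\Delta}$. The total is $(2+\lambda)\norm{w-w_r^*}$.

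The standard TD(0) statement is the case $\lambda=0$. There $w_r^* = w^*$ and $g(w^*)=0$, so the same argument gives $\norm{g(w)}\le 2\norm{w-w^*}$.

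I expect no real obstacle in this argument. The only point needing care is the existence of $w_r^*$ with $g^r(w_r^*)=0$, and I will take it as given, as the paper does when defining it. If a justification is needed, one can note that the matrix $A=\Phi^\top D(\gamma P_\pi - I)\Phi - \lambda I$ is negative definite: \cref{lemma:bound_prod_with_V} gives $-x^\top \Phi^\top D(\gamma P_\pi - I)\Phi x \ge (1-\gamma)\norm{x}_\Sigma^2$, so $-x^\top A x \ge (1-\gamma)\norm{x}_\Sigma^2 + \lambda\norm{x}^2 > 0$ for $x \neq 0$. Hence $A$ is invertible and $w_r^*$ is unique. The mention of $\zeta$ in the statement is irrelevant to this bound.
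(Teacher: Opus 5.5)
Your proof is correct and follows essentially the same route as the paper. You subtract $g^r(w_r^*)=0$, split off the $-\lambda(w-w_r^*)$ term with the triangle inequality, and bound $\norm{g(w)-g(w_r^*)}\le 2\norm{w-w_r^*}$ via the $2$-Lipschitz property of $g$; the paper cites this property as \eqref{eq:L_lipschitz}, while you rederive it from $\norm{\phi(s)}\le 1$ and $\gamma\le 1$, and your side remark on the existence and uniqueness of $w_r^*$ is a harmless addition the paper takes for granted.
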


For Markovian sampling, we show that the fast-mixing property in~\cref{eq:geo_decay} implies the following result. 
\begin{restatable}{lemma}{RegMixingTimeBound} \label{lemma:reg_mixing_time_bound}
    For any initial state distribution \(\mu_0\), the state distribution at time \(t\) is \(P_\pi^t \mu_0\). For any \(w\), when \(t \geq \tau_{\delta}\),
    \begin{align*}
        \norm{\E{s_t\sim P_\pi^t \mu_0}{g^r_{t}(w)} - g^r(w)} \le 2(2+\lambda) \delta\left(\Vert w \Vert +1 \right).    
    \end{align*}
    For standard TD(0) corresponding to setting \(\lambda = 0\), 
    \begin{align*}
    \norm{\E{s_t\sim P_\pi^t \mu_0}{g_{t}(w)} - g(w)} \le 4 \delta \norm{w} + 1.
    \end{align*}
\end{restatable}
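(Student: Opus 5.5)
The plan is to write both expectations as sums over states and bound the difference by the total-variation distance. Note that $g^r_t(w)$ depends only on the pair $(s_t,s_{t+1})$. Given $s_t$, the next state $s_{t+1}\sim P_\pi(\cdot\mid s_t)$ in both the Markovian and the stationary case. Define the per-state mean direction
\begin{align*}
h(s,w) := \phi(s)\Big(r(s) + \big(\gamma\,\mathbb{E}_{s'\sim P(\cdot\mid s)}[\phi(s')] - \phi(s)\big)^\top w\Big) - \lambda w .
\end{align*}
Then $\E{s_t\sim P_\pi^t\mu_0}{g^r_t(w)} = \sum_s (P_\pi^t\mu_0)(s)\, h(s,w)$ and $g^r(w)=\sum_s \mu_\pi(s)\, h(s,w)$. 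The $-\lambda w$ term enters both sums with total weight one, so it cancels exactly. Hence the difference equals
\begin{align*}
\sum_s \big((P_\pi^t\mu_0)(s)-\mu_\pi(s)\big)\,\tilde h(s,w),
\end{align*}
where $\tilde h$ is $h$ without the regularizer.

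Next, I bound this difference in norm by $\sum_s |(P_\pi^t\mu_0)(s)-\mu_\pi(s)|\cdot\sup_s\|\tilde h(s,w)\| = 2\,d_{\text{TV}}\cdot \sup_s\|\tilde h(s,w)\|$. (If $d_{\text{TV}}$ is taken as the full $\ell_1$ distance, the factor $2$ disappears; I will track whichever convention is consistent with \cref{eq:geo_decay}.) The pointwise bound uses \cref{ass:phi}, i.e.\ $\|\phi(s)\|\le 1$, and $r(s)\in[0,1]$:
\begin{align*}
\|\tilde h(s,w)\| \le |r(s)| + (\gamma+1)\|w\| \le 1 + 2\|w\| \le 2(\|w\|+1).
\end{align*}
For $t\ge\tau_\delta$, \cref{def:reg_mixing_time} together with \cref{eq:geo_decay} gives $d_{\text{TV}}(P_\pi^t\mu_0,\mu_\pi)\le m\rho^t\le m\rho^{\tau_\delta}\le\delta$, using that $m\rho^t$ is decreasing in $t$. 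Combining these, the difference is at most $2\delta\cdot 2(\|w\|+1)=4\delta(\|w\|+1)$, which is bounded by $2(2+\lambda)\delta(\|w\|+1)$ because $\lambda\ge 0$.

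For the standard case $\lambda=0$, the same computation gives $4\delta\|w\| + 2\delta$ (or $2\delta\|w\|+\delta$ under the $\ell_1$ convention). Since $\delta<1$, this is at most $4\delta\|w\|+1$, matching the looser stated form.

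The only delicate point is the normalization of total variation, i.e.\ whether the factor $2$ between $\ell_1$ and $d_{\text{TV}}$ is absorbed. The stated constants $2(2+\lambda)$ and $4$ leave enough slack that either convention closes the argument. No step beyond this bookkeeping should be an obstacle. One should also note that the expectation over $s_{t+1}$ given $s_t$ is identical in both cases, so only the marginal of $s_t$ matters.
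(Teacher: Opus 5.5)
Your proof of the first inequality is correct and follows the paper's route. You expand both expectations as sums over $s_t$, pull out $\sup_s\|\cdot\|$, and use $\sum_s|(P_\pi^t\mu_0)(s)-\mu_\pi(s)| = 2\,d_{\text{TV}}\le 2m\rho^t\le 2\delta$. Two of your steps differ slightly from the paper, and both are valid. You condition on $s_t$ to handle the dependence of $g^r_t$ on $s_{t+1}$, which the paper glosses over, and you cancel $-\lambda w$ exactly. Together these give the sharper, $\lambda$-free bound $2\delta(2\norm{w}+1)$. The paper instead keeps $\lambda$ inside the sup and gets $2\delta((2+\lambda)\norm{w}+1)$.

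The $\lambda=0$ part, however, contains a false step. The paper uses the standard convention $\sum_s|p(s)-q(s)|=2\,d_{\text{TV}}(p,q)$; its proof uses exactly this factor $2$. Under that convention your computation yields $4\delta\norm{w}+2\delta$. Passing to $4\delta\norm{w}+1$ then requires $\delta\le\nicefrac{1}{2}$, not merely $\delta<1$, so your remark that the slack closes the argument under either convention is wrong.

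No better argument can repair this, because the stated bound fails for $\delta$ near $1$. Take two states with $\phi(s_1)=1$, $\phi(s_2)=-1$, $r\equiv 1$, and $P_\pi(s,\cdot)=\mu_\pi=(0.1,0.9)$ for every $s$. Then \cref{eq:geo_decay} holds with $m=0.9$ and any $\rho$. For $\delta=0.95$ we have $\tau_\delta=0$. At $t=0$, $w=0$, with $\mu_0$ the point mass at $s_1$, the left-hand side equals $|1-(-0.8)|=1.8>1$.

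What you actually prove for $\lambda=0$ is $4\delta\norm{w}+2\delta$, which is also what the paper's own chain yields at $\lambda=0$. This gives the stated form only under the extra hypothesis $\delta\le\nicefrac{1}{2}$. That hypothesis is harmless in the paper's application, where $\delta=\eta_0/(4T)$, but you should state it explicitly rather than derive the bound from $\delta<1$.
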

For the initial step-size $\eta_0$ in the TD(0) update, for a fixed $T$, we set $\delta$ and define $\taum$ as follows:
\begin{equation} \label{eq:reg_taum_def}
    \delta = \frac{\eta_0}{2(2+\lambda)T} \quad \text{;} \quad \taum = \tau_\delta. 
\end{equation}
By~\cref{eq:geo_decay}, when $T \ge \frac{\ln\br{2(2+\lambda)T m / \eta_0}}{\ln(1/\rho)}$, this implies that $T \ge \taum := \frac{\ln\br{2(2+\lambda)T m / \eta_0}}{\ln(1/\rho)}$. Furthermore, using~\cref{lemma:reg_mixing_time_bound} implies that the following property holds for all \(t\ge \taum\), 
\begin{equation}\label{eq:reg_taum}
    \begin{aligned}
        \norm{\E{s_t\sim P_\pi^t \mu_0}{g^r_t(w)} - g^r(w)} &\le \eta_0 \, \frac{1}{T} \, \br{\norm{w}+1} \\ 
        & = \eta_T \, \br{\norm{w}+1},
    \end{aligned}
\end{equation}
where \(\eta_0\) and \(\eta_T\) are the step-sizes at iterations \(t=0\) and \(t=T\) in the exponential step-size schedule \(\eta_t = \eta_0 \alpha^t\), respectively. The above equation shows that the deviation between the expected update direction at iteration $t$, \(g_t(w)\), and the corresponding mean-path update \(g(w)\) becomes small once \(t\ge\taum\). 

In the next step, we bound one-step progress similar to the i.i.d. case in~\cref{sec:exp_iid}. In particular, we separate the Markovian component \(g^r_t(w_t) - g^r(w_t)\) from the corresponding mean-path term, and use the following bound. 
\begin{align}
    &\norm{w_{t+1} - w_r^*}^2 \nonumber \\
    & \stackrel{\text{(i)}}{=} \norm{w_t - w_r^*}^2 + 2\eta_t \langle g^r_t(w_t), \,  w_t - w_r^* \rangle + \eta_t^2 \norm{g^r_t(w_t)}^2 \nonumber\\
    &= \norm{w_t - w_r^*}^2 + 2\eta_t \langle g^r_t(w_t) - g^r(w_t), \,  w_t - w_r^* \rangle \nonumber\\
    &\quad + \eta_t^2 \norm{g^r_t(w_t)}^2 + 2\eta_t \langle g^r(w_t), \,  w_t - w_r^* \rangle \nonumber\\
    & \implies \E{s_t\sim P_\pi^t \mu_0}{\norm{w_{t+1} - w_r^*}^2} \nonumber \\
    &\stackrel{\text{(ii)}}{\leq} \green{\norm{w_t - w_r^*}^2 + 2\eta_t \langle g^r(w_t), \,  w_t - w_r^* \rangle + 2\eta_t^2 \norm{g^r(w_t)}^2} \nonumber\\
    &\quad \red{+ 2\eta_t \E{s_t\sim P_\pi^t \mu_0}{\langle g^r_t(w_t) - g^r(w_t), \,  w_t - w_r^* \rangle}} \nonumber\\
    &\quad \blue{+ 2\eta_t^2 \E{s_t\sim P_\pi^t \mu_0}{\norm{g^r_t(w_t) - g^r(w_t)}^2}}\,, \label{eq:reg_separate_markovian_noise}
\end{align}
where (i) uses the TD(0) update and (ii) uses the fact that $\norm{a}^2 \leq 2\norm{a-b}^2 + 2\norm{b}^2$. Note that the two terms involving \(g^r_t(w_t) - g^r(w_t)\) in \red{red} and \blue{blue} capture the Markovian noise, while the remaining terms in \green{green} only depend on mean-path quantities. Unlike the i.i.d. setting, since $g_t$ and $w_t$ are correlated even after conditioning on the randomness at iteration $t$, $\E{s_t\sim P_\pi^t \mu_0}{\langle g^r_t(w_t) - g^r(w_t), \,  w_t - w_r^* \rangle \vert w_t} \neq 0$. Consequently, we follow the proof in~\citet{mitra2025a} and use a strong induction argument to simultaneously control the \red{red} and \blue{blue} terms and show that the iterates remain bounded, \ie, for a constant $B(\taum)$ that depends on the mixing time, $\norm{w_t - w_r^*}^2 \leq B(\taum)$. 

For this, we first note that for $t \geq \taum$,~\cref{eq:reg_taum} allows us to bound the average discrepancy between $g^r_t(w)$ and $g^r(w)$ in terms of $\norm{w}$. Hence, in order to set up the induction we use $t = \taum$ as the base case and first show that \(\norm{w_t - w_r^*}\) is bounded by \(B(\taum)\) for \textit{all} $t \leq \taum$. 
\begin{restatable}{lemma}{RegBBound} \label{lemma:reg_B_bound} 
    For the regularized TD(0) update with exponential step-sizes \(\eta_t = \eta_0\alpha_t\), where \(\eta_0 \leq \frac{1-\gamma}{16\ln(T)}\), \(\alpha_t = \alpha^t = \frac{1}{T}^{\nicefrac{t}{T}}\), \(\alpha = \frac{1}{T}^{\nicefrac{1}{T}}\), if \(T\ge \max\{3, 1/\eta_0\}\), 
    \[
    \forall t \le \taum, \quad \norm{w_t - w^*_r}^2  \leq B(\taum) \quad \textbf{(Base case)}  \,, 
    \]
    where \(B(\taum) := \exp\br{2\br{2 + \lambda}\max\{a,b\}} \,\cdot\, \norm{w_1 - w^*_r + \, \zeta}^2 \), where \(a = \frac{1}{\ln(1/\rho)}\), \(b = \frac{\ln\br{2(2+\lambda) m }}{\ln(1/\rho)}\), \(\zeta = \max\{1, \norm{w_r^*}\}\).
\end{restatable}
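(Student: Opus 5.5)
We argue deterministically, pathwise, using only the crude growth bound from \cref{lemma:reg_mitra7} and no mixing or contraction. The bound $B(\taum)$ has the form $\exp(\text{const}\cdot\taum\cdot\text{step-size})\cdot(\text{initial distance}+\zeta)^2$, which is the signature of a discrete Gr\"onwall argument over the first $\taum$ iterations.

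\textbf{Step 1 (one-step growth).} By the triangle inequality, the regularized update and \cref{lemma:reg_mitra7} give
\begin{align*}
\norm{w_{t+1}-w_r^*} \le \norm{w_t - w_r^*} + \eta_t\br{(2+\lambda)\norm{w_t-w_r^*} + (3+\lambda)\zeta}.
\end{align*}
Set $x_t := \norm{w_t - w_r^*} + \zeta$. Since $3+\lambda \le \tfrac{3}{2}(2+\lambda)$, we have $\eta_t(3+\lambda)\zeta \le c\,\eta_t(2+\lambda)\zeta$ for a constant $c\le 3/2$. Absorbing this constant (or using $\zeta$ in both slots) yields $x_{t+1} \le (1 + c\,\eta_t(2+\lambda))\,x_t \le \exp\br{c(2+\lambda)\eta_0}\,x_t$, because $\eta_t\le\eta_0$.

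\textbf{Step 2 (unrolling).} For every $t\le\taum$,
\begin{align*}
x_t \le \exp\br{c(2+\lambda)\eta_0\taum}\,x_1 .
\end{align*}
Squaring gives $\norm{w_t-w_r^*}^2 \le x_t^2 \le \exp\br{2c(2+\lambda)\eta_0\taum}\,\br{\norm{w_1-w_r^*}+\zeta}^2$. We read $\norm{w_1 - w_r^* + \zeta}$ in the statement as this quantity.

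\textbf{Step 3 (bounding $\eta_0\taum$).} This is the main obstacle: we must show $\eta_0\taum\le\max\{a,b\}$, up to the constant absorbed above. By \cref{eq:reg_taum_def} and \cref{def:reg_mixing_time},
\begin{align*}
\taum \le 1 + \frac{\ln\br{2(2+\lambda)m} + \ln(T/\eta_0)}{\ln(1/\rho)}.
\end{align*}
We use $T\ge 1/\eta_0$, so $\ln(1/\eta_0)\le\ln T$, together with $\eta_0\le \frac{1-\gamma}{16\ln T}$, so $\eta_0\ln T\le 1/16$. This gives $\eta_0\ln(T/\eta_0)\le 2\eta_0\ln T\le 1/8$. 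Also $\eta_0\le 1/16$ (using $T\ge3$, so $\ln T\ge1$), hence
\begin{align*}
\eta_0\taum \le \tfrac{1}{16} + \tfrac{1}{16}\,b + \tfrac{1}{8}\,a \le \max\{a,b\}\cdot(\text{small constant}) + \tfrac{1}{16}.
\end{align*}
If $b<0$ (small $m$), we drop that term. To absorb the additive $1/16$, we note $a$ could be small when $\rho$ is small. In that case we either use $\taum\le \lceil\cdot\rceil$ only when $\taum\ge1$, or bound $1\le \ln(1/\rho)\,a$ separately. If this becomes awkward, we keep the slack in the prefactor: the numerical constants $1/16$ and $1/8$ leave ample room to dominate $c\le 3/2$ and the additive term whenever $\max\{a,b\}$ is bounded below, and we split the argument by whether $\max\{a,b\}\ge 1/16$.

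Combining Steps 2 and 3 gives the exponent $2(2+\lambda)\max\{a,b\}$, and hence $B(\taum)$. The delicate part is entirely the bookkeeping in Step 3, where the factor $\ln T$ in $\eta_0$ exactly cancels the $\ln T$ growth of $\taum$. This cancellation is the reason for the choice $\eta_0\propto 1/\ln T$.
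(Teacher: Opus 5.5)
Your argument is correct, and it takes a more elementary route than the paper's.

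\textbf{How the two routes differ.} The paper unrolls the inhomogeneous recursion $\norm{w_{t+1}-w_r^*}\le(1+(2+\lambda)\eta_t)\norm{w_t-w_r^*}+(3+\lambda)\eta_t\zeta$ as a product plus a weighted sum. It then proceeds in three steps:
\begin{enumerate}
\item It bounds the step-size sums by the exact geometric series, using \cref{lemma:Geo_alpha_bound} ($\frac{1-\alpha^t}{1-\alpha}\le 4\max\{a,b\}\ln(T/\eta_0)$).
\item It cancels the $\ln$ factors via $\eta_0\le\frac{1-\gamma}{16\ln T}$ and $\ln(T/\eta_0)\le 2\ln T$.
\item It folds the leftover polynomial coefficient $\frac12(3+\lambda)\max\{a,b\}$ on $\zeta$ into the exponent via $x\le e^{x/2}$.
\end{enumerate}
Your homogenization $x_t=\norm{w_t-w_r^*}+\zeta$ with $c=3/2$ makes the recursion purely multiplicative, so step 3 disappears. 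Your crude bound $\sum_{i<t}\eta_i\le\eta_0(t-1)$ shows that \cref{lemma:Geo_alpha_bound} is unnecessary here, since $\frac{1-\alpha^t}{1-\alpha}=\sum_{i<t}\alpha^i\le t$ trivially. Your version uses only that the schedule is nonincreasing, and it gives a smaller exponent. The essential cancellation is the same in both proofs: $\eta_0\propto 1/\ln T$ against $\taum\propto\ln(T/\eta_0)\le 2\ln T$.

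\textbf{A loose end in Step 3.} The additive $\eta_0\le 1/16$ coming from the ceiling in \cref{def:reg_mixing_time} cannot be absorbed into $\max\{a,b\}$ when $\rho$ is tiny. In that regime both $a$ and $b=a\ln(2(2+\lambda)m)$ are tiny. The identity $1=a\ln(1/\rho)$ does not help, and you never say what happens in the branch $\max\{a,b\}<1/16$.

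The fix is immediate. Reaching $w_t$ from $w_1$ takes only $t-1\le\taum-1<a\ln(T/\eta_0)+b$ steps. So the exponent in your bound on $x_t$ is at most
\[
c(2+\lambda)\eta_0\br{a\ln(T/\eta_0)+b}\le c(2+\lambda)\br{\tfrac{a}{8}+\tfrac{\max\{b,0\}}{16}}\le\tfrac{9}{32}(2+\lambda)\max\{a,b\},
\]
with no additive term. This also matches the paper's convention $\taum=a\ln(T/\eta_0)+b$ in \cref{lemma:reg_taum_bound}. After squaring, you get the exponent $\frac{9}{16}(2+\lambda)\max\{a,b\}\le 2(2+\lambda)\max\{a,b\}$, so the proof closes with room to spare.
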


With \cref{lemma:reg_B_bound} giving the bound on \(\norm{w_t - w_r^*}^2\) for iterations \(t\le \taum\), the base case for the induction is set up. We now state the inductive hypothesis for iteration $t$. 

\textbf{Inductive Hypothesis}: For a fixed $t$, for all $k \leq t$, $\norm{w_k - w^*_r}^2  \leq B(\taum)$. 

\textbf{Inductive Step}: To complete the induction, we need to show that for a fixed $t$, for all $k \leq t+1$, $\norm{w_k - w^*_r}^2  \leq B(\taum)$. In order to do so, we use the inductive hypothesis and first prove a lemma that controls the size of the update across $\taum$ iterations. Specifically, for $t \geq \taum$, we bound \(w_t\) in terms of \(w_{t-\taum}\) as follows.
\begin{restatable}{lemma}{RegTauBound} \label{lemma:reg_tau_bound}
    Let \(T \ge \max\{3, \frac{1}{\eta_0}\}\), and let \(T\) be large enough that  \(\frac{\ln^2(T)}{T} \leq \frac{1}{2a}\), \(\frac{\ln(T)}{T} \leq \frac{1}{b}\), and \(\ln(T) \ge \max\{a,b\}\). Suppose for all \(t \ge \taum\), if \(\norm{w_k - w_r^*}^2 \le B(\taum)\) for all \(k \in [t]\), then,
    \[
    \norm{w_{t} - w_{t-\taum}}^2 \le c_1^2 \, B(\taum) \, \eta_t^2 \, \ln^4(T),
    \]
    where \(c_1^2 = 2560(2+\lambda)^2\), \(a = \frac{1}{\ln(1/\rho)}\) and \(b = \frac{\ln\br{2(2+\lambda) m }}{\ln(1/\rho)}\).
\end{restatable}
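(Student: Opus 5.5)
We bound the displacement over the window $[t-\taum, t)$ directly by the triangle inequality, without using any mixing or expectation. Write
\[
w_t - w_{t-\taum} = \sum_{k=t-\taum}^{t-1} \eta_k\, g^r_k(w_k).
\]
By \cref{lemma:reg_mitra7}, each summand satisfies $\norm{g^r_k(w_k)} \le (2+\lambda)\norm{w_k - w_r^*} + (3+\lambda)\zeta$. The inductive hypothesis gives $\norm{w_k - w_r^*} \le \sqrt{B(\taum)}$ for all $k \le t-1$. We also need $\zeta \le \sqrt{B(\taum)}$. This follows because $B(\taum) \ge \norm{w_1 - w_r^* + \zeta}^2$ carries the prefactor $\exp(2(2+\lambda)\max\{a,b\}) \ge 1$, and we either read $\zeta$ inside the norm as the additive constant $\zeta$, or absorb it through the base-case construction of \cref{lemma:reg_B_bound}. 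Hence each step satisfies $\norm{g^r_k(w_k)} \le 2(3+\lambda)\sqrt{B(\taum)} \le 4(2+\lambda)\sqrt{B(\taum)}$, so
\[
\norm{w_t - w_{t-\taum}} \le 4(2+\lambda)\sqrt{B(\taum)}\sum_{k=t-\taum}^{t-1}\eta_k .
\]

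It remains to show that $\sum_{k=t-\taum}^{t-1}\eta_k \le c\,\eta_t \ln^2(T)$ for a numerical constant $c$. Since the step sizes decrease, $\eta_k \le \eta_{t-\taum} = \eta_t\,\alpha^{-\taum} = \eta_t\, T^{\taum/T}$, and therefore the sum is at most $\taum\, \eta_t\, T^{\taum/T}$. Two facts are needed here.

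First, $\taum = O(\ln T)$. By \cref{eq:reg_taum_def}, $\taum \le 1 + \frac{\ln(2(2+\lambda)m)}{\ln(1/\rho)} + \frac{\ln(T/\eta_0)}{\ln(1/\rho)} = 1 + b + a\ln(T/\eta_0)$. With $T \ge 1/\eta_0$ we have $\ln(T/\eta_0) \le 2\ln T$, so $\taum \le 1 + b + 2a\ln T$. Using $\ln T \ge \max\{a,b\}$ this becomes $\taum \le 1 + \ln T + 2\ln^2 T$, which is at most about $4\ln^2 T$ for $T\ge 3$. The bound is quadratic in $\ln T$ because $a \le \ln T$ was used as the coefficient on $\ln T$.

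Second, $T^{\taum/T} = \exp(\taum \ln T / T)$ is bounded by a constant. Bounding the terms separately: $2a\ln^2 T / T \le 1$ by the assumption $\ln^2 T/T \le \frac{1}{2a}$; $b\ln T/T \le 1$ by the assumption $\ln T/T \le \frac{1}{b}$; and the remaining term $\ln T/T$ is small. So the exponent is $O(1)$ and $T^{\taum/T} \le e^{3}$ or similar.

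Combining, the sum is at most $c\,\eta_t\ln^2 T$ with a numerical constant $c$. Squaring the displacement bound gives
\[
\norm{w_t - w_{t-\taum}}^2 \le 16(2+\lambda)^2 c^2\, B(\taum)\,\eta_t^2\ln^4 T .
\]
To match the stated $c_1^2 = 2560(2+\lambda)^2$ we need $16c^2 \le 2560$, i.e. $c^2 \le 160$. With a coarser bound $\taum \le 4\ln^2 T$ the exponent bound must be kept tight, for example $T^{\taum/T}\le e^{1}$, by bounding $\taum\ln T/T$ directly with the stated assumptions.

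The main obstacle is this constant bookkeeping: expressing $\taum\ln T/T$ and $\taum$ purely via $a$, $b$, $\ln T$ under the given assumptions tightly enough to fit within $2560$. The handling of $\zeta$ versus $\sqrt{B(\taum)}$ also has to be checked against the precise form of $B(\taum)$.
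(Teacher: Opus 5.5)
The skeleton matches the paper's proof: the triangle inequality over the window $[t-\taum,t)$, then \cref{lemma:reg_mitra7} with the inductive hypothesis and $\zeta^2\le B(\taum)$, then a bound of $O(\eta_t\ln^2 T)$ on $\sum_{i=t-\taum}^{t-1}\eta_i$ via $\taum=a\ln T'+b$, $\ln T'\le 2\ln T$ and $\max\{a,b\}\le \ln T$. Each of your steps is valid, so you do prove $\normsq{w_t-w_{t-\taum}}=O\br{(2+\lambda)^2B(\taum)\eta_t^2\ln^4T}$.

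The gap is the constant. The statement fixes $c_1^2=2560(2+\lambda)^2$, and you do not reach it. With the estimates you actually write down ($\taum\le 4\ln^2T$, $\taum\ln T/T\le 2+1/e$, per-step factor $4(2+\lambda)$), you get $c\approx 4e^{2+1/e}\approx 43$. Then $16c^2$ is about $3\cdot 10^4$, far above $2560$.

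Your proposed repair, $T^{\taum/T}\le e$, is not implied by the hypotheses. The conditions $\frac{\ln^2T}{T}\le\frac{1}{2a}$ and $\frac{\ln T}{T}\le\frac1b$ only give $\taum\frac{\ln T}{T}\le 2a\frac{\ln^2T}{T}+b\frac{\ln T}{T}\le 2$. Adding $b\le\ln T$ and $\ln^2T/T\le 4/e^2$ improves this to $1+4/e^2$, still not $1$.

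The paper reaches $2560$ by being tighter in two places.
\begin{itemize}
\item It keeps the exact geometric sum $\sum_{i=t-\taum}^{t-1}\eta_i=\eta_t\frac{\alpha^{-\taum}-1}{1-\alpha}$ instead of your $\taum\,\eta_{t-\taum}$, and bounds it by $8\max\{a,b\}\ln T'\le 16\ln^2 T$ using \cref{lemma:Geo_alpha_bound-modified}. With $j=\taum\ln T/T$ and $k=\ln T/T$, the ratio is $\frac{e^j-1}{1-e^{-k}}\le \taum(1+k)\frac{e^j-1}{j}$. So the exponential enters as $\frac{e^j-1}{j}$ rather than your $e^j$, which matters when $j$ is near its maximum.
\item It does not collapse the per-step bound to $4(2+\lambda)\sqrt{B(\taum)}$. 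Instead it squares $(2+\lambda)\sqrt{B(\taum)}+(3+\lambda)\zeta$ using $(x+y)^2\le 2x^2+2y^2$, $\zeta^2\le B(\taum)$ and $2(3+\lambda)^2\le 8(2+\lambda)^2$. This gives $10(2+\lambda)^2B(\taum)$, and $16^2\cdot 10=2560$.
\end{itemize}
Even with the first fix alone, your factor $16$ would give $16\cdot 256=4096$.

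The paper's proof of \cref{lemma:Geo_alpha_bound-modified} also asserts $j\le 1$ from those same two constraints, which is the same unjustified step you propose. Its conclusion still holds in the present lemma because $\ln T\ge \max\{a,b\}$ is assumed here. That gives $j\le 1+4/e^2<1.55$, hence $e^j-1\le 2.4\,j$ by convexity. The lemma's bound then becomes roughly $6.5\max\{a,b\}\ln T'\le 8\max\{a,b\}\ln T'$.
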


Next, we decompose the first Markovian error term in \red{red} for all $t \geq \taum$. In particular, 
\begin{align*}
\mathbb{E}_t & [\langle g^r_t(w_t) - g^r(w_t), w_t - w_r^*, \rangle] = T_1 + T_2 + T_3 + T_4 \\
\text{s.t} \; T_1 &= \mathbb{E}_t [\langle w_t-w_{t-\taum}, g^r_t(w_t)-g^r(w_t)\rangle],\\
T_2 &= \mathbb{E}_t [\langle w_{t-\taum}-w_r^*, g^r_t(w_{t-\taum})-g^r(w_{t-\taum})\rangle],\\
T_3 &= \mathbb{E}_t [\langle w_{t-\taum}-w_r^*, g^r_t(w_t)- g^r_t(w_{t-\taum})\rangle], \\
T_4 &= \mathbb{E}_t [\langle w_{t-\taum}-w_r^*,g^r(w_{t-\taum})-g^r(w_t)\rangle].
\end{align*}
Note that terms $T_1$, $T_3$ and $T_4$ can be bounded deterministically by using Young's inequality. In particular, $T_1$ can be bounded using~\cref{lemma:reg_tau_bound} and the uniform bound on $\norm{g^r_t(w)}$ and $\norm{g^r(w)}$ given by \cref{lemma:reg_mitra7} and \cref{lemma:reg_mitra8}. Terms $T_3$ and $T_4$ can be bounded by using the Lipschitzness of $g^r$,~\cref{lemma:reg_tau_bound} and using the inductive hypothesis to bound $\norm{w_{t-\taum}-w_r^*}$. For \(T_2\), we use the bound from \cref{eq:reg_taum} after conditioning on \(w_{t-\taum}\). Summing these four terms yields the following lemma: 
\begin{restatable}{lemma}{RegEBound} \label{lemma:reg_e_bound}
    Let \(T \ge \max\{3, \frac{1}{\eta_0}\}\), and let \(T\) be large enough that  \(\frac{\ln^2(T)}{T} \leq \frac{1}{2a}\), \(\frac{\ln(T)}{T} \leq \frac{1}{b}\), and \(\ln(T) \ge \max\{a,b\}\). For \(t\ge \taum\), suppose \(\norm{w_k - w_r^*}^2 \le B(\taum)\) for all \(k \in [t]\). Then:
    \begin{align*}
        &\E{t}{\langle g^r_t(w_t) - g^r(w_t), \, w_t - w_r^* \rangle} \\
        \le& C \, \eta_t \, \ln^2(T) \, B(\taum),
    \end{align*}
    where \(C = C_1 + 3 + 2C_2\), \(C_1 = \frac{c_1}{2}\) and \(C_2 = \frac{c_1 c_2}{2}\), \(c_1 = 2560(2+\lambda)^2\) and \(c_2 = 4 \, \br{2+\lambda}^2 + 4 \, \br{3 + \lambda}^2 + 2 \, \br{2+\lambda}^2\). 
\end{restatable}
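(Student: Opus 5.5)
We follow the four-term decomposition $T_1+T_2+T_3+T_4$ stated just before the lemma, and bound each term by a multiple of $\eta_t \ln^2(T)\, B(\taum)$. Throughout, we use the inductive hypothesis $\norm{w_k - w_r^*}^2 \le B(\taum)$ for $k\in[t]$. We also use that $\zeta^2 \le B(\taum)$ up to a constant, which requires showing $\zeta \lesssim \norm{w_1-w_r^*+\zeta}$ or simply absorbing $\zeta^2$ into $B$. Finally, we use \cref{lemma:reg_tau_bound} in the form $\norm{w_t-w_{t-\taum}} \le c_1\sqrt{B(\taum)}\,\eta_t \ln^2(T)$.

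\textbf{Term $T_1$.} Apply Cauchy--Schwarz, then \cref{lemma:reg_mitra7} and \cref{lemma:reg_mitra8} to get
\[
\norm{g^r_t(w_t)-g^r(w_t)} \le 2(2+\lambda)\norm{w_t-w_r^*}+(3+\lambda)\zeta \le c\sqrt{B(\taum)}.
\]
Multiplying by the bound on $\norm{w_t-w_{t-\taum}}$ gives $T_1 \lesssim C_1\eta_t\ln^2(T) B(\taum)$. The displacement bound appears only linearly here, so no Young step is needed. If one prefers Young's inequality $ab\le \tfrac12(a^2+b^2)$ with suitable scaling, this is where the factor $c_1/2$ comes from.

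\textbf{Terms $T_3$ and $T_4$.} Both $g^r_t$ and $g^r$ are affine in $w$, with linear part of norm at most $2+\lambda$ since $\norm{\phi}\le1$. Hence each of $\norm{g^r_t(w_t)-g^r_t(w_{t-\taum})}$ and $\norm{g^r(w_{t-\taum})-g^r(w_t)}$ is at most $(2+\lambda)\norm{w_t-w_{t-\taum}}$. Combining this with $\norm{w_{t-\taum}-w_r^*}\le\sqrt{B(\taum)}$ and the displacement bound gives a contribution of order $c_1(2+\lambda)\eta_t\ln^2(T)B(\taum)$. Routing these through Young's inequality produces the $c_2$-type constants, giving the $2C_2$ term.

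\textbf{Term $T_2$.} This is the main obstacle, because it is the only term that exploits mixing. Condition on $\mathcal{F}_{t-\taum}$, which fixes $w_{t-\taum}$. Given $s_{t-\taum}$, the state $s_t$ is distributed as $P_\pi^{\taum}$ applied to a point mass, and by \cref{eq:geo_decay} its TV distance to $\mu_\pi$ is at most $\delta$ uniformly in the starting state. We therefore apply \cref{lemma:reg_mixing_time_bound} and \cref{eq:reg_taum} with $\taum$ steps of mixing; the bound holds for any initial distribution, so the conditional version is legitimate. This gives
\[
T_2 \le \eta_T\, \mathbb{E}\bigl[\norm{w_{t-\taum}-w_r^*}(\norm{w_{t-\taum}}+1)\bigr].
\]
Next use $\norm{w}+1 \le \norm{w-w_r^*}+2\zeta$ and $\zeta^2\lesssim B(\taum)$ to bound the expectation by $3B(\taum)$. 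Since $\eta_T\le\eta_t$ and $\ln^2 T\ge1$, this yields the $3\eta_t\ln^2(T)B(\taum)$ term.

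Summing the four bounds gives the claim with $C=C_1+3+2C_2$. The subtle points are the conditioning in $T_2$, namely that the mixing bound is uniform over the initial state so it survives conditioning, and checking that $\zeta^2\le B(\taum)$. For the latter I would note that $B(\taum)\ge \norm{w_1-w_r^*+\zeta}^2$, and if necessary enlarge constants or redefine the bracket as $(\norm{w_1-w_r^*}+\zeta)^2$.
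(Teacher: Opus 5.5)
Your proposal is correct and follows essentially the same route as the paper. It uses the same four-term split, Cauchy--Schwarz with \cref{lemma:reg_mitra7,lemma:reg_mitra8}, and the $(2+\lambda)$-Lipschitz bound plus \cref{lemma:reg_tau_bound} for $T_1,T_3,T_4$, together with conditioning, \cref{eq:reg_taum}, $\eta_T\le\eta_t$ and $\zeta^2\le B(\taum)$ for $T_2$, while your deviations are only cosmetic: you multiply the square-rooted displacement bound directly instead of using Young's inequality (which still yields constants below the stated $C$), and you condition on the full history $\mathcal{F}_{t-\taum}$ with the uniform-in-initial-state mixing bound, a slightly more careful version of the paper's conditioning on $w_{t-\taum}$.
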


Finally, using the inductive hypothesis and the uniform bound on $\norm{g_t^r(w)}$ and $\norm{g^r(w)}$, we can show that the \blue{blue} term in~\cref{eq:reg_separate_markovian_noise} can be deterministically bounded in terms of $B(\taum)$. The following lemma provides this bound.

\begin{restatable}{lemma}{RegGVar} \label{lemma:reg_g_var} 
    Assuming \(\norm{w_k - w_r^*}^2 \le B(\taum), \forall k \in [t]\), then we have
    \begin{align*}
        \E{s_t\sim P_\pi^t \mu_0}{\norm{g^r_t(w_t) - g^r(w_t)}^2} \le C' B(\taum),
    \end{align*}
    where \(C' = 10(3+\lambda)^2\).
\end{restatable}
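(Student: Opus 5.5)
The bound is deterministic: I will bound $\norm{g^r_t(w_t) - g^r(w_t)}^2$ pointwise for every realization of $s_t$ (and $s_{t+1}$), and then take the expectation over $s_t\sim P_\pi^t \mu_0$. The only input from the inductive hypothesis is $\norm{w_t - w_r^*}^2 \le B(\taum)$, which is the case $k=t$.

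First I split the difference with $\norm{a-b}^2 \le 2\norm{a}^2 + 2\norm{b}^2$:
\[
\norm{g^r_t(w_t) - g^r(w_t)}^2 \le 2\norm{g^r_t(w_t)}^2 + 2\norm{g^r(w_t)}^2 .
\]
For the stochastic term I use \cref{lemma:reg_mitra7} together with $(x+y)^2\le 2x^2+2y^2$, which gives
\[
\norm{g^r_t(w_t)}^2 \le 2(2+\lambda)^2\norm{w_t-w_r^*}^2 + 2(3+\lambda)^2\zeta^2 .
\]
For the mean-path term, \cref{lemma:reg_mitra8} gives $\norm{g^r(w_t)}^2 \le (2+\lambda)^2 \norm{w_t - w_r^*}^2$.

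The one step that needs care is bounding $\zeta^2$ by $B(\taum)$. I read $\norm{w_1 - w_r^* + \zeta}^2$ in the definition of $B(\taum)$ in \cref{lemma:reg_B_bound} as $(\norm{w_1-w_r^*}+\zeta)^2$, which is at least $\zeta^2$. The exponential prefactor $\exp\br{2(2+\lambda)\max\{a,b\}}$ is at least $1$ whenever $\max\{a,b\}\ge 0$. This holds because $a = 1/\ln(1/\rho) > 0$ for $\rho\in(0,1)$. Hence $\zeta^2 \le B(\taum)$.

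Substituting this and $\norm{w_t-w_r^*}^2\le B(\taum)$ into the two bounds gives
\[
\norm{g^r_t(w_t) - g^r(w_t)}^2 \le \big(4(2+\lambda)^2 + 4(3+\lambda)^2 + 2(2+\lambda)^2\big) B(\taum) = \big(6(2+\lambda)^2 + 4(3+\lambda)^2\big)B(\taum).
\]
Since $(2+\lambda)^2 \le (3+\lambda)^2$, this is at most $10(3+\lambda)^2 B(\taum) = C' B(\taum)$. The bound holds pointwise, so taking the expectation over $s_t$ preserves it, which proves the claim.
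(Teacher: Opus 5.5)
Your proof is correct and follows the paper's argument: the same split $\norm{a-b}^2\le 2\norm{a}^2+2\norm{b}^2$, the same bounds from \cref{lemma:reg_mitra7,lemma:reg_mitra8}, and the same use of $\zeta^2\le B(\taum)$ to collect $6(2+\lambda)^2+4(3+\lambda)^2\le 10(3+\lambda)^2$. You also make explicit why $\zeta^2\le B(\taum)$ holds: the exponential prefactor is at least $1$ because $a>0$. The paper uses this fact without comment.
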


In order to complete the inductive step, we use~\cref{eq:reg_separate_markovian_noise} and the bounds on the \red{red} and \blue{blue} terms along with the mean-path analysis for the \green{green} term to show that $\normsq{w_{t+1} - w^*_r} \leq B(\taum)$ and therefore for all $k \leq t+1$, $\normsq{w_{k} - w^*_r} \leq B(\taum)$. For this last step, the analysis for the standard and regularized TD(0) updates is different, and we handle them separately. 

\subsection{Standard TD(0)} \label{sec:standard_td0}
We use the above lemmas with \(\lambda=0\). Consequently, $w^*_r = w^*$ and $g^r(w) = g(w)$ where $g(w^*) = 0$. The following lemma shows that if the initial step-size $\eta_0$ is small enough, then we can use the inductive hypothesis to show that $\normsq{w_{t+1} - w^*_r} \leq B(\taum)$. 
\begin{restatable}{lemma}{InductionBound} \label{lemma:induction_bound}
For the standard TD(0) update, when \(T \ge \max\{3, \frac{1}{\eta_0}, \frac{\ln\br{4 T m / \eta_0}}{\ln(1/\rho)}\}\), and \(T\) is large enough that \(\frac{\ln^2(T)}{T} \leq \frac{1}{2a}\), \(\frac{\ln(T)}{T} \leq \frac{1}{b}\), and \(\ln(T) \ge \max\{a,b\}\), for a fixed $t$, if $\normsq{w_{k} - w^*_r} \leq B(\taum)$ for all $k \leq t$ and  
\begin{align*}
    \eta_0 \le \frac{(1-\gamma)\omega}{ 2 \, [C \, \ln^2(T) + C']},         
\end{align*}
then $\norm{w_{t+1} - w^*}^2 \le B(\taum)$, and hence, $\normsq{w_{k} - w^*_r} \leq B(\taum)$ for all $k \leq t+1$.
\end{restatable}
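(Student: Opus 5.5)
We start from the one-step decomposition in \cref{eq:reg_separate_markovian_noise} with \(\lambda=0\), so that \(w_r^*=w^*\) and \(g^r=g\) with \(g(w^*)=0\). If \(t<\taum\), the base case \cref{lemma:reg_B_bound} already gives \(\normsq{w_{t+1}-w^*}\le B(\taum)\) whenever \(t+1\le\taum\). For \(t+1=\taum\) we use the same lemma. So the substantive case is \(t\ge\taum\), where \cref{lemma:reg_e_bound} and \cref{lemma:reg_g_var} apply under the inductive hypothesis.

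\textbf{Green (mean-path) term.} By the mean-path analog of \cref{lemma:bound_prod_with_V} and \cref{lemma:V_and_w},
\[
\langle g(w_t),w_t-w^*\rangle\le-(1-\gamma)\norm{V_{w_t}-V_{w^*}}_D^2\le -(1-\gamma)\omega\normsq{w_t-w^*}.
\]
By \cref{lemma:reg_mitra8} with \(\lambda=0\), \(2\eta_t^2\normsq{g(w_t)}\le 8\eta_t^2\normsq{w_t-w^*}\). Since \(\eta_t\le\eta_0\le(1-\gamma)/16\), the quadratic term can be absorbed. Here we could use the sharper bound \(\normsq{g(w)}\le 4\norm{w-w^*}_\Sigma^2\)-type control if needed, to keep the coefficient in terms of \(\omega\). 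The green term is then at most \(\br{1-\eta_t(1-\gamma)\omega}\normsq{w_t-w^*}\).

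\textbf{Red and blue terms.} \cref{lemma:reg_e_bound} bounds the red term by \(2C\eta_t^2\ln^2(T)B(\taum)\). \cref{lemma:reg_g_var} bounds the blue term by \(2C'\eta_t^2B(\taum)\). Since the conditioning in \cref{lemma:reg_e_bound} is on past iterates, we take expectations consistently; the inductive hypothesis is understood almost surely or in the same sense as in \citet{mitra2025a}. Combining the three terms and using \(\normsq{w_t-w^*}\le B(\taum)\),
\[
\normsq{w_{t+1}-w^*}\le B(\taum)\Big[1-\eta_t(1-\gamma)\omega+2\eta_t^2\big(C\ln^2(T)+C'\big)\Big].
\]
The step-size condition \(\eta_t\le\eta_0\le\frac{(1-\gamma)\omega}{2[C\ln^2 T+C']}\) gives \(2\eta_t^2(C\ln^2T+C')\le\eta_t(1-\gamma)\omega\). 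The bracket is therefore at most \(1\), which closes the induction.

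\textbf{Main obstacle.} The delicate point is the green term. The coefficient of \(\eta_t^2\normsq{w_t-w^*}\) coming from \(2\eta_t^2\normsq{g(w_t)}\) must be dominated by the contraction term \(\eta_t(1-\gamma)\omega\normsq{w_t-w^*}\). A crude constant \(8\) would require \(\eta_0\lesssim(1-\gamma)\omega\), which the hypothesis does supply, since \(C'\ge 90\) makes \(\eta_0\le(1-\gamma)\omega/(2C')\) small enough. If that absorption fails, we would instead fold \(8\eta_t^2\normsq{w_t-w^*}\le 8\eta_t^2B(\taum)\) into the \(C'\) budget and accept a slightly different constant. A second subtlety is the meaning of the expectation versus the deterministic inductive bound. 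We would follow \citet{mitra2025a} and interpret all bounds as holding for the expected squared distance, conditioned appropriately.
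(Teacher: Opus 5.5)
Your proposal is correct and follows essentially the same route as the paper. Both use the one-step expansion, \cref{lemma:reg_e_bound} and \cref{lemma:reg_g_var} for the Markovian terms, mean-path contraction for the green term, and the step-size condition to make $1-\eta_t(1-\gamma)\omega+2\eta_t^2[C\ln^2(T)+C']$ at most $1$. The only variation is in absorbing $2\eta_t^2\norm{g(w_t)}^2$. You do it in the Euclidean norm, using $\eta_0\le(1-\gamma)\omega/(2C')$. The paper does it in the $D$-norm via $\eta_t\le(1-\gamma)/16$, and then checks that the hypothesis on $\eta_0$ implies $\eta_0\le\frac{1-\gamma}{16\ln(T)}$. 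You should state that check explicitly too, because \cref{lemma:reg_B_bound}, which you invoke for $t<\taum$, requires it.
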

This completes the induction for standard TD(0), and shows that for all $t \in [T]$, $\normsq{w_t - w^*} \leq B(\taum)$. Using~\cref{lemma:reg_e_bound}, this also implies that the \red{red} term in~\cref{eq:reg_separate_markovian_noise} is bounded for all $t \in [T]$. Similarly, the \blue{blue} term is also bounded for all $t \in [T]$ as shown in~\cref{lemma:reg_g_var}. Putting together these results, we state the complete theorem for the standard TD(0) update.

\begin{restatable}{theorem}{ExpMarkovThm} \label{thm:exp_markov}
    The standard TD(0) update with exponential step-sizes \(\eta_t = \eta_0\alpha_t\), where \(\eta_0 = \frac{(1-\gamma)\omega}{ 2 \, [C \, \ln^2(T) + C']}\), \(\alpha_t = \alpha^t = \frac{1}{T}^{\nicefrac{t}{T}}\), and \(T \ge \max\{\frac{1}{\eta_0}, \frac{\ln\br{4 T m / \eta_0}}{\ln(1/\rho)}\}\), and \(T\) is large enough that \(\frac{\ln^2(T)}{T} \leq \frac{1}{2a}\), \(\frac{\ln(T)}{T} \leq \frac{1}{b}\), and \(\ln(T) \ge \max\{a,b\}\), achieves the following convergence rate:
    \begin{align*}
        &\E{}{\norm{w_{T+1} - w^*}^2} \\
        =& O\br{\exp\br{-\frac{\omega^2T}{\ln^3(T)}} + \frac{\ln^4(T)}{\omega^2 T}\exp\br{\frac{m}{\ln(1/\rho)}}},
    \end{align*}
    where \(m\) and \(\rho\) are related to mixing time as \(\taum = \frac{\ln(4Tm/\eta_0)}{\ln(1/\rho)}\).
\end{restatable}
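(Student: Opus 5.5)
The plan is to finish the induction and then redo the i.i.d. recursion from \cref{thm:exp_iid}, with the Markovian terms added as extra variance-like contributions. \cref{lemma:reg_B_bound} gives the base case. \cref{lemma:induction_bound} is valid because $\eta_0$ is set to exactly its threshold, and it supplies the inductive step. Together they give $\normsq{w_t - w^*} \le B(\taum)$ for all $t \le T+1$, so \cref{lemma:reg_e_bound} and \cref{lemma:reg_g_var} (with $\lambda=0$) apply at every $t \ge \taum$. Take full expectations in \cref{eq:reg_separate_markovian_noise}. For the green term, use \cref{lemma:bound_prod_with_V} and \cref{lemma:V_and_w} in their mean-path form, $\langle g(w), w-w^*\rangle \le -(1-\gamma)\norm{w-w^*}_\Sigma^2$, together with \cref{lemma:reg_mitra8}, $\norm{g(w)}^2 \le 4\norm{w-w^*}^2 \le 4\norm{w-w^*}_\Sigma^2/\omega$. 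The step size is small, $\eta_t \le \eta_0 \lesssim \omega(1-\gamma)$, so $2\eta_t^2\norm{g(w_t)}^2$ is absorbed into half of the contraction. This yields, for $t\ge\taum$,
\[
\E{}{\normsq{w_{t+1}-w^*}} \le \br{1-\eta_t(1-\gamma)\omega}\E{}{\normsq{w_t-w^*}} + \br{2C\ln^2(T) + 2C'}\eta_t^2 B(\taum).
\]

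Next, unroll this recursion from $t=\taum$ to $T$, using $1-x\le e^{-x}$ exactly as in the i.i.d. proof. The bias term is $\normsq{w_{\taum}-w^*}\exp\br{-\eta_0(1-\gamma)\omega\sum_{t=\taum}^T\alpha^t}$, with $\normsq{w_{\taum}-w^*}\le B(\taum)$. For the sum, split off the first $\taum$ terms: each is at most $1$, and $\taum = O(\ln T)$, so $\sum_{t=\taum}^T \alpha^t \ge \frac{\alpha T}{\ln T} - O(\ln T)$ by the helper bound in \cref{app:helper}. With $\eta_0 = \Theta\br{\omega(1-\gamma)/\ln^2 T}$, the exponent becomes $\Theta\br{\omega^2 T/\ln^3 T}$, which gives the first term.

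For the noise term, factor out $(C\ln^2T+C')B(\taum)\eta_0^2$ and apply the i.i.d. helper bound $\sum_t \alpha^{2t}\exp\br{-c\sum_{i>t}\alpha^i} = O\br{\ln^2 T/(c^2\alpha^2 T)}$ with $c=\eta_0(1-\gamma)\omega$. The $\eta_0^2$ cancels the $1/c^2$ up to $1/\br{(1-\gamma)\omega}^2$. This leaves $O\br{(C\ln^2 T + C') B(\taum)\ln^2 T/(\omega^2 T)} = O\br{\ln^4 T\, B(\taum)/(\omega^2 T)}$, since $\alpha\ge 1/e$.

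Finally, bound $B(\taum)$. With $\lambda=0$ it equals $\exp(4\max\{a,b\})\,\norm{w_1-w^*+\zeta}^2$. Here $a,b$ are of order $\ln(m)/\ln(1/\rho)$ up to constants, and I would bound $e^{4\max\{a,b\}}$ by a constant times $\exp\br{m/\ln(1/\rho)}$ using $\ln(4m)\le 4m$. The remaining initialization-dependent factors are absorbed into the $O(\cdot)$.

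I expect the main obstacle to be the bookkeeping in the unrolled noise sum. Lemma \cref{lemma:reg_e_bound} only holds for $t\ge\taum$, so the sum starts at $\taum$ rather than $1$, and I must check that the helper lemmas still apply with this shifted start and with $c$ depending on $T$ through $\eta_0$. A second delicate point is verifying that the contraction absorption $2\eta_t\cdot 4/\omega \le (1-\gamma)$ is implied by the chosen $\eta_0$ given $C'\ge 40$.
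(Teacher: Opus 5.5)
Your proposal follows the paper's proof: derive the one-step bound $\E{}{\normsq{w_{t+1}-w^*}} \le (1-\eta_t(1-\gamma)\omega)\,\E{}{\normsq{w_t-w^*}} + 2[C\ln^2(T)+C']\,\eta_t^2 B(\taum)$, unroll it, apply \cref{lemma:X-bound} and \cref{lemma:Y-bound}, and plug in $\eta_0$, $C(T)$ and $B(\taum)$.

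There are a few minor differences.
\begin{itemize}
\item You handle the green term with \cref{lemma:reg_mitra8} plus a $1/\omega$ conversion. The paper instead uses the mean-path bound $\normsq{g(w)}\le 8\norm{V_w-V_{w^*}}_D^2$, which needs only $\eta_t\le(1-\gamma)/16$. Both work because $\eta_0\propto\omega$. With $\lambda=0$ you have $C'=90$, so $\eta_0\le(1-\gamma)\omega/180$, which settles your absorption check.
\item Starting the unrolling at $\taum$ is more careful than the paper. The paper unrolls from $t=1$, although \cref{lemma:reg_e_bound}, and hence the one-step bound, is only available for $t\ge\taum$.
\item Your bookkeeping worry is immediate. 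The summands are nonnegative and the products $\prod_{i>t}(1-\eta_i(1-\gamma)\omega)$ do not depend on the starting index, so the shifted sum is dominated by the full sum in \cref{lemma:Y-bound}. Its constant is at most $e$ because $\eta_0\omega(1-\gamma)\le 1$.
\item The price of your extra rigor is that $B(\taum)$, rather than $\normsq{w_1-w^*}$, multiplies the bias term. The mixing factor therefore appears there as well, unless you absorb it into the $O(\cdot)$.
\end{itemize}

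The final simplification does not work as you wrote it.
\begin{itemize}
\item From $\ln(4m)\le 4m$ you only get $e^{4\max\{a,b\}}\le\exp\br{16m/\ln(1/\rho)}$. That is the sixteenth power of $\exp\br{m/\ln(1/\rho)}$, not a constant multiple of it.
\item The $a$-part, $\exp\br{4/\ln(1/\rho)}$, is not $O\br{\exp(m/\ln(1/\rho))}$ when $m<4$ and $\rho\to 1$.
\end{itemize}
The paper only says ``plugging in'' at this point, so it shares the looseness. A clean repair: \cref{eq:geo_decay} stays valid if $m$ is enlarged, so you may assume from the outset that $m\ge 17$. Then $4\max\{1,\ln(4m)\}\le m$, which gives $e^{4\max\{a,b\}}\le \exp\br{m/\ln(1/\rho)}$ exactly.
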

The complete proof can be found in \cref{app:exp_markov}.
Compared with other methods in \cref{tab:comparison}, standard TD(0) with exponential step-size achieves a fast convergence rate without requiring projection onto a bounded set. In addition, our guarantee is for the last iterate. Compared with our i.i.d.\ sampling result in \cref{sec:exp_iid}, the rate under Markovian sampling is comparable. However, it requires a problem-dependent parameter \(\omega\) to set the initial step-size \(\eta_0\). In the next subsection, we will show that the regularized TD(0) update removes the dependence on \(\omega\).

\subsection{Regularized TD(0)} \label{sec:regularized_td0}
In this section, we analyze regularized TD(0).
The following lemma provides a step-size condition that, under the inductive hypothesis, guarantees \(\norm{w_t - w_r^*}^2 \le B(\taum)\) for all \(t \in [T]\).
\begin{restatable}{lemma}{RegInductionBound} \label{lemma:reg_induction_bound}
    For the regularized TD(0) update, when  \(T \ge \max\{3, \frac{1}{\eta_0}, \frac{\ln\br{2(2+\lambda) T m / \eta_0}}{\ln(1/\rho)}\}\), and \(T\) is large enough that \(\frac{\ln^2(T)}{T} \leq \frac{1}{2a}\), \(\frac{\ln(T)}{T} \leq \frac{1}{b}\), and \(\ln(T) \ge \max\{a,b\}\), for a fixed $t$, if $\normsq{w_{k} - w^*_r} \leq B(\taum)$ for all $k \leq t$ and the step-size satisfies 
    \begin{align*}
        \eta_0 \le \frac{\lambda}{[C \, \ln^2(T) + C'] + (8 + 2 \lambda^2)},         
    \end{align*}
    then $\norm{w_{t+1} - w_r^*}^2 \le B(\taum)$, and hence, $\normsq{w_{k} - w^*_r} \leq B(\taum)$ for all $k \leq t+1$.
\end{restatable}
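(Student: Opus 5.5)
We start from the one-step decomposition \cref{eq:reg_separate_markovian_noise} with $\lambda>0$. First we bound the \green{green} mean-path terms; then we bound the \red{red} and \blue{blue} Markovian terms using the inductive hypothesis; finally we show the resulting recursion cannot push $\normsq{w_{t+1}-w_r^*}$ above $B(\taum)$.

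\textbf{Mean-path terms.} The key structural fact is that regularization gives strong monotonicity with modulus at least $\lambda$, without using $\omega$. Since $g^r(w_r^*)=0$ and $g^r$ is affine, $g^r(w)=g^r(w)-g^r(w_r^*)=(A-\lambda I)(w-w_r^*)$, where $A=\Phi^\top D(\gamma P_\pi-I)\Phi$ is the linear part of $g$. By \cref{lemma:bound_prod_with_V} applied to the mean-path difference, together with \cref{lemma:V_and_w}, we have $(w-w_r^*)^\top A(w-w_r^*)\le -(1-\gamma)\norm{w-w_r^*}_\Sigma^2\le 0$. Hence
\[
2\eta_t\langle g^r(w_t),w_t-w_r^*\rangle\le -2\eta_t\lambda\normsq{w_t-w_r^*}.
\]
For the quadratic term, \cref{lemma:reg_mitra8} gives $2\eta_t^2\normsq{g^r(w_t)}\le 2(2+\lambda)^2\eta_t^2\normsq{w_t-w_r^*}$. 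Since $2(2+\lambda)^2\le 16+4\lambda^2=2(8+2\lambda^2)$, this is at most $2\eta_t^2(8+2\lambda^2)\normsq{w_t-w_r^*}$. This is where the $(8+2\lambda^2)$ in the step-size condition comes from.

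\textbf{Markovian terms.} We take total expectation and use the inductive hypothesis.
\begin{itemize}
\item For $t\ge\taum$, \cref{lemma:reg_e_bound} bounds the \red{red} term by $2C\eta_t^2\ln^2(T)B(\taum)$.
\item \cref{lemma:reg_g_var} bounds the \blue{blue} term by $2C'\eta_t^2B(\taum)$.
\item For $t<\taum$, the claim $\normsq{w_{t+1}-w_r^*}\le B(\taum)$ already follows from the base case \cref{lemma:reg_B_bound}, as long as $t+1\le\taum$. The edge case $t+1=\taum$ is likewise covered by \cref{lemma:reg_B_bound}.
\end{itemize}
Since $\ln^2 T\ge 1$, collecting everything gives
\[
\E{}{\normsq{w_{t+1}-w_r^*}}\le (1-2\eta_t\lambda)\,\normsq{w_t-w_r^*}+2\eta_t^2\big[C\ln^2(T)+C'+(8+2\lambda^2)\big]B(\taum).
\]
Here we used $\normsq{w_t-w_r^*}\le B(\taum)$ to fold the $(8+2\lambda^2)$ term into the $B(\taum)$ factor. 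By the inductive hypothesis the right-hand side is at most $B(\taum)\big(1-2\eta_t\lambda+2\eta_t^2K\big)$, where $K:=C\ln^2 T+C'+8+2\lambda^2$. Since $\eta_t\le\eta_0\le\lambda/K$, we get $2\eta_t^2K\le2\eta_t\lambda$, so the bound is $\le B(\taum)$.

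\textbf{Main obstacle.} The delicate point is the meaning of ``$\normsq{w_{t+1}-w_r^*}\le B(\taum)$''. The induction, as in \citet{mitra2025a}, requires a bound that lets \cref{lemma:reg_tau_bound} and \cref{lemma:reg_e_bound} be applied at the next step, and those lemmas use deterministic bounds on the iterates. The recursion above only controls the expectation.

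I would handle this as the paper's outline suggests: treat the inductive hypothesis as the bound on the quantities entering the lemmas, which are themselves stated as expectation bounds $\E{t}{\cdot}$. The closing inequality is then exactly the statement needed for the next step. If a pathwise version is required, I would instead redo the one-step inequality conditionally, before taking expectations. In that approach the mean-path and variance terms are deterministic given $w_t$, and the Markovian cross term is handled through the conditioning on $w_{t-\taum}$ already used inside \cref{lemma:reg_e_bound}.

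The algebraic closing step itself is routine once the $\lambda$-strong monotonicity of $g^r$ is established. That monotonicity step is the conceptual core, because it is what replaces $\omega$ by $\lambda$.
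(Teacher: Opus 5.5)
Your proposal is correct and follows the paper's proof essentially step for step. It uses the same one-step decomposition \cref{eq:reg_separate_markovian_noise}, the same \cref{lemma:reg_e_bound,lemma:reg_g_var} for the Markovian terms, and the same $\lambda$-monotonicity plus $(8+2\lambda^2)$ quadratic bound on the mean-path part (the paper takes these from \cref{lemma:reg_convex_mean,lemma:reg_var_mean}, discarding the $(1-\gamma)\omega$ term), closed by $2\eta_t^2K\le 2\eta_t\lambda$. Two small remarks: you should record that $1-2\eta_t\lambda\ge 0$ before replacing $\normsq{w_t-w_r^*}$ by $B(\taum)$, which the paper checks and which follows from your condition since $2\lambda\eta_0\le 2\lambda^2/(8+2\lambda^2)<1$; and the expectation-versus-pathwise looseness you flag is equally present in the paper's proof, where the consistent repair is to run the whole induction on $\E{}{\normsq{w_k-w_r^*}}$ (re-deriving \cref{lemma:reg_tau_bound} in mean square via Cauchy--Schwarz on $\sum_i\eta_i\norm{g^r_i(w_i)}$), not your pathwise alternative, which a one-step inequality holding only in conditional expectation cannot deliver.
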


This completes the induction for regularized TD(0).
Compared with the step-size requirement in \cref{lemma:induction_bound}, regularized TD(0) removes the requirement for \(\omega\) and replaces it with \(\lambda\), a regularization parameter that can be set appropriately. 
As in standard TD(0), with~\cref{lemma:reg_e_bound} and~\cref{lemma:reg_g_var}, we obtain upper bounds on the \red{red} and \blue{blue} terms for \(t \in [T]\).
Having bounded all terms in the one-step expansion for all \(t \in [T]\), we combine these bounds and state the final convergence rate, also accounting for the distance between \(w_r^*\) and \(w^*\).

\begin{restatable}{theorem}{RegMarkovThm} \label{thm:reg_markov}
    The regularized TD(0) update with exponential step-size \(\eta_t = \eta_0\alpha_t\), where \(\eta_0 = \frac{\lambda}{[C \, \ln^2(T) + C'] + (8 + 2 \lambda^2)}\), \(\alpha_t = \alpha^t = \frac{1}{T}^{\nicefrac{t}{T}}\), and  \(T \ge \max\{\frac{1}{\eta_0}, \frac{\ln\br{2(2+\lambda) T m / \eta_0}}{\ln(1/\rho)}\}\), and \(T\) is large enough that \(\frac{\ln^2(T)}{T} \leq \frac{1}{2a}\), \(\frac{\ln(T)}{T} \leq \frac{1}{b}\), \(\ln(T) \ge \max\{a,b\}\), and \(\lambda = \nicefrac{1}{\sqrt{T}}\), achieves the following convergence rate:
    \begin{align*}
    &\E{}{\norm{w_{T+1} - w^*}^2} \\
    =& O\br{\exp\br{-\frac{\omega \sqrt{T}}{\ln^3(T)}} + \frac{\ln^4 (T)}{\omega^2T}\exp\br{\frac{m}{\ln(1/\rho)}}},
    \end{align*}
    where \(m\) and \(\rho\) are related to mixing time as \(\taum = \frac{\ln(2(2+\lambda)Tm/\eta_0)}{\ln(1/\rho)}\).
\end{restatable}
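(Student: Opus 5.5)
The plan is to split the error as $\norm{w_{T+1}-w^*}^2 \le 2\norm{w_{T+1}-w_r^*}^2 + 2\norm{w_r^*-w^*}^2$, bound the optimization error to $w_r^*$ through the one-step recursion in \cref{eq:reg_separate_markovian_noise}, and bound the regularization bias $\norm{w_r^*-w^*}$ separately. By \cref{lemma:reg_B_bound} and \cref{lemma:reg_induction_bound}, the choice $\eta_0 = \lambda/([C\ln^2(T)+C'] + 8 + 2\lambda^2)$ keeps $\norm{w_t - w_r^*}^2 \le B(\taum)$ for all $t\in[T]$. Hence \cref{lemma:reg_e_bound} and \cref{lemma:reg_g_var} apply at every $t\ge\taum$.

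\textbf{Green term.} Since $g^r(w_r^*)=0$, we have $g^r(w) = g(w)-g(w_r^*) - \lambda(w-w_r^*)$. By \cref{lemma:bound_prod_with_V} and \cref{lemma:V_and_w}, applied to the linear part, $\langle g^r(w_t), w_t - w_r^*\rangle \le -((1-\gamma)\omega + \lambda)\norm{w_t-w_r^*}^2$. By \cref{lemma:reg_mitra8}, $2\eta_t^2\norm{g^r(w_t)}^2 \le 2(2+\lambda)^2\eta_t^2\norm{w_t-w_r^*}^2$. Because $\eta_0 \le \lambda/(8+2\lambda^2)$, this second-order term absorbs at most half of the $\lambda$ contraction, and I keep the $\omega$ contraction.

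\textbf{Red and blue terms.} By \cref{lemma:reg_e_bound} and \cref{lemma:reg_g_var}, these two terms contribute at most $2\eta_t^2(C\ln^2 T + C')B(\taum)$.

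\textbf{Recursion.} Taking total expectations gives, for $t\ge\taum$,
\[
\E{}{\norm{w_{t+1}-w_r^*}^2} \le \br{1-\eta_t(1-\gamma)\omega}\E{}{\norm{w_t-w_r^*}^2} + 2\eta_t^2(C\ln^2T+C')B(\taum).
\]
I unroll this from $\taum$ to $T$ exactly as in \cref{thm:exp_iid}, starting from $\norm{w_{\taum}-w_r^*}^2\le B(\taum)$. The helper bounds of \cref{app:helper} give $\sum_{t=\taum}^T\alpha^t \gtrsim T/\ln T$ once $\taum \ll T$; the lost initial segment costs at most a constant factor in the exponent. The noise term is then $O\br{\frac{\ln^2 T}{T}\cdot\frac{(C\ln^2T+C')B(\taum)}{\omega^2}}$, using $\sum \alpha^{2t}\exp(-\eta_0\omega\sum_{i>t}\alpha^i) \lesssim \ln^2T/(\eta_0^2\omega^2 T)$, where the $\eta_0^2$ cancels against the prefactor $\eta_0^2$.

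\textbf{Rates.} With $\lambda = 1/\sqrt T$ we get $\eta_0 = \Theta(1/(\sqrt T\ln^2T))$. The bias term is then $\exp(-\eta_0\omega T/\ln T) = \exp(-\omega\sqrt T/\ln^3 T)$. The variance term is $\ln^4 T\, B(\taum)/(\omega^2T)$. Since $\lambda\le1$, $B(\taum) = O(\exp(6\max\{a,b\}))$, which I express as the stated $\exp(m/\ln(1/\rho))$ factor after absorbing constants.

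\textbf{Regularization bias.} From $g(w_r^*) - g(w^*) = \lambda w_r^*$ and \cref{lemma:bound_prod_with_V}/\cref{lemma:V_and_w}, I get $(1-\gamma)\omega\norm{w_r^*-w^*}^2 \le \lambda\langle w_r^*, w_r^*-w^*\rangle$. Writing $w_r^* = (w_r^*-w^*) + w^*$ and noting that $\lambda\norm{w_r^*-w^*}^2$ appears on the right, I rearrange to $\norm{w_r^*-w^*}\le \lambda\norm{w^*}/((1-\gamma)\omega)$. This gives a bias of $O(\lambda^2/\omega^2) = O(1/(\omega^2 T))$, which is dominated by the variance term.

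\textbf{Main obstacle.} The delicate step is the green-term bookkeeping. I must ensure that the $2\eta_t^2(2+\lambda)^2\norm{w_t-w_r^*}^2$ term is absorbed using only $\lambda$, while the $\omega$-contraction survives to drive the rate. If that fails, I fall back to using the contraction $(1-\eta_t\lambda)$, which would degrade the rate. A second concern is handling $t<\taum$ and the dependence of $\taum$ on $\lambda$ and $\eta_0$ through the logarithm. For this I would use $\taum\le\ln T\max\{a,b\}$-type bounds together with the largeness assumptions on $T$.
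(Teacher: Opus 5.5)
Your proposal is correct and is essentially the paper's proof. It uses the same split into $\norm{w_{T+1}-w_r^*}^2$ plus the regularization bias, the same induction lemmas keeping the iterates within $B(\taum)$, the same small $\eta_0$ so that the $\lambda$-part of the contraction absorbs the $\eta_t^2\norm{g^r(w_t)}^2$ term while the $(1-\gamma)\omega$-contraction survives, and the same \cref{lemma:X-bound,lemma:Y-bound} with $\lambda = 1/\sqrt{T}$.

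Your choice to unroll only from $t=\taum$, starting from $B(\taum)$, is more careful than the paper. The paper unrolls from $t=1$ even though \cref{lemma:reg_e_bound} is only established for $t\ge\taum$. The price is small: a $B(\taum)$ prefactor instead of $\norm{w_1-w_r^*}^2$, and a constant in the exponent, both on the exponentially small term.

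There is one sign slip to fix. The correct inequality is $(1-\gamma)\omega\norm{w_r^*-w^*}^2 \le \lambda\langle w_r^*,\, w^*-w_r^*\rangle$. The $\lambda\norm{w_r^*-w^*}^2$ piece therefore enters with a minus sign and moves to the left, giving $\norm{w_r^*-w^*}\le \lambda\norm{w^*}/((1-\gamma)\omega+\lambda)$ as in \cref{lemma:reg_opt_distance}. With your sign, the rearrangement would put $(1-\gamma)\omega-\lambda$ in the denominator and require $\lambda<(1-\gamma)\omega$.
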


\paragraph{Technical novelty compared to~\citet{mitra2025a}.}
Our proof builds on the i.i.d. setting from~\cref{sec:exp_iid} and the induction technique of \citet{mitra2025a}. Compared to~\citet{mitra2025a}, our key technical innovations are threefold. (a) In the proofs of \cref{lemma:reg_B_bound,lemma:reg_tau_bound,lemma:reg_e_bound}, \cref{lemma:reg_opt_distance} through \cref{lemma:Geo_alpha_bound-modified}, we establish key connections between exponential step-sizes and Markovian sampling quantities, in particular, the mixing parameters ($m$, $\rho$). These lemmas allow us to control the terms involving $\tau_{\mathrm{mix}}$ and eliminate the need to set the step-size using $\tau_{\mathrm{mix}}$. In contrast, \citet{mitra2025a} set the step-size according to $\tau_{\mathrm{mix}}$ (see Lemma 2 of \citet{mitra2025a}). (b) \citet{mitra2025a} derive convergence by splitting the analysis before and after the mixing time and selecting step sizes via an impractical comparison, specifically between $\frac{\ln(\lambda)}{0.5\omega(1-\gamma)(T+1)}$ and $\frac{\omega(1-\gamma)}{C\tau_{\mathrm{mix}}}$ (see the proof of Theorem 3). In contrast, our analysis yields last-iterate convergence without such case distinctions or comparisons, and without relying on unknown constants. (c) Our use of regularization to remove the dependence on $\omega$ is novel in this line of work; in comparison, \citet{mitra2025a} require knowledge of $\omega$ to set algorithm parameters (see Theorem 1, Theorem 3, and the proof in their appendix).

Compared to the methods in \cref{tab:comparison}, \citet{Bhandari2018AFT} do not obtain optimal trade-off between the bias and the mixing time, and require projecting onto a ball, which is nonstandard and requires knowledge of \(\omega\). \citet{patil2023finite} analyze the Markovian case with data drop, an impractical algorithm that also requires knowledge of \(\taum\).
Our method requires no projection onto a bounded set or any prior knowledge of problem-dependent parameters. In addition, our guarantee is for the last iterate, which is often more practical than iterate averaging.

Compared to previous works, one limitation of~\cref{thm:reg_markov} is the extra term $\exp\!\left(\frac{m}{\log(1/\rho)}\right)\approx\exp\!\left(\frac{m}{1-\rho}\right)\approx\exp(m\tau)$, which induces an exponential dependence on the mixing time. This dependence is therefore weaker than the linear dependence in prior works. We conjecture that this worse dependence is an artifact of our analysis, and improving this is an important direction for future work.




\section{Conclusion}\label{sec:conclusion}

We address the sensitivity of TD learning to step-size selection and unknown problem parameters by using an exponential schedule \(\eta_t = (1/T)^{t/T}\). Our main contributions are twofold. First, under both i.i.d.\ and Markovian sampling, our method requires no prior knowledge of problem-dependent constants and, in the Markovian case, avoids projections. Second, we prove finite-time, last-iterate convergence guarantees in both settings.
Overall, these results suggest a more practical alternative for TD learning with reduced step-size tuning. For future work, we view high-probability guarantees as an important direction.

\section*{Acknowledgments}
We would like to thank Wenlong Mou, Qiushi Lin and Xingtu Liu for helpful feedback on the paper. This work was partially supported by the Canada CIFAR AI Chair Program, the Natural Sciences and Engineering Research Council of Canada (NSERC) Discovery Grants RGPIN-2022-03669, and enabled in part by support provided by the Digital Research Alliance of Canada (alliancecan.ca).

\bibliographystyle{apalike}
\bibliography{reference}

\clearpage



\clearpage
\appendix
\thispagestyle{empty}
\onecolumn
\section*{Appendix}

\renewcommand{\contentsname}{Contents of Appendix}
\addtocontents{toc}{\protect\setcounter{tocdepth}{3}}
{
  \hypersetup{hidelinks}
  \tableofcontents
}
\clearpage
\thispagestyle{empty}


\section{Additional Inequalities Used in the Proofs}
There are other inequalities regarding \(g(w)\) that we used in the proofs. We list them here for completeness.
\paragraph{\(-g\) is \(2\)-Lipschitz.} \ie \( \|g(w_1) - g(w_2)\| \leq 2 \|w_1-w_2\|\).
\begin{proof}
    \begin{equation} \label{eq:L_lipschitz}
    \begin{aligned}
        \|g(w_1) - g(w_2)\| &= \E{s_t\sim\mu_\pi, s_{t+1}\sim P_\pi\mu_\pi}{\norm{\phi(s_t)(\phi(s_t) - \gamma\phi(s_{t+1}) )^\top(w_1 - w_2)}} \\
        &\le 2 \|w_1 - w_2\|.
    \end{aligned}    
    \end{equation}
\end{proof}

\paragraph{\(-g_t\) is \(2\)-Lipschitz.} \ie $\|g_t(w_1) - g_t(w_2)\| \leq 2 \|w_1-w_2\|$.
\begin{proof}
    \begin{equation} \label{eq:L_i_lipschitz}
\begin{aligned}
    \|g_t(w_1) - g_t(w_2)\| &= \norm{\left[ \phi(s_t)(\phi(s_t) - \gamma\phi(s_{t+1}) )^\top \right](w_1 - w_2)} \\
    &\le 2\|w_1 - w_2\|.
\end{aligned}    
    \end{equation}
\end{proof}

\paragraph{Equations 7 and 8 in \citet{mitra2025a}.}
\begin{equation}\label{eq:mitra7}
\norm{g_t(w)} \le 2\norm{w - w^*} + 4 \zeta,
\end{equation}
where \(\zeta = \max\{1, \norm{w^*}\}\).
Since \(g(w^*)=0\), we have
\begin{equation}\label{eq:mitra8}
\norm{g(w)} \le 2\norm{w - w^*}.
\end{equation}

\section{Proof for Constant step-size with mean-path update}\label{app:constant_mean}
We follow the vanilla TD(0) update proof in \citet{Bhandari2018AFT}.
\begin{restatable}{theorem}{ConstantMeanThm}\label{thm:constant_mean}
For the mean-path TD(0) update with a constant step-size \(\eta \le \frac{1 - \gamma}{8}\), the algorithm achieves the following convergence rate:
\[
\| w_{T} - w^*\|^2 \leq \exp{(- \eta(1-\gamma)\omega T)}\left[ \|w_1 - w^*\|_D^2 \right].
\]
Hence, to obtain accuracy $\|w_T - w^*\| \leq \epsilon$, we need $O\left( \left(\frac{1}{\omega}\right) \log\left(\frac{1}{\epsilon}\right) \right)$ gradient evaluations.
\end{restatable}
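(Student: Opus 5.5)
The plan is to follow the standard descent argument of \citet{Bhandari2018AFT} for the mean-path update $w_{t+1} = w_t + \eta\, g(w_t)$, where no stochasticity is present. First, expand the square exactly:
\begin{align*}
\norm{w_{t+1} - w^*}^2 = \norm{w_t - w^*}^2 + 2\eta\, g(w_t)^\top (w_t - w^*) + \eta^2 \norm{g(w_t)}^2 .
\end{align*}
This is the deterministic analogue of the one-step expansion in \cref{sec:exp_iid}. Since the mean-path direction $g(w)$ equals the expectation of $g_t(w)$ under stationary sampling, \cref{lemma:bound_prod_with_V} gives
\begin{align*}
g(w_t)^\top (w_t - w^*) \le -(1-\gamma)\norm{V_{w_t} - V_{w^*}}_D^2 .
\end{align*}
Note the sign: the paper's displayed inequality should be read as $g(w)^\top(w^*-w)\ge(1-\gamma)\norm{\cdot}_D^2$, consistent with how it is used in the red-term bound.

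For the quadratic term there are two options. One is \cref{lemma:bound_g_with_V} with $\sigma^2 = 0$: in the mean-path case, $g(w^*) = 0$ is exact, so there is no variance term. Then $\norm{g(w_t)}^2 \le 8\norm{V_{w_t}-V_{w^*}}_D^2$. Alternatively, one could use the 2-Lipschitz bound \cref{eq:L_lipschitz}, but that gives $4\norm{w_t-w^*}^2$, which is not in the $D$-norm. So I would instead invoke the mean-path version of Lemma 5 of \citet{Bhandari2018AFT}, namely $\norm{g(w)} \le 2\norm{V_w - V_{w^*}}_D$. Combining the two bounds,
\begin{align*}
\norm{w_{t+1}-w^*}^2 \le \norm{w_t-w^*}^2 - \bigl(2\eta(1-\gamma) - 8\eta^2\bigr)\norm{V_{w_t}-V_{w^*}}_D^2 .
\end{align*}
The condition $\eta\le (1-\gamma)/8$ gives $8\eta^2 \le \eta(1-\gamma)$, so the coefficient is at least $\eta(1-\gamma)$.

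Next, apply \cref{lemma:V_and_w} to lower bound $\norm{V_{w_t}-V_{w^*}}_D^2 \ge \omega\norm{w_t-w^*}^2$. This yields the contraction
\begin{align*}
\norm{w_{t+1}-w^*}^2 \le \bigl(1-\eta(1-\gamma)\omega\bigr)\norm{w_t-w^*}^2 .
\end{align*}
Unrolling from $t=1$ and using $1-x\le e^{-x}$ gives
\begin{align*}
\norm{w_T-w^*}^2 \le \exp\bigl(-\eta(1-\gamma)\omega\,(T-1)\bigr)\norm{w_1-w^*}^2 .
\end{align*}
The exponent stated in the theorem has $T$ rather than $T-1$, so the indexing convention needs reconciling: either count $T$ updates to reach $w_{T+1}$, or absorb the off-by-one. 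The right-hand side of the theorem is written with a $\norm{\cdot}_D^2$ subscript. I would treat that as a notational slip, or, if it is intended, handle it via the Euclidean norm plus \cref{lemma:V_and_w}. This point needs to be checked against the authors' intent rather than forced.

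The complexity claim then follows by solving $\exp(-\eta(1-\gamma)\omega T)\norm{w_1-w^*}^2\le\epsilon^2$ with $\eta=(1-\gamma)/8$. This gives $T = O\bigl(\tfrac{1}{(1-\gamma)^2\omega}\log\tfrac{1}{\epsilon}\bigr)$, which is $O(\omega^{-1}\log(1/\epsilon))$ when $\gamma$ is treated as a constant.

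The only real obstacle I anticipate is the quadratic term. It requires a bound on $\norm{g(w)}^2$ in the $D$-weighted value norm rather than the Euclidean norm, because otherwise the step-size condition would have to depend on $\omega$. I would obtain it by writing $g(w) = \Phi^\top D(T^\pi V_w - V_w)$ with $g(w^*)=0$. Then Cauchy--Schwarz with $\norm{\phi}\le 1$ and the $\gamma$-contraction of the Bellman operator $T^\pi$ in $\norm{\cdot}_D$ give $\norm{g(w)}\le (1+\gamma)\norm{V_w-V_{w^*}}_D \le 2\norm{V_w-V_{w^*}}_D$.
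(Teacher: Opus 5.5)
Your proposal is correct and follows the paper's proof essentially step for step: the exact one-step expansion, the inner-product bound of \cref{lemma:bound_prod_with_V}, a bound $\norm{g(w)}^2 \le 8\norm{V_w - V_{w^*}}_D^2$, the condition $\eta \le (1-\gamma)/8$, \cref{lemma:V_and_w}, and unrolling with $1-x \le e^{-x}$. Your caveats are also well founded. The paper's own unrolling writes $(1-\eta(1-\gamma)\omega)^T$ for $w_T$ started at $w_1$, which is the same off-by-one you flag, and its final bound is in the Euclidean norm, so the $D$ subscript is a slip; it also cites the i.i.d.\ bound \cref{lemma:bound_g_with_V} and silently drops $\sigma^2$, whereas your mean-path bound $\norm{g(w)} \le 2\norm{V_w - V_{w^*}}_D$ is what is actually needed.
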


\begin{proof}
With the update $w_{t+1} = w_t + \eta g(w_t)$
\[
\|w_{t+1} - w^*\|^2 = \|w_t - w^*\|^2 + 2\eta g(w_t)^\top (w_t - w^*) + \eta^2 \|g(w_t)\|^2.
\]
\begin{align*}
\|w_{t+1} - w^*\|^2
&= \|w_t - w^*\|^2 + 2\eta g(w_t)^\top (w_t - w^*) + \eta^2 \|g(w_t)\|^2 \\
&\leq \|w_t - w^*\|^2 - \left(2\eta (1 - \gamma) - 8\eta^2 \right) \|V_{w^*} - V_{w_t}\|_D^2 \tag{by \cref{lemma:bound_prod_with_V} \cref{lemma:bound_g_with_V}} \\
&\leq \|w_t - w^*\|^2 - \eta(1-\gamma) \|V_{w^*} - V_{w_t}\|_D^2 \tag{$\eta \leq (1-\gamma)/8$} \\
&\leq \|w_t - w^*\|^2 
- \eta(1-\gamma) \omega \|w_t - w^*\|^2. \tag{by \cref{lemma:V_and_w}} \\
& =  (1 - \eta(1-\gamma)\omega)\|w_t - w^*\|^2
\end{align*}
Recursing over \(t\in [T]\), we have
\begin{align*}
    \| w_{T} - w^*\|^2 
    \leq (1 - \eta(1-\gamma)\omega)^T \left[ \|w_1 - w^*\|^2 \right].
\end{align*}
Since $(1 - \eta(1-\gamma)\omega) \leq \exp{(- \eta(1-\gamma)\omega)}$,
\begin{align*}
    \| w_{T} - w^*\|^2 \leq \exp{(- \eta(1-\gamma)\omega T)}\|w_1 - w^*\|^2.
\end{align*}
Then for $\epsilon$ accuracy, the gradient computation is $O(\frac{1}{\omega}\log\left(\frac{1}{\epsilon}\right))$.
\end{proof}

\section{Proof for Constant step-size with i.i.d. sampling}\label{app:constant_iid}
In the following sections, we assume that we have access to i.i.d. observations from the stationary distribution. We follow the vanilla TD(0) update proof in \citet{Bhandari2018AFT}.
\begin{restatable}{theorem}{ConstantIidThm}\label{thm:constant_iid}
If the samples are i.i.d., let the constant step-size satisfy $ \eta \leq (1-\gamma)/8 $, and we have the following convergence rate:
\[
\mathbb{E} \left[ \|w_{T} - w^*\|^2 \right] \leq \exp{(- \eta(1-\gamma)\omega T)} \|w_1 - w^*\|^2 + \eta \frac{2\sigma^2}{(1-\gamma)\omega}.
\]
Then to obtain accuracy $\mathbb{E}\|w_T - w^*\| \leq \epsilon$, we need total gradient computation $O\left( \left(\frac{1}{\epsilon \omega^2}\right) \log\left(\frac{1}{\epsilon}\right) \right)$ with step-size satisfying \(\eta \le \min\left\{\frac{\omega (1 - \gamma)}{8}, \frac{\epsilon (1-\gamma)\omega}{2\sigma^2}\right\}\).
\end{restatable}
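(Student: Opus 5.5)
The plan is to follow the mean-path argument of \cref{thm:constant_mean} step by step, adding the variance term exactly as in the heuristic derivation of \cref{sec:exp_iid}. The first step is a one-step expansion conditional on $w_t$. Since $s_t$ is drawn i.i.d.\ from $\mu_\pi$ and is independent of $w_t$, we have
\[
\mathbb{E}\left[\|w_{t+1}-w^*\|^2 \mid w_t\right] = \|w_t-w^*\|^2 + 2\eta\, \mathbb{E}\left[g_t(w_t)^\top(w_t-w^*)\mid w_t\right] + \eta^2\, \mathbb{E}\left[\|g_t(w_t)\|^2\mid w_t\right].
\]
The cross term is handled with \cref{lemma:bound_prod_with_V}, which is stated for the sign convention that makes it a decrease: it contributes $-2\eta(1-\gamma)\|V_{w_t}-V_{w^*}\|_D^2$, exactly as it is used in the proof of \cref{thm:constant_mean}. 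For the last term, \cref{lemma:bound_g_with_V} gives the bound $2\eta^2\sigma^2 + 8\eta^2\|V_{w_t}-V_{w^*}\|_D^2$.

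Next, combine the two bounds. The coefficient on the value error becomes $-(2\eta(1-\gamma)-8\eta^2)$. Under $\eta\le(1-\gamma)/8$ this is at most $-\eta(1-\gamma)$, since $8\eta^2\le \eta(1-\gamma)$. We then apply \cref{lemma:V_and_w} to replace $\|V_{w_t}-V_{w^*}\|_D^2$ by $\omega\|w_t-w^*\|^2$. Taking total expectations gives the recursion
\[
e_{t+1}\le (1-\eta(1-\gamma)\omega)\, e_t + 2\eta^2\sigma^2,
\]
where $e_t=\mathbb{E}\|w_t-w^*\|^2$.

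Unrolling the recursion from $t=1$ to $T$ gives
\[
e_T\le (1-\eta(1-\gamma)\omega)^{T} e_1 + 2\eta^2\sigma^2\sum_{k\ge0}(1-\eta(1-\gamma)\omega)^k.
\]
The geometric sum is at most $1/(\eta(1-\gamma)\omega)$, so the noise term is at most $\frac{2\eta\sigma^2}{(1-\gamma)\omega}$. Using $1-x\le e^{-x}$ on the first term yields the stated bound; the index shift between $T$ and $T-1$ only weakens the exponent by one factor, which is harmless.

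For the complexity claim, the plan is to make each term at most $\epsilon/2$. The noise term requires $\eta\le \frac{\epsilon(1-\gamma)\omega}{4\sigma^2}$, which is the stated condition up to constants. The bias term then requires $T\ge \frac{1}{\eta(1-\gamma)\omega}\log\frac{2\|w_1-w^*\|^2}{\epsilon}$. Substituting the step size gives $T=O\!\left(\frac{\sigma^2}{\epsilon\omega^2(1-\gamma)^2}\log\frac1\epsilon\right)$, that is, $O\!\left(\frac{1}{\epsilon\omega^2}\log\frac1\epsilon\right)$ when only the dependence on $\epsilon$ and $\omega$ is tracked. No step is a real obstacle. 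The only care needed is (i) conditioning correctly, so that the i.i.d.\ assumption justifies applying the lemmas at the random point $w_t$, and (ii) checking that the step-size constraint suffices to absorb the $8\eta^2$ term.
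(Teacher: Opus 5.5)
Your proposal is correct and follows essentially the same route as the paper's proof: the one-step expansion, \cref{lemma:bound_prod_with_V} (with the sign convention fixed, as you note), \cref{lemma:bound_g_with_V}, absorbing the $8\eta^2$ term via $\eta \le (1-\gamma)/8$, \cref{lemma:V_and_w}, then unrolling with the geometric series bounded by $1/(\eta(1-\gamma)\omega)$ and $1-x\le e^{-x}$. The paper glosses over the same $T$ versus $T-1$ index shift you flag. When deriving the step-size and complexity, it also reads the accuracy target as $\mathbb{E}\|w_T-w^*\|^2 \le \epsilon$, exactly as you do.
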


\begin{proof}
With the update $w_{t+1} = w_t + \eta g_t(w_t)$
\[
\|w_{t+1} - w^*\|^2 = \|w_t - w^*\|^2 + 2\eta g_t(w_t)^\top (w_t - w^*) + \eta^2 \|g_t(w_t)\|^2.
\]
Taking expectation over the i.i.d.\ samples, we have
\begin{align*}
\mathbb{E} \left[ \|w_{t+1} - w^*\|^2 \right] 
&= \|w_t - w^*\|^2 + 2\eta \E{s_t\sim\mu_\pi}{ g_t(w_t)^\top (w_t - w^*) } + \eta^2 \mathbb{E} \left[ \|g_t(w_t)\|^2 \right] \\
&\leq \|w_t - w^*\|^2 - \left(2\eta (1 - \gamma) - 8\eta^2 \right) \|V_{w^*} - V_{w_t}\|_D^2 + 2\eta^2 \sigma^2 \tag{by \cref{lemma:bound_prod_with_V} \cref{lemma:bound_g_with_V}} \\
&\leq \|w_t - w^*\|^2 - \eta(1-\gamma) \|V_{w^*} - V_{w_t}\|_D^2 + 2\eta^2 \sigma^2 \tag{$\eta \leq (1-\gamma)/8$} \\
&\leq \|w_t - w^*\|^2
- \eta(1-\gamma) \omega \|w_t - w^*\|^2
+ 2\eta^2 \sigma^2. \tag{by \cref{lemma:V_and_w}}
\end{align*}
Taking expectation over \(t\in [T]\), we have
\begin{align*}
    \mathbb{E} \left[ \| w_{T} - w^*\|^2 \right] 
    \leq (1 - \eta(1-\gamma)\omega)^T \left[ \|w_1 - w^*\|^2 \right] 
    + 2 \eta^2 \sigma^2 \sum_{t=0}^\infty (1 - \eta(1-\gamma)\omega)^t
\end{align*}
Since $(1 - \eta(1-\gamma)\omega) \leq \exp{(- \eta(1-\gamma)\omega)}$,
\begin{align*}
    \mathbb{E} \left[ \| w_{T} - w^*\|^2 \right] \leq \exp{(- \eta(1-\gamma)\omega T)}\|w_1 - w^*\|^2 + \eta \frac{2\sigma^2}{(1-\gamma)\omega}.
\end{align*}
Then for $\epsilon$ accuracy, select \(\eta \le \min\left\{\frac{\omega (1 - \gamma)}{8}, \frac{\epsilon (1-\gamma)\omega}{2\sigma^2}\right\}\), the gradient computation is $O\left(\frac{1}{\eta(1 - \gamma)\omega}\log \left( \frac{1}{\epsilon} \right)\right) =O(\frac{1}{\epsilon \omega^2}\log\left(\frac{1}{\epsilon}\right))$.
\end{proof}

\clearpage
\section{Proof for Exponential step-size with i.i.d. sampling} \label{app:exp_iid}
We now complete the proof of \cref{thm:exp_iid}.
\ExpIidThm*
\begin{proof}
With the update $w_{t+1} = w_t + \eta_t g_t(w_t)$
\[
\|w_{t+1} - w^*\|^2 = \|w_t - w^*\|^2 + 2\eta_t g_t(w_t)^\top (w_t - w^*) + \eta_t^2 \|g_t(w_t)\|^2.
\]
Taking expectation over the i.i.d. samples, we have
\begin{align*}
\mathbb{E} \left[ \|w_{t+1} - w^*\|^2 \right] 
&= \|w_t - w^*\|^2 + 2\eta_t \E{s_t\sim\mu_\pi, s_{t+1}\sim P_\pi\mu_\pi}{ g_t(w_t)^\top (w_t - w^*) } + \eta_t^2 \E{s_t\sim\mu_\pi, s_{t+1}\sim P_\pi\mu_\pi}{ \|g_t(w_t)\|^2 } \\
&\leq \|w_t - w^*\|^2 - \left(2\eta_t (1 - \gamma) - 8\eta_t^2 \right) \|V_{w^*} - V_{w_t}\|_D^2 + 2\eta_t^2 \sigma^2 \tag{by \cref{lemma:bound_prod_with_V} \cref{lemma:bound_g_with_V}} \\
&\leq \|w_t - w^*\|^2 - \eta_t(1-\gamma) \|V_{w^*} - V_{w_t}\|_D^2 + 2\eta_t^2 \sigma^2 \tag{$\eta_0 \leq (1-\gamma)/8$} \\
&\leq \|w_t - w^*\|^2
- \eta_t(1-\gamma) \omega \|w_t - w^*\|^2
+ 2\eta_t^2 \sigma^2. \tag{by \cref{lemma:V_and_w}}
\end{align*}
Taking expectation over \(t\in [T]\) and unrolling, we have
\begin{align*}
    \mathbb{E} \left[ \| w_{T} - w^*\|^2 \right] 
    &\leq 
    \|w_0 - w^*\|^2 \prod_{t=1}^T\left( 1 - \eta_0\omega(1-\gamma)\alpha^t \right)
    + 2\sigma^2\eta_0^2 \sum_{t=1}^T\alpha^{2t}\prod_{i=t+1}^T\left( 1-\eta_0\omega(1-\gamma)\alpha^t \right) \\
    &\le \|w_0 - w^*\|^2 \exp{\left(-\eta_0\omega(1-\gamma)  \underbrace{\sum_{t=1}^T \alpha^t}_{X}\right)}
    + 2\sigma^2\eta_0^2 \underbrace{\sum_{t=1}^T\alpha^{2t}\exp{\left( -\eta_0\omega(1-\gamma)\sum_{i=t+1}^T \alpha^i \right)}}_{Y}.
\end{align*}
Applying \cref{lemma:X-bound} to the first term, we have
\begin{align*}
    &\|w_0 - w^*\|^2 \exp{\left(-\eta_0\omega(1-\gamma)\sum_{t=1}^T \alpha^t\right)}  \\
    \le& \|w_0 - w^*\|^2 \exp\br{-\eta_0\omega(1-\gamma) \br{\frac{\alpha T}{\ln(T)} - \frac{1}{\ln(T)}}} \\
    \le& \|w_0 - w^*\|^2 \exp\br{\eta_0\omega(1-\gamma)\frac{1}{\ln(T)}}
    \exp\left( -\eta_0\omega(1-\gamma)\frac{\alpha T}{\ln (T)} \right) \\
    \le& \|w_0 - w^*\|^2 e
    \exp\left( -\eta_0\omega(1-\gamma)\frac{\alpha T}{\ln (T)} \right) \tag{\(\eta_0\le 1, \omega\le 1, (1-\gamma)\le 1, \frac{1}{\ln(T)}\le 1\) when \(T\ge 3\), thus \(\exp\br{\eta_0\omega(1-\gamma)\frac{1}{\ln(T)}} \le e\)}
\end{align*}
And applying \cref{lemma:Y-bound} to the second term, we have
\begin{align*}
    &\sum_{t=1}^T\alpha^{2t}\exp{\left( -\eta_0\omega(1-\gamma)\sum_{i=t+1}^T \alpha^i \right)} \\
    \le& \frac{4\exp\br{\eta_0\omega(1-\gamma)\frac{1}{\ln(T)}}}{e^2   \left(\eta_0 \omega (1-\gamma)\right)^2}
    \frac{\ln^2(T)}{\alpha^2T} \\
    \le& \frac{4e}{e^2   \left(\eta_0 \omega (1-\gamma)\right)^2}
    \frac{\ln^2(T)}{\alpha^2T} \tag{\(\exp\br{\eta_0\omega(1-\gamma)\frac{1}{\ln(T)}} \le e\)} \\
    =& \frac{4}{e   \left(\eta_0 \omega (1-\gamma)\right)^2}
    \frac{\ln^2(T)}{\alpha^2T}
\end{align*}
Putting two terms together, we have the convergence result:
\begin{align*}
    \mathbb{E} \left[ \| w_{T} - w^*\|^2 \right] 
    \le
    \|w_0 - w^*\|^2e\exp\left( -\eta_0\omega(1-\gamma)\frac{\alpha T}{\ln (T)} \right)
    + \frac{8\sigma^2}{e   \left(\omega (1-\gamma)\right)^2}
    \frac{\ln^2(T)}{\alpha^2T}.
\end{align*}
\end{proof}

\section{Proof for Exponential step-size with Markovian sampling} \label{app:exp_markov}
Here we analyze standard TD(0) and its regularized variant with an exponential step-size under Markovian sampling. We include standard TD(0) for reference and focus on regularized TD(0) because it requires no problem-dependent parameters. 
The regularized update at iteration $t$ is
\begin{align*}
    g_t^r(w) &= \phi(s_t) \left( r(s_t) + (\gamma \phi(s_{t+1}) - \phi(s_t))^\top w \right) - \lambda w \\
    &= g_t(w) - \lambda w.
\end{align*}
The corresponding mean-path regularized update is \(g^r(w) = \E{s_t\sim\mu_\pi, s_{t+1}\sim P_\pi\mu_\pi}{\phi(s_t) \left( r(s_t) + (\gamma \phi(s_{t+1}) - \phi(s_t))^\top w \right)} - \lambda w = g(w) - \lambda w\), where \(P_\pi\) is the transition matrix induced by \(\pi\), and \(\mu_\pi\) is the stationary distribution. We define \(w_r^*\) as the fixed point of regularized TD(0) update satisfying \(g^r(w_r^*) = 0\).

It is clear that the standard TD(0) update involving \(g_t(w)\) is a special case of \(g_t^r(w)\) with \(\lambda = 0\). We first state the properties shared by both methods that will be used in the proofs, and then highlight the points where their analyses differ.

\begin{restatable}{lemma}{RegConvexMeanLemma} \label{lemma:reg_convex_mean}
    [Lemma 3 from \citet{Bhandari2018AFT} with regularized update]
    For regularized TD(0) with mean-path sampling, the following inequality holds:
    \begin{equation*}
        \langle g^r(w), w_r^* - w \rangle \ge \sbr{(1-\gamma)\omega+\lambda} \norm{w - w_r^*}^2.
    \end{equation*}
    For standard TD(0) corresponding to \(\lambda = 0\), \( \langle g^r(w), w_r^* - w \rangle \ge (1-\gamma)\omega \norm{w - w_r^*}^2\).
\end{restatable}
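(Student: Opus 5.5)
The plan is to split the regularized direction into its unregularized part and the regularizer, and to handle each piece separately using facts already established. Write $g^r(w) = g(w) - \lambda w$ and use the fixed-point identity $g^r(w_r^*) = 0$. Then
\begin{align*}
g^r(w) = g^r(w) - g^r(w_r^*) = \big(g(w) - g(w_r^*)\big) - \lambda (w - w_r^*).
\end{align*}
Since $g$ is affine, $g(w) - g(w_r^*) = -A(w - w_r^*)$ with $A := \Phi^\top D(I - \gamma P_\pi)\Phi$. Indeed, $g(w) = \Phi^\top D r - A w$, where the expectation over $s\sim\mu_\pi$, $s'\sim P_\pi(\cdot\mid s)$ gives exactly this matrix. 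Taking the inner product with $w_r^* - w$ yields
\begin{align*}
\langle g^r(w), w_r^* - w\rangle = (w - w_r^*)^\top A (w - w_r^*) + \lambda \norm{w - w_r^*}^2 .
\end{align*}
The regularizer therefore contributes exactly $\lambda\norm{w-w_r^*}^2$. It remains to show that the quadratic form of $A$ at any vector $x$ is at least $(1-\gamma)\omega\norm{x}^2$.

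For this step I would reuse the argument behind Lemma 3 of \citet{Bhandari2018AFT} (\cref{lemma:bound_prod_with_V}), applied to an arbitrary vector $x = w - w_r^*$ rather than $w - w^*$. That lemma's proof in fact shows $x^\top A x \ge (1-\gamma)\norm{\Phi x}_D^2$ for every $x$, since it never uses that $w^*$ is the fixed point. Concretely, let $V = \Phi x$. Then
\begin{align*}
x^\top A x = \norm{V}_D^2 - \gamma\, \mathbb{E}_{s\sim\mu_\pi,\, s'\sim P_\pi(\cdot\mid s)}\big[V(s)V(s')\big].
\end{align*}
By Cauchy--Schwarz, together with stationarity of $\mu_\pi$ (which gives $\mathbb{E}[V(s')^2] = \norm{V}_D^2$), we get $\mathbb{E}[V(s)V(s')] \le \norm{V}_D^2$. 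Hence $x^\top A x \ge (1-\gamma)\norm{V}_D^2$.

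Finally, \cref{lemma:V_and_w} converts value-space error to parameter error: $\norm{\Phi x}_D^2 = \norm{x}_\Sigma^2 \ge \omega\norm{x}^2$. Combining the pieces gives the claimed lower bound $[(1-\gamma)\omega + \lambda]\norm{w - w_r^*}^2$. Setting $\lambda = 0$ recovers $w_r^* = w^*$ and the standard TD(0) statement.

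The main obstacle is the middle step. The earlier lemma is phrased relative to $w^*$, while here we center at $w_r^*$, which is not a fixed point of $g$. The decomposition via $g^r(w_r^*)=0$ and the affine structure sidesteps this, since only differences $g(w) - g(w_r^*)$ appear. What must be checked carefully is that the Cauchy--Schwarz/stationarity argument holds for arbitrary $x$, and that the expectation defining $g$ is over the stationary pair $(s,s')$, so that $s'$ is also distributed as $\mu_\pi$.
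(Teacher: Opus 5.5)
Your proposal is correct and follows essentially the same route as the paper. The paper also uses $g^r(w_r^*)=0$ to write $g^r(w)=\big(g(w)-g(w_r^*)\big)-\lambda(w-w_r^*)$, and it expresses the mean-path part through $\xi_r,\xi_r'$; these are your $-V(s),-V(s')$, and your matrix $A$ is just a compact notation for the same expectation. It then applies Cauchy--Schwarz with the common stationary marginal of $s$ and $s'$, followed by $\Sigma\succeq\omega I$, exactly as you do.
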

\begin{proof}
    Define \(\xi_r = (w_r^* - w)^\top \phi(s)\) and \(\xi_r^\prime = (w_r^* - w)^\top \phi(s_{t+1}) \). Since both $s$ and $s'$ are sampled from the stationary distribution, \(\xi_r\) and \(\xi_r^\prime\) have the same marginal distribution. Using the expression for \(g^r\), 
    \begin{align*}
        g^r(w) 
        &= g^r(w) - g^r(w_r^*) \tag{since \(g^r(w_r^*) = 0\)} \\
        &= g(w) - g(w_r^*) - \lambda (w - w_r^*) \tag{by definition of $g^r$} \\
        &= \E{s_t\sim\mu_\pi}{\phi(s_t)(\gamma \phi(s_{t+1}) - \phi(s_t))(w - w_r^*)} - \lambda (w - w_r^*) \tag{by definition of $g$} \\
        &= \E{s_t\sim\mu_\pi}{\phi(s)(\xi_r - \gamma \xi_r^\prime)} - \lambda (w - w_r^*).
    \end{align*}
    Therefore
    \begin{align*}
        \langle g^r(w), w_r^*-w \rangle 
        &= \E{s_t\sim\mu_\pi, s_{t+1}\sim P_\pi\mu_\pi}{\xi_r (\xi_r - \gamma \xi_r^\prime)} + \lambda \norm{w - w_r^*}^2 \tag{since $\xi_r = \langle \phi(s), w^*_r - w \rangle$} \\
        &= \E{s_t\sim\mu_\pi}{\xi_r^2} - \gamma \E{s_t\sim\mu_\pi, s_{t+1}\sim P_\pi\mu_\pi}{\xi_r \xi_r^\prime} + \lambda \norm{w - w_r^*}^2 \\
        &\ge \E{s_t\sim\mu_\pi}{\xi_r^2} - \gamma \sqrt{\E{s_t\sim\mu_\pi}{\xi_r^2}}\sqrt{\E{s_{t+1}\sim \mu_\pi}{(\xi_r^\prime)^2}} 
            + \lambda \norm{w - w_r^*}^2 
            \tag{by Cauchy-Schwarz}\\
        &= \E{s_t\sim\mu_\pi}{\xi_r^2} - \gamma \E{s_t\sim\mu_\pi}{\xi_r^2}
            + \lambda \norm{w - w_r^*}^2 \tag{since $\xi_r$ and $\xi_r^\prime$ have the same marginal distribution} \\
        &= (1-\gamma)(w_r^* - w)^\top \E{s_t\sim\mu_\pi}{\phi(s_t)\phi(s_t)^\top} (w_r^* - w)
            + \lambda \norm{w - w_r^*}^2 \tag{by the definition of $\xi_r$} \\
        &\ge \br{(1-\gamma)\omega + \lambda}\norm{w - w_r^*}^2. \tag{since $\E{s_t\sim\mu_\pi}{\phi(s_t) \phi(s_t)^\top} \succeq \omega I_d$}
    \end{align*}
\end{proof}

\begin{restatable}{lemma}{RegVarMeanLemma} \label{lemma:reg_var_mean}
    [Lemma 4 from \citet{Bhandari2018AFT} with regularized update]
    For regularized TD(0) with mean-path sampling, 
    \begin{equation*}
        \norm{g^r(w)}^2 \le (8+2\lambda^2) \norm{w - w_r^*}^2.
    \end{equation*}
    For standard TD(0) corresponding to \(\lambda = 0\), \( \norm{g_r(w)}^2 \le 8 \norm{w - w_r^*}^2 \).
\end{restatable}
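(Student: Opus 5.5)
Since $g^r(w_r^*)=0$, I will write $g^r(w)$ as a difference and split it into its unregularized and regularization parts:
\begin{align*}
g^r(w) = g^r(w) - g^r(w_r^*) = \br{g(w) - g(w_r^*)} - \lambda\br{w - w_r^*}.
\end{align*}
Then $\norm{a+b}^2 \le 2\norm{a}^2 + 2\norm{b}^2$ gives
\begin{align*}
\norm{g^r(w)}^2 \le 2\norm{g(w)-g(w_r^*)}^2 + 2\lambda^2\norm{w-w_r^*}^2 .
\end{align*}
The regularization part already contributes the $2\lambda^2$ in the claimed constant. It remains to show $\norm{g(w)-g(w_r^*)}^2 \le 4\norm{w-w_r^*}^2$.

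For the first part I will use the Lipschitz property of $g$ recorded in~\cref{eq:L_lipschitz}, namely $\norm{g(w_1)-g(w_2)}\le 2\norm{w_1-w_2}$. To be self-contained, I will re-derive it.
\begin{itemize}
\item Write $g(w)-g(w_r^*) = \E{}{\phi(s)(\gamma\phi(s')-\phi(s))^\top}(w-w_r^*)$ with $s\sim\mu_\pi$ and $s'\sim P_\pi(\cdot\mid s)$.
\item Apply Jensen's inequality to move the norm inside the expectation.
\item Use $\norm{\phi(s)}\le 1$ from~\cref{ass:phi} and $\gamma\le 1$, so the matrix $\phi(s)(\gamma\phi(s')-\phi(s))^\top$ has operator norm at most $\norm{\phi(s)}\br{\gamma\norm{\phi(s')}+\norm{\phi(s)}}\le 2$.
\end{itemize}
Squaring yields the factor $4$, so after the factor $2$ from the split this part contributes $8\norm{w-w_r^*}^2$. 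Adding the two parts gives $(8+2\lambda^2)\norm{w-w_r^*}^2$.

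The case $\lambda=0$ is the same argument with $w_r^*=w^*$. There the split is unnecessary: one gets directly $\norm{g(w)}^2\le 4\norm{w-w^*}^2$, which is stronger than the stated $8\norm{w-w^*}^2$.

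There is no real obstacle. The only point needing care is that $w_r^*$ is the fixed point of the regularized map, not the unregularized one. So I must subtract $g^r(w_r^*)=0$ rather than use $g(w_r^*)=0$, which is false. I must then apply Lipschitzness of $g$ between $w$ and $w_r^*$, which holds because $g$ is affine.
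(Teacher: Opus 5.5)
Your proof is correct. Its outer structure matches the paper's exactly: subtract $g^r(w_r^*)=0$, split off $-\lambda(w-w_r^*)$, and apply $\norm{a+b}^2\le 2\norm{a}^2+2\norm{b}^2$.

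You differ from the paper only in how you prove $\norm{g(w)-g(w_r^*)}^2\le 4\norm{w-w_r^*}^2$.
\begin{itemize}
\item You use the $2$-Lipschitz property, via Jensen and the operator-norm bound $\norm{\phi(s)(\gamma\phi(s')-\phi(s))^\top}\le 2$.
\item The paper sets $\xi_r=(w_r^*-w)^\top\phi(s)$ and writes $g(w)-g(w_r^*)=\E{}{\phi(s)(\xi_r-\gamma\xi_r')}$. It applies Cauchy--Schwarz to get $\E{}{\norm{\phi(s)}^2}\,\E{}{(\xi_r-\gamma\xi_r')^2}$. It then uses stationarity ($\xi_r$ and $\xi_r'$ have the same marginal) to bound this by $2(1+\gamma^2)\norm{w-w_r^*}_\Sigma^2$. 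Only at the end does it use $\Sigma\preceq I$.
\end{itemize}

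Your route is shorter and holds pointwise in $(s,s')$, so it never needs $s'$ to be distributed as $\mu_\pi$. The paper's route passes through a sharper value-space bound in the $\Sigma$-norm. That is the form of Lemma 4 in Bhandari et al., which pairs naturally with the value-space monotonicity of \cref{lemma:bound_prod_with_V}. Here, though, the sharper bound is immediately relaxed by $\Sigma\preceq I$, so both routes give the same constant $8+2\lambda^2$.

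Your remark about $\lambda=0$ is also right. There $g(w^*)=0$, so no split is needed and you get the constant $4$, matching \cref{eq:mitra8}. The paper's stated $8$ simply inherits the factor $2$ from the split.
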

\begin{proof}
    Similar to the proof of \cref{lemma:reg_convex_mean}, define \(\xi_r = (w_r^* - w)^\top \phi(s_t)\) and \(\xi_r^\prime = (w_r^* - w)^\top \phi(s_{t+1}) \), and note that \(\xi_r\) and \(\xi_r^\prime\) have the same marginal distribution.
    \begin{align*}
        &\norm{g^r(w)}^2 \\
        =& \norm{g^r(w) - g^r(w_r^*)}^2 \tag{since $g^r(w_r^*) = 0$} \\
        =& \norm{g(w) - g(w_r^*) - \lambda\br{w-w_r^*}}^2 \tag{by the definition of $g^r$} \\
        \le& 2\norm{g(w) - g(w_r^*)}^2 + 2\lambda^2\norm{w - w_r^*}^2 \tag{since $(a+b)^2 \leq 2 a^2 + 2 b^2$} \\
        =& 2\norm{\E{s_t\sim\mu_\pi}{\phi(s_t)(\xi_r - \gamma \xi_r^\prime)}}^2 + 2\lambda^2\norm{w - w_r^*}^2 \tag{by the definition of $g$} \\
        \le& 2 \br{ \sqrt{\E{s_t\sim\mu_\pi}{\norm{\phi(s_t)}^2}} \sqrt{\E{s_t\sim\mu_\pi}{ (\xi_r - \gamma \xi_r^\prime)^2 }}}^2 + 2\lambda^2\norm{w - w_r^*}^2 \tag{by Cauchy-Schwarz} \\
        =& 2\E{s_t\sim\mu_\pi}{\norm{\phi(s_t)}^2} \E{s_t\sim\mu_\pi}{ (\xi_r - \gamma \xi_r^\prime)^2 } + 2\lambda^2\norm{w - w_r^*}^2 \\
        \le& 2\br{ 2\E{s_t\sim\mu_\pi}{\xi_r^2} + 2\gamma^2 \E{s_t\sim\mu_\pi}{(\xi_r^\prime)^2}} + 2\lambda^2\norm{w - w_r^*}^2 
            \tag{since \(\norm{\phi(s_t)}^2 \le 1\) and $(a+b)^2 \leq 2 a^2 + 2b^2$}\\
        =& 2 \br{2(1+\gamma^2) (w_r^* - w)^\top \E{s_t\sim\mu_\pi}{\phi(s_t)\phi(s_t)^\top} (w_r^* - w) } 
            + 2\lambda^2\norm{w - w_r^*}^2 
            \tag{since $\xi_r$ and $\xi_r^\prime$ have the same marginal distribution} \\
        \le& (8+2\lambda^2) \norm{w - w_r^*}^2.
            \tag{since \(\norm{\phi}^2 \le 1\) and $\gamma \leq 1$}
    \end{align*}
\end{proof}

\begin{restatable}{lemma}{RegOptDistanceLemma} \label{lemma:reg_opt_distance}
    The distance between the fixed points of regularized and standard TD(0) is bounded by:
    \begin{equation*}
        \norm{w^* - w_r^*} \le \frac{\lambda \norm{w^*}}{\lambda+\omega(1-\gamma)}.
    \end{equation*}
    For standard TD(0) corresponding to \(\lambda = 0\), \( \norm{w^* - w_r^*} = 0 \).
\end{restatable}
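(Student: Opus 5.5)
We compare the two fixed-point equations and then use the strong-monotonicity estimate of \cref{lemma:reg_convex_mean}. By definition $g(w^*)=0$ and $g^r(w_r^*)=g(w_r^*)-\lambda w_r^*=0$. Hence $g(w_r^*)=\lambda w_r^*$.

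Set $\Delta := w^*-w_r^*$. Since $g$ is affine, $g(w_r^*)-g(w^*) = -A(w_r^*-w^*) = A\Delta$, where $A := \E{}{\phi(s)(\phi(s)-\gamma\phi(s'))^\top}$ under the stationary distribution. The computation in the proof of \cref{lemma:reg_convex_mean} with $\lambda=0$ shows
\[
\langle g(w)-g(w^*),\, w^*-w\rangle = \Delta_w^\top A\,\Delta_w \ \ge\ (1-\gamma)\omega\norm{\Delta_w}^2,\qquad \Delta_w := w-w^*,
\]
since $g(w)-g(w^*)=-A(w-w^*)$. (It used only the Cauchy--Schwarz step and stationarity of $s,s'$, not the value of the fixed point.) Taking $w=w_r^*$ gives $\Delta^\top A\Delta \ge (1-\gamma)\omega\norm{\Delta}^2$.

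Next, take the inner product of $A\Delta = g(w_r^*) = \lambda w_r^* = \lambda(w^*-\Delta)$ with $\Delta$:
\[
\Delta^\top A\Delta + \lambda\norm{\Delta}^2 = \lambda\langle w^*,\Delta\rangle \le \lambda\norm{w^*}\norm{\Delta}.
\]
Combining with the lower bound gives $\big((1-\gamma)\omega+\lambda\big)\norm{\Delta}^2 \le \lambda\norm{w^*}\norm{\Delta}$. If $\Delta=0$ the claim is trivial; otherwise we divide by $\norm{\Delta}$ to get the stated bound. For $\lambda=0$ the right-hand side vanishes, so $\Delta=0$.

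The only delicate step is the sign bookkeeping: we must check that $g(w_r^*)-g(w^*)=A\Delta$ with the convention $g(w)=b-Aw$, so that $\Delta^\top A\Delta$, rather than its negative, appears. We must also check that the monotonicity bound of \cref{lemma:reg_convex_mean} holds for arbitrary pairs of points. It does, because the lemma's argument only lower-bounds the quadratic form $x^\top A x$ for an arbitrary vector $x$.
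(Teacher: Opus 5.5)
Your proof is correct and takes essentially the paper's route. The paper applies the one-point monotonicity bound (Lemma 3 of Bhandari et al.) at \(w = w_r^*\) to get \((w^*-w_r^*)^\top g(w_r^*) \ge (1-\gamma)\omega\|w^*-w_r^*\|^2\), substitutes \(g(w_r^*)=\lambda w_r^*\), splits \(w_r^* = w^* - \Delta\), and concludes with Cauchy--Schwarz; this is your chain without the explicit matrix \(A\), and your separate treatment of \(\Delta = 0\) before dividing is a small point the paper leaves implicit.
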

\begin{proof}
    By \cref{lemma:bound_prod_with_V},
    \begin{align*}
        (w^* - w)^\top g(w) &\ge (1-\gamma) \norm{V_w - V_{w^*}}_D^2 \\
        &\ge (1-\gamma)\omega \norm{w - w^*}^2 
            \tag{since \(\norm{V_w - V_{w^*}}_D^2 = \norm{\Phi^\top (w - w^*)}_D^2 \ge \omega \norm{w - w^*}^2\)}
    \end{align*}
    Substituting \(w = w_r^*\), we obtain
    \begin{align*}
        (w^* - w_r^*)^\top g(w_r^*) \ge (1-\gamma)\omega \norm{w_r^* - w^*}^2.
    \end{align*}
    By the definition of \(w_r^*\), we have \(g_r(w_r^*) = g(w_r^*) - \lambda w_r^* = 0\), thus \(g(w_r^*) = \lambda w_r^*\). Using this relation with the above inequality,
    \begin{align*}
        &\lambda (w^* - w_r^*)^\top w_r^* \ge (1-\gamma)\omega \norm{w_r^* - w^*}^2 \\
        \Longrightarrow \quad &\lambda (w^* - w_r^*)^\top (w_r^* - w^*) + \lambda (w^* - w_r^*)^\top w^* \ge (1-\gamma)\omega \norm{w_r^* - w^*}^2 \tag{adding/subtracting} \\
        \Longrightarrow \quad &-\lambda \norm{w^* - w_r^*}^2 + \lambda (w^* - w_r^*)^\top w^* \ge (1-\gamma)\omega \norm{w_r^* - w^*}^2 \\
        \Longrightarrow \quad &\sbr{\lambda + \omega(1-\gamma)} \norm{w^* - w_r^*}^2 \le \lambda (w^* - w_r^*)^\top w^* \leq \lambda \norm{w^* - w_r^*}   \norm{w^*} \tag{by Cauchy-Schwarz}\\
        \Longrightarrow \quad &\norm{w^* - w_r^*} \le \frac{\lambda \norm{w^*}}{\lambda+\omega(1-\gamma)}.
    \end{align*}
\end{proof}

The following two lemmas are analogous to Equations 7 and 8 in \citet{mitra2025a}.
\RegMitraSev*
\begin{proof}
    \begin{align*}
    \norm{g^r_t(w)} &= \norm{g^r_t(w) - g^r_t(w_r^*) + g^r_t(w_r^*)} \tag{add/subtract} \\
    &= \norm{g_t(w) - g_t(w_r^*) - \lambda (w - w_r^*) + g^r_t(w_r^*)} \\
    &\le \norm{g_t(w) - g_t(w_r^*) - \lambda (w - w_r^*)} + \norm{g^r_t(w_r^*)} \tag{triangle inequality} \\
    &= \norm{\br{\phi(s_t)(\gamma \phi(s_{t+1}) - \phi(s_t))^\top - \lambda}  (w - w_r^*)}
    + \norm{\phi(s_t) \left( r(s_t) + (\gamma \phi(s_{t+1}) - \phi(s_t))^\top w_r^* \right) - \lambda w_r^*}  \\
    &\le \norm{\br{\phi(s_t)(\gamma \phi(s_{t+1}) - \phi(s_t))^\top }  (w - w_r^*)}
    + \lambda \norm{w - w_r^*} \\
    &\quad + \norm{\phi(s_t)r(s_t)} + \norm{ \phi(s_t)\left( (\gamma \phi(s_{t+1}) - \phi(s_t))^\top w_r^* \right)} 
    + \lambda \norm{w_r^*} 
    \tag{\(\norm{a+b} \le \norm{a} + \norm{b}\)}\\
    &\le \norm{\br{\phi(s_t)(\gamma \phi(s_{t+1}) - \phi(s_t))^\top } }\norm{w - w_r^*}
    + \lambda \norm{w - w_r^*} \\
    &\quad + \norm{\phi(s_t)r(s_t)} + \norm{ \phi(s_t) (\gamma \phi(s_{t+1}) - \phi(s_t))} \norm{w_r^* } 
    + \lambda \norm{w_r^*} 
    \tag{by Cauchy-Schwarz}\\
    &\le (2+\lambda)\norm{w - w_r^*} + (3+\lambda) \zeta,
    \tag{\(r(s_t) \le 1, \norm{\phi(s_t)}\le 1\), \cref{eq:L_i_lipschitz}}
    \end{align*}
    where \(\zeta = \max\{1, \norm{w_r^*}\}\).
\end{proof}
The corresponding bound on the mean-path update is
\RegMitraEig*
\begin{proof}
    \begin{align*}
    \norm{g^r(w)} &= \norm{g^r(w) - g^r(w_r^*)} \tag{\(g^r(w_r^*) = 0\)} \\
    &= \norm{ (g(w) - g(w_r^*)) - \lambda (w - w_r^*)} 
    \tag{by definition of \(g^r(w)\)}\\
    &\le \norm{g(w) - g(w_r^*)} + \lambda \norm{w - w_r^*} 
    \tag{triangle inequality}\\
    &\le (2+\lambda)\norm{w - w_r^*}.
    \tag{by \cref{eq:L_lipschitz}}
    \end{align*}
\end{proof}

In the Markovian case, we also need the definition of mixing time. We restate it below and provide a proof.
\RegMixingTimeDef*
\RegMixingTimeBound*
\begin{proof}
    Let \(\norm{g^r_t(w)}_{\infty}\) denote the supremum of \(\norm{g^r_t(w)}\) over states.
    \begin{align*}
        \norm{g^r_t(w)}_{\infty} 
        &= \max_{s_t\in \mathcal{S}} \norm{g^r_t(w)} \\
        &= \max_{s_t\in \mathcal{S}} \norm{\br{r(s_t) + \gamma w^\top \phi(s_t^\prime) - w^\top \phi(s_t) } \phi(s_t) - \lambda w} 
        \tag{Definition of \(g^r_t(w)\)} \\
        &\le (2+\lambda) \norm{w} + 1 \tag{since $r(s_t) \le 1, \norm{\phi(s_t)}\le 1,   \gamma \leq 1$}.
    \end{align*}
    With \(\tau_{\delta}  = \min\{ t\in \mathbb{N}_0 \mid m\rho^t \le \delta \}\), and $P_\pi^t \mu_0$ representing the probability distribution over the states after $t$ steps with initial state distribution $\mu_0$, we have
    \begin{align*}
        \norm{\E{s_t\sim P_\pi^t \mu_0}{g^r_t(w)} - g^r(w)} 
        &= \norm{\E{s_t\sim P_\pi^t \mu_0}{g^r_t(w)} - \E{s_t\sim \mu_\pi}{g^r_t(w)}}    
        \tag{Definition of \(g^r_t(w)\) and \(g^r(w)\)} \\
        &= \norm{\sum_{s_t\in \mathcal{S}} g^r_t(w) \br{\br{P_\pi^t\mu_0}(s_t) - \mu_\pi(s_t)}} \\
        &\le \sum_{s_t\in \mathcal{S}} \norm{g^r_t(w) \br{\br{P_\pi^t\mu_0}(s_t) - \mu_\pi(s_t)}} 
        \tag{triangle inequality}\\
        & \le \norm{g^r_t(w)}_\infty \sum_{s_t\in \mathcal{S}} | \br{P_\pi^t\mu_0}(s_t) - \mu_\pi(s_t)| 
        \tag{\(\norm{g^r_t(w)} \le \norm{g^r_t(w)}_\infty\)}\\                
        &\le 2 \norm{g^r_t(w)}_{\infty}   \sup_{\mu_0} d_{\text{TV}}\br{P_\pi^t \mu_0, \mu_\pi}
        \tag{by the definition of the total variation distance, and taking the $\sup$ over $s_0$} \\
        &\le 2 \norm{g^r_t(w)}_{\infty}   m\rho^t \tag{using~\cref{eq:geo_decay}}\\
        &\le 2m\rho^t \br{(2+\lambda) \norm{w} + 1} \tag{using the bound on $\norm{g^r_t(w)}_{\infty}$} \\
        &\le 2(2+\lambda)\delta \br{\norm{w} + 1} \tag{using the definition of $t_\delta$}
    \end{align*}
\end{proof}

\begin{restatable}{lemma}{RegTaumBound} \label{lemma:reg_taum_bound} 
    For $\taum$ defined in~\cref{eq:reg_taum_def},
    \[
        \taum = a\ln(T')+b,
    \]
    where \(a\) and \(b\) are constants with \(a = \frac{1}{\ln(1/\rho)}\), \(b = \frac{\ln\br{2(2+\lambda) m}}{\ln(1/\rho)}\) and $T' = \frac{T}{\eta_0}$.
\end{restatable}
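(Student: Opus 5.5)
This is a direct computation from the definition of $\tau_\delta$ in \cref{def:reg_mixing_time} with the choice of $\delta$ in \cref{eq:reg_taum_def}. I would first solve the defining inequality $m\rho^t \le \delta$ for $t$. Taking logarithms and using $\ln\rho<0$, the inequality is equivalent to
\[
t \ge \frac{\ln(m/\delta)}{\ln(1/\rho)} .
\]
Next I would substitute $\delta = \frac{\eta_0}{2(2+\lambda)T}$. This gives $m/\delta = 2(2+\lambda)m\,\frac{T}{\eta_0} = 2(2+\lambda)m\,T'$.

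I would then split the logarithm: $\ln(m/\delta) = \ln(T') + \ln\br{2(2+\lambda)m}$. Dividing by $\ln(1/\rho)$ gives the threshold $a\ln(T') + b$, with $a = \frac{1}{\ln(1/\rho)}$ and $b = \frac{\ln(2(2+\lambda)m)}{\ln(1/\rho)}$, as in the statement.

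The one point needing care is that $\tau_\delta$ is defined as a minimum over integers $t\in\mathbb{N}_0$. Strictly, this makes $\taum = \lceil a\ln(T')+b\rceil$, or $0$ if the threshold is negative. The paper already treats $\taum$ as the real number $\frac{\ln(2(2+\lambda)Tm/\eta_0)}{\ln(1/\rho)}$ in the discussion after \cref{eq:reg_taum_def}, so I would follow that convention and identify $\taum$ with the threshold itself.

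I would also add a remark that all later uses only need the upper bound $\taum \le a\ln(T')+b+1$. The extra $+1$ can be absorbed into the constants: the assumption $\ln(T)\ge\max\{a,b\}$ keeps everything at order $\ln T$. So the rounding does not affect the subsequent lemmas. I do not expect any real obstacle beyond stating this rounding convention explicitly.
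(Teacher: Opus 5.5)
Your proposal is correct and follows the same route as the paper's proof: solve $m\rho^t\le\delta$ for $t$, substitute $\delta=\frac{\eta_0}{2(2+\lambda)T}$ so that $m/\delta=2(2+\lambda)mT'$, and split the logarithm into $a\ln(T')+b$. You also point out that the integer minimum in \cref{def:reg_mixing_time} actually gives the ceiling of this threshold, which the paper glosses over by treating $\taum$ as real-valued; however, absorbing the resulting $+1$ in later bounds such as \cref{lemma:Geo_alpha_bound} would strictly require replacing $\max\{a,b\}$ by something like $\max\{a,b,1\}$, because $a$ and $b$ become small when $\rho$ is small.
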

\begin{proof}
    By \cref{eq:geo_decay}, \cref{lemma:reg_mixing_time_bound} and the definition of $\taum$ in~\cref{eq:reg_taum_def}, we obtain the expression for \(\taum\):
    \begin{align*}
        \taum &= \frac{\ln\br{2(2+\lambda)T m / \eta_0}}{\ln(1/\rho)} \\
        &= \frac{\ln(T')}{\ln(1/\rho)} + \frac{\ln\br{2(2+\lambda) m}}{\ln(1/\rho)} \tag{defining $T' = \frac{T}{\eta_0}$}
    \end{align*}
\end{proof}

With this expression for \(\taum\), we prove the following lemmas.
\begin{restatable}{lemma}{Geo_alpha_bound} \label{lemma:Geo_alpha_bound} 
    If \(\alpha := \frac{1}{T}^{\nicefrac{1}{T}}\), \(T \ge 3\), \(\eta_0 \le 1\) for \(\taum\) defined in \cref{eq:reg_taum_def}, then for all $t \leq \taum$ 
    \[
        \frac{1-\alpha^{t}}{1-\alpha} \le 4\max\{a, b\} \ln(T'),
    \]
    where \(a = \frac{1}{\ln(1/\rho)}\), \(b = \frac{\ln\br{2(2+\lambda) m}}{\ln(1/\rho)}\) and $T' = \frac{T}{\eta_0}$.
\end{restatable}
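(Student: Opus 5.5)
The approach is to bound the geometric sum $\frac{1-\alpha^t}{1-\alpha}=\sum_{i=0}^{t-1}\alpha^i$ trivially by its number of terms, since $\alpha\le 1$, and then bound the number of terms using the explicit formula for $\taum$ from \cref{lemma:reg_taum_bound}.

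\textbf{Step 1: reduce to counting terms.} Because $\alpha=(1/T)^{1/T}\in(0,1)$, every term of the sum satisfies $\alpha^i\le 1$. Hence, for $t\le\taum$,
\begin{align*}
\frac{1-\alpha^t}{1-\alpha}=\sum_{i=0}^{t-1}\alpha^i\le t\le \taum .
\end{align*}
Since $\taum$ is a mixing-time index, I would interpret it as $\lceil a\ln(T')+b\rceil$. This gives at most $a\ln(T')+b+1$.

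\textbf{Step 2: absorb into the target bound.} By \cref{lemma:reg_taum_bound}, $\taum=a\ln(T')+b\le \max\{a,b\}\,(\ln(T')+1)$. It then suffices to show that $\ln(T')+2\le 4\ln(T')$, i.e.\ that $\ln(T')$ is bounded below by a constant.

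\textbf{Step 3: lower-bound $\ln(T')$.} Using $T\ge 3$ and $\eta_0\le 1$, we get $T'=T/\eta_0\ge 3$, so $\ln(T')\ge \ln 3>1$.
\begin{itemize}
\item The $+1$ from the unrounded formula is therefore dominated by $\ln(T')$.
\item The extra $+1$ from the ceiling also needs a multiplier, since $\max\{a,b\}$ could be small. Here $a=1/\ln(1/\rho)$ and $\rho$ is close to 1 in interesting cases, but this is not guaranteed in general.
\end{itemize}

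\textbf{Main obstacle: the rounding term.} This constant $+1$ is the delicate point. I would first try to avoid the ceiling altogether by using the real-valued $\taum$ exactly as in \cref{lemma:reg_taum_bound}. If that is not available, I would use the cruder estimate $\frac{1-\alpha^t}{1-\alpha}\le t\le \taum$ and argue that $1\le \max\{a,b\}\ln(T')$ whenever $\max\{a,b\}\ge 1/\ln 3$.

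Failing that, I would instead bound the sum by $\frac{1-\alpha^t}{1-\alpha}\le\frac{1}{1-\alpha}\le \frac{T}{\ln T}$, using $1-\alpha\ge \ln(T)/T$, and note that this also holds. However, this bound does not have the stated logarithmic form. The intended route is therefore Steps 1–3, with the factor 4 providing slack for the ceiling and the $b$ term.
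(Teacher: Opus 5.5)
Your main route is correct, provided you use the real-valued $\taum=a\ln(T')+b$. It is more elementary than the paper's argument.

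\textbf{Comparison with the paper.} The paper does not count terms. It uses monotonicity in $t$ to pass to $\frac{1-\alpha^{\taum}}{1-\alpha}$ and rewrites this as $\frac{1-e^{-j}}{1-e^{-k}}$, with $k=\ln(T)/T$ and $j=\taum\,k$. It then applies $1-e^{-v}\le v$ to the numerator and $1-e^{-v}\ge \frac{v}{1+v}$ to the denominator, which gives $\taum\,(1+k)\le (1+1/e)\,\taum$. Only after that does it absorb $b$ and the constant into $4\max\{a,b\}\ln(T')$, as in your Step 3.

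Your bound $\sum_{i=0}^{t-1}\alpha^i\le t\le\taum$ skips the exponential inequalities entirely. It gives the sharper conclusion
\[
a\ln(T')+b\le\max\{a,b\}\bigl(\ln(T')+1\bigr)\le 2\max\{a,b\}\ln(T'),
\]
using $\max\{a,b\}\ge a>0$ and $T'\ge T\ge 3$. Your Step 3 also makes explicit the lower bound $\ln(T')\ge\ln 3$ that the absorption actually needs. Your argument requires $t$ to be an integer, which it is here.

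The paper's $j,k$ formulation holds for real $t$. Its main benefit is uniformity with \cref{lemma:Geo_alpha_bound-modified}. There the analogous quantity $\frac{\alpha^{-\taum}-1}{1-\alpha}=\alpha^{-\taum}\,\frac{1-\alpha^{\taum}}{1-\alpha}$ carries the extra factor $\alpha^{-\taum}=e^{j}$, and controlling it is what forces the additional conditions on $T$.

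\textbf{On rounding.} Commit to $\taum=a\ln(T')+b$ from \cref{lemma:reg_taum_bound}. This is how the paper treats $\taum$: it redefines it as this real number right after \cref{eq:reg_taum_def}. Under the literal integer definition $\min\{t\in\mathbb{N}_0: m\rho^t\le\delta\}$, the claim can genuinely fail. Take $\rho$ tiny, so that $\max\{a,b\}\ln(T')<1/4$ while $\lceil a\ln(T')+b\rceil=1$. Then $t=1$ makes the left-hand side equal to $1$, which exceeds the right-hand side. So drop the ceiling discussion, the ``$\ln(T')+2$'' step, and the $T/\ln(T)$ fallback rather than trying to patch them.
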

\begin{proof}
    Using the expression for \(\taum = a\ln(T')+b\) from \cref{lemma:reg_taum_bound}, 
    \begin{align*}
        \frac{1-\alpha^{t}}{1-\alpha} \leq \frac{1-\alpha^{\taum}}{1-\alpha} 
        &= \frac{1-(1/T)^{\taum/T}}{1 - (1/T)^{1/T}} 
        =\frac{1-(1/T)^{(a\ln(T')+b)/T}}{1 - (1/T)^{1/T}} \tag{since $t \leq \taum$} \\
        &= \frac{1-\exp\br{\frac{-a(\ln(T')   \ln(T))}{T} - \frac{b\ln(T)}{T}}}{1-\exp\br{-\frac{\ln(T)}{T}}}.
    \end{align*}
    Define \(j := \frac{a \ln(T)   \ln(T') + b\ln(T)}{T}\), and \(k := \frac{\ln(T)}{T}\). Note that \(j\) and \(k\) are related by \(j = a\ln (T') k + bk\). We can simplify the above inequality as follows:
    \[
        \frac{1-\alpha^{\taum}}{1-\alpha} = \frac{1-\exp(-j)}{1-\exp(-k)}
    \]
    To bound the numerator and denominator separately, we use the fact that \(\frac{v}{1+v} \le 1-\exp\br{-v} \le v\) for \(v>0\). For the numerator, setting \(v = j\), we have \(1-\exp(-j) \le j\). And for the denominator, setting \(v = k\), we have \(1-\exp(-k) \ge \nicefrac{k}{1+k}\). Combining the above relations, 
    \begin{align*}
        \implies \frac{1-\alpha^{t}}{1-\alpha} 
        &\le \frac{j(1+k)}{k} \nonumber\\
        &= \frac{(a\ln(T')k + bk)(1+k)}{k} \tag{since $j = a\ln(T')k + bk$} \\
        &= (a\ln(T') + b) \br{1 + \frac{\ln(T)}{T}} \tag{since $k = \ln(T) / T$} \\
        &\le (a\ln(T') + b)\br{1+1/e} \tag{\(\frac{\ln(T)}{T}\) decreases after $T = e$, assuming $T \geq 3$} \\
        & \leq 4   \max\{a,b\}   \ln(T'). \tag{since \(\eta_0 \le 1\) and $T \geq 1 \implies T' \geq 1$}
    \end{align*}
\end{proof}

\begin{restatable}{lemma}{Geo_alpha_bound-modified} \label{lemma:Geo_alpha_bound-modified} 
    If \(\alpha := \frac{1}{T}^{\nicefrac{1}{T}}\), \(T \ge \max\left\{3, \frac{1}{\eta_0}\right\}\), and \(T\) is large enough that $\frac{\ln^2(T)}{T} \leq \frac{1}{2a}$ and $\frac{\ln(T)}{T} \leq \frac{1}{b}$, with \(\eta_0 \le 1\), for \(\taum\) defined in \cref{eq:reg_taum_def} and for all $t \leq \taum$, 
    \[
        \frac{\alpha^{-\taum}-1}{1-\alpha} \le 8 \, \max\{a,b\} \, \ln(T')
    \]
    where \(a = \frac{1}{\ln(1/\rho)}\), \(b = \frac{\ln\br{2(2+\lambda) m}}{\ln(1/\rho)}\) and $T' = \frac{T}{\eta_0}$.
\end{restatable}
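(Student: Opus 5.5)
We follow the template of the proof of \cref{lemma:Geo_alpha_bound}, now with a growing exponential in the numerator. Using $\taum = a\ln(T')+b$ from \cref{lemma:reg_taum_bound} and $\alpha = \exp(-\ln(T)/T)$, we write
\[
\frac{\alpha^{-\taum}-1}{1-\alpha} = \frac{\exp(j)-1}{1-\exp(-k)},
\]
with $j := \frac{(a\ln(T')+b)\ln(T)}{T}$ and $k := \frac{\ln(T)}{T}$, so that $j = (a\ln(T')+b)\,k$. The denominator is handled exactly as before: $1-\exp(-k) \ge \frac{k}{1+k}$, and $1+k \le 1+1/e$ because $T\ge 3$.

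The numerator is the only new ingredient, and the main obstacle is that $e^j-1$ is not bounded by $j$. We will use $e^j - 1 \le j e^j$ for $j\ge 0$ (or $e^j-1\le 2j$ for $j\le 1$), so we need to show that $j$ is at most a constant, say $j\le \ln 2$ or $j \le 1$. This is where the largeness conditions on $T$ enter. Since $T \ge 1/\eta_0$, we have $T' = T/\eta_0 \le T^2$, hence $\ln(T') \le 2\ln(T)$. Therefore
\[
j \le \frac{2a\ln^2(T)}{T} + \frac{b\ln(T)}{T} \le 1 + 1 = 2
\]
by the assumptions $\frac{\ln^2(T)}{T}\le\frac{1}{2a}$ and $\frac{\ln(T)}{T}\le\frac1b$. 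With $j\le 2$ we only get $e^j-1\le e^2 j$, which is too large for the constant $8$. We therefore refine the estimate by splitting $j$ into its $a$-part and its $b$-part, each at most $1$. Using convexity, $e^j - 1 \le \frac{e^{2}-1}{2}\, j$ on $[0,2]$, which gives a factor of about $3.2$. Combining this with the denominator factor $(1+1/e)\approx 1.37$ yields a total factor of about $4.4$.

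It remains to bound $a\ln(T')+b$ by $2\max\{a,b\}\ln(T')$. This needs $\ln(T')\ge 1$, which holds since $T'\ge T\ge 3$. Altogether,
\[
\frac{\alpha^{-\taum}-1}{1-\alpha} \le \frac{e^2-1}{2}\,(1+1/e)\,(a\ln(T')+b) \le 4.4\cdot 2\max\{a,b\}\ln(T').
\]
This gives about $8.8\max\{a,b\}\ln(T')$, which is slightly over the claimed $8$. To close the gap, we will tighten the denominator bound for $T\ge3$: since $k\le \ln 3/3 \approx 0.366$, we get $1+k\le 1.37$. We will also tighten the numerator by using the sharper bound $j \le \frac{2a\ln^2 T}{T}+\frac{b\ln T}{T}$ directly, noting that $e^{j}-1\le j\,\frac{e^{j_{\max}}-1}{j_{\max}}$ is monotone in $j_{\max}$. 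If needed, we can absorb the factor $2$ differently, via $a\ln T' + b \le \max\{a,b\}(\ln T' + 1)$ and $\ln T' + 1 \le 1.92\ln T'$ for $T'\ge 3$. Checking these constants to land under $8$ is the one fiddly step. The bound for $t\le\taum$ then follows trivially, since the statement's left side depends only on $\taum$.
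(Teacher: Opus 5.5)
Your reduction to $\frac{e^{j}-1}{1-e^{-k}}$, with $j=(a\ln T'+b)k$ and $k=\ln T/T$, matches the paper's. Your observation that the hypotheses only give $j\le 2$ is correct. On this point you are more careful than the paper. Its proof asserts $j\le 1$ from the same two conditions, which only yield $2a\ln^2T/T\le 1$ and $b\ln T/T\le 1$. It then uses $e^j-1\le \frac{j}{1-j/2}\le 2j$, and that step needs $j\le 1$. Your chord bound $e^j-1\le\frac{e^2-1}{2}\,j$ on $[0,2]$ is the right replacement.

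The genuine gap is the final constant: none of the tightenings you list gets you under $8$.
\begin{itemize}
\item Replacing $1+1/e$ by $1+\ln 3/3$ changes essentially nothing.
\item The $j_{\max}$-monotonicity remark gains nothing, since the hypotheses allow $j$ to be as large as $2$.
\item The bound $\ln T'+1\le 1.92\ln T'$ only brings the product to $\frac{e^2-1}{2}\cdot 1.366\cdot 1.91\approx 8.3>8$.
\end{itemize}
So, as written, the proposal does not establish the constant $8$.

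The lossy step is the denominator bound $1-e^{-k}\ge \frac{k}{1+k}$. Use instead $1-e^{-k}\ge k-\frac{k^2}{2}$, together with $k\le \frac{\ln 3}{3}<0.37$ (since $\ln T/T$ decreases for $T\ge 3$). This gives $\frac{k}{1-e^{-k}}\le \frac{1}{1-k/2}<1.23$. Then
\[
\frac{\alpha^{-\taum}-1}{1-\alpha}\le \frac{e^2-1}{2}\cdot\frac{j}{k}\cdot\frac{k}{1-e^{-k}}\le 3.93\,(a\ln T'+b)\le 7.9\,\max\{a,b\}\ln T',
\]
using only the crude bound $a\ln T'+b\le 2\max\{a,b\}\ln T'$, which is valid since $T'\ge T\ge 3$. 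With this change your argument proves the lemma under its stated hypotheses, which the paper's proof, as written, does not.
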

\begin{proof}
First note that, 
\begin{align*}
\alpha &= \left(\frac{1}{T}\right)^{1/T} = \exp\left(-\frac{1}{T} \, \ln(T) \right)
\end{align*}
Using the expression for \(\taum = a\ln(T')+b\) from~\cref{lemma:reg_taum_bound},
\begin{align*}
\alpha^{\taum} &= \exp\left(-\frac{\taum}{T} \, \ln(T) \right) = \exp\left(-\frac{a \ln(T') + b}{T} \, \ln(T) \right)    
\end{align*}
Define $k := \frac{\ln(T)}{T}$ and $j := [a \ln(T') + b] \, \frac{\ln(T)}{T} = [a \ln(T') + b] \, k$. Using the above relations, 
\begin{align*}
\frac{\alpha^{-\taum}-1}{1-\alpha} &= \frac{\exp(j) - 1}{1 - \exp(-k)} 
\end{align*}
In order to simplify the above expression, we use the following inequalities:
\begin{align*}
\forall x \geq 0 \,, 1 - \exp(-x) \geq \frac{x}{1+x} \quad \text{;} \quad 
\forall y \in (0,1) \,, \exp(2y) \leq \frac{1+y}{1-y} 
\end{align*}
Since $T \geq 1$, $k \geq 0$ and hence we can use $x = k$ to conclude that $1 - \exp(-k) \geq \frac{k}{1+k}$. \\
We substitute \(y\) with \(j/2\), which requires \(0<j<2\). \(j = [a \ln(T') + b] \, \frac{\ln(T)}{T} = \taum\frac{\ln(T)}{T} > 0\) is already satisfied.
Ensuring $j \leq 2$ requires that $[a \ln(T') + b] \, \frac{\ln(T)}{T} \leq 2$. For $\eta_0 \leq 1$ and $T \geq 1$, $T' \geq 1$. Moreover, for $T \geq \frac{1}{\eta_0}$, $\frac{\ln(T')}{\ln(T)} \leq 2$. Hence, it suffices to ensure that
\begin{align*}
2 \, a \, \frac{\ln^2(T)}{T} + b \, \frac{\ln(T)}{T} \leq 2   
\end{align*}
Therefore, it suffices to ensure that 
\begin{align*}
\frac{\ln^2(T)}{T} \leq \frac{1}{2a} \quad \text{and} \quad \frac{\ln(T)}{T} \leq \frac{1}{b}   
\end{align*}
With these constraints on $T$, we can guarantee that $j \leq 1$. Using $y = \frac{j}{2}$ in the above inequality, we can conclude that, 
\begin{align*}
\exp(j) - 1 &\leq \frac{1 + j/2}{1 - j/2} - 1 = \frac{j}{1 - j/2} \leq 2 \, j    
\end{align*}
Combining the above relations, we get that, 
\begin{align*}
\frac{\alpha^{-\taum}-1}{1-\alpha} & \leq \frac{2 \, j \, (k+1)}{k} \\
\intertext{Following the same steps as in the proof of~\cref{lemma:Geo_alpha_bound}, we conclude that, for $T \geq 3$ and $\eta_0 \leq 1$,}
\frac{\alpha^{-\taum}-1}{1-\alpha} & \leq 8 \, \max\{a,b\} \, \ln(T')
\end{align*}
\end{proof}
\RegBBound*
\begin{proof}
    \begin{align*}
        \norm{w_{t+1} - w_r^*} 
        &\le \norm{w_t - w_r^*} + \eta_t \norm{g^r_t(w_t)} \\
        &\le \br{1+(2+\lambda)\eta_t}\norm{w_t - w_r^*} + (3+\lambda)\eta_t \zeta. \tag{by \cref{lemma:reg_mitra7}}
    \end{align*}
    Iterating the above inequality, we have \(\forall   t \le \taum\):
    \begin{align*}
        \norm{w_t - w_r^*} 
        &\le \norm{w_1 - w_r^*}\prod_{i=1}^{t}\br{1+(2+\lambda)\eta_i} 
        + (3+\lambda)\zeta \sum_{i=1}^{t}\eta_i\prod_{j=i+1}^{t}\br{1+(2+\lambda)\eta_j}  \\
        &\le \norm{w_1 - w_r^*}\exp\br{(2+\lambda)\sum_{i=1}^{t}\eta_i}
        + (3+\lambda)\zeta \sum_{i=1}^{t}\eta_i\exp\br{(2+\lambda)\sum_{j=i+1}^{t}\eta_j} \tag{since $1+x \leq \exp(x)$}\\
        &= \norm{w_1 - w_r^*}\exp\br{(2+\lambda)\eta_0\sum_{i=1}^{t}\alpha^i}
        + (3+\lambda)\zeta \eta_0 \sum_{i=1}^{t}\alpha^i\exp\br{(2+\lambda)\eta_0\sum_{j=i+1}^{t}\alpha^j} \\
        &= \norm{w_1 - w_r^*}\exp\br{(2+\lambda)\eta_0\sum_{i=1}^{t}\alpha^i}
        + (3+\lambda)\zeta \eta_0 \sum_{i=1}^{t}\alpha^i\exp\br{(2+\lambda)\eta_0\frac{\alpha^{i+1} - \alpha^{t+1}}{1-\alpha}} \\
        &= \norm{w_1 - w_r^*}\exp\br{(2+\lambda)\eta_0 \frac{\alpha - \alpha^{t+1}}{1 - \alpha}}
        + (3+\lambda)\zeta \eta_0 \sum_{i=1}^{t}\alpha^i\exp\br{(2+\lambda)\eta_0\frac{\alpha^{i+1} - \alpha^{t+1}}{1-\alpha}} \\
        & \leq \norm{w_1 - w_r^*}\exp\br{(2+\lambda)\eta_0 \frac{\alpha - \alpha^{t+1}}{1 - \alpha}}
        + (3+\lambda)\zeta \eta_0 \sum_{i=1}^{t}\alpha^i\exp\br{(2+\lambda)\eta_0\frac{\alpha - \alpha^{t+1}}{1-\alpha}} \tag{since $\alpha^{i+1} \leq \alpha$} \\
        & = \exp\br{(2+\lambda)\eta_0 \frac{\alpha - \alpha^{t+1}}{1 - \alpha}} \left[\norm{w_1 - w_r^*} + (3+\lambda)\zeta \eta_0 \sum_{i=1}^{t}\alpha^i  \right] \\   
        & = \exp\br{(2+\lambda)\eta_0   \alpha \frac{1 - \alpha^{t}}{1 - \alpha}}   \left[\norm{w_1 - w_r^*} + (3+\lambda)\zeta \eta_0   \alpha   \frac{1 - \alpha^{t}}{1- \alpha}\right] \\        
        & \leq \exp\br{4(2+\lambda)\eta_0   \alpha   \max\{a,b\}   \ln(T')}   \left[\norm{w_1 - w_r^*} + 4(3+\lambda) \zeta \eta_0   \alpha   \max\{a,b\}   \ln(T')\right] \tag{by \cref{lemma:Geo_alpha_bound}}\\
        \intertext{since $\eta_0 \leq \frac{1-\gamma}{16 \ln (T)}$,}        
        & \leq \exp\br{\frac{1}{4}(2+\lambda)\alpha(1-\gamma)\max\{a,b\}   \frac{\ln(T')}{\ln(T)}}    \left[\norm{w_1 - w_r^*} + \frac{1}{4}(3+\lambda) \alpha(1-\gamma)\max\{a,b\}   \frac{\ln(T')}{\ln(T)}   \zeta \right] \\
        & \leq \exp\br{\frac{1}{2}(2+\lambda)\alpha(1-\gamma)\max\{a,b\} }    \left[\norm{w_1 - w_r^*} + \frac{1}{2}(3+\lambda) \alpha(1-\gamma)\max\{a,b\}   \zeta \right] \tag{for $T \geq \frac{1}{\eta_0}$, $\frac{\ln(T')}{\ln(T)} \leq 2$}\\
        & \leq \exp\br{\frac{1}{2}(2+\lambda)\max\{a,b\}} \exp\br{\frac{1}{4}(3+\lambda)\max\{a,b\}}   \left[\norm{w_1 - w_r^*} + \zeta \right] \tag{\(\alpha \le 1, (1-\gamma) \le 1, \max\{a, b\} \le \exp\br{\frac{1}{2}\max\{a, b\}}\)} \\
        & \le \exp\br{\left(2+\lambda\right)\max\{a,b\}}   \left[\norm{w_1 - w_r^*} + \zeta \right].
    \end{align*}
    Squaring the both sides, we get:
    \begin{align*}
        \norm{w_t - w_r^*}^2 
        &\le \exp\br{2\left(2+\lambda\right)\max\{a,b\}}   \left[\norm{w_1 - w_r^*} + \zeta \right]^2.
    \end{align*}
    Let \(B(\taum) := \exp\br{2\left(2+\lambda\right)\max\{a,b\}}   \left[\norm{w_1 - w_r^*} + \zeta \right]^2\), we have that \(\norm{w_t - w_r^*}^2 \leq B(\taum)\) for all \(t \leq \taum\).
\end{proof}
Using the above lemmas, we will follow a proof similar to that in~\citet{mitra2025a}. For this, we define the following notation:
\[d_t := \E{}{\norm{w_t - w_r^*}}^2,\]
and 
\[e_t := \E{}{\langle w_t - w_r^*, g^r_t(w_t) - g^r(w_t) \rangle},\]
which includes the error introduced by sampling along the Markov chain. The expectations are taken with respect to state distribution at time \(t\). For the subsequent lemmas, we omit the subscript \(P_\pi^t \mu_0\) for brevity.

\RegTauBound*
\begin{proof}
\begin{align*}
\norm{w_{t} - w_{t-\taum}} &\le \sum_{i=t-\taum}^{t-1} \norm{w_{i+1} - w_i} 
\tag{triangle inequality} \\
&\le \sum_{i=t-\taum}^{t-1} \eta_i \norm{g^r_i(w_i)} \tag{by the update}\\
&\le \sum_{i=t-\taum}^{t-1} \eta_i \br{(2+\lambda)\norm{w_i - w_r^*} + (3+\lambda)\zeta} \tag{by \cref{lemma:reg_mitra7}} \\
& \leq \sum_{i=t-\taum}^{t-1} \eta_i \br{(2+\lambda) \sqrt{B(\taum)} + (3+\lambda)\zeta} \tag{assuming that \(\norm{w_k - w_r^*}^2 \le B(\taum)\) for \(k \in [t]\)} \\
&= \underbrace{\br{(2+\lambda) \sqrt{B(\taum)} + (3+\lambda)\zeta}}_{:= C} \, \eta_0 \, \sum_{i=t-\taum}^{t-1} \alpha^i \tag{by definition of the exponential step-sizes} \\
& = C \, \eta_0 \, \alpha^{t - \taum} \, \sum_{i=0}^{\taum-1} \alpha^i \\
& = C \, \eta_0 \, \frac{\alpha^{t}}{\alpha^{\taum}} \, \frac{1 - \alpha^{\taum}}{1 - \alpha} = C \, \eta_t \, \frac{\alpha^{-\taum} - 1}{1 - \alpha} \\ 
& \leq 8 \, C \, \eta_t \, \max\{a,b\} \, \ln(T') \tag{using~\cref{lemma:Geo_alpha_bound-modified}} \\
\implies \normsq{w_{t} - w_{t-\taum}} & \leq 64 \, C^2 \, \eta^2_t \, [\max\{a,b\}]^2 \, \ln^2(T') \\
& = 64 \, \br{(2+\lambda) \sqrt{B(\taum)} + (3+\lambda)\zeta}^2 \, \eta_t^2 \, [\max\{a,b\}]^2 \, \ln^2(T') \\
& \leq 256 \, \br{(2+\lambda) \sqrt{B(\taum)} + (3+\lambda)\zeta}^2 \, \eta_t^2 \, [\max\{a,b\}]^2 \, \ln^2(T) \tag{for $\eta_0 \leq 1$ and $T \geq 1$, $\frac{\ln(T')}{\ln(T)} \leq 2$} \\
& \leq 256 \, \left[2 \, (2+\lambda)^2 \, B(\taum) + 2 \, (3+\lambda)^2 \, \zeta^2\right] \, \eta_t^2 \, [\max\{a,b\}]^2 \, \ln^2(T) \tag{since $(x+y)^2 \leq 2 x^2 + 2 y^2$} \\
& \leq 256 \, \left[2 \, (2+\lambda)^2 \, B(\taum) + 2 \, (3+\lambda)^2 \, \zeta^2\right] \, \eta_t^2 \, \ln^4(T) \tag{since $\ln(T) \geq \max\{a,b\}$} \\
& \leq \underbrace{2560 \, (2+\lambda)^2}_{:= c_1^2} \, B(\taum) \, \eta_t^2 \, \ln^4(T) \tag{since $\zeta^2 \leq B(\taum)$ and $2 \, (3+\lambda)^2 \leq 8 \, (2+\lambda)^2$} \\
& = c_1^2 \, B(\taum) \, \eta_t^2 \, \ln^4(T)
\end{align*}    
\end{proof}

\RegEBound*
\begin{proof}
    Following \citet{mitra2025a}, we decompose as: \(\langle w_t -w_r^*, g^r_t(w_t) - g^r(w_t)\rangle=T_1+T_2+T_3+T_4\), where
    \begin{align*}
    T_1&=\langle w_t-w_{t-\taum}, g^r_t(w_t)-g^r(w_t)\rangle,\\
    T_2&=\langle w_{t-\taum}-w_r^*, g^r_t(w_{t-\taum})-g^r(w_{t-\taum})\rangle,\\
    T_3&=\langle w_{t-\taum}-w_r^*, g^r_t(w_t)- g^r_t(w_{t-\taum})\rangle, \hspace{2mm} \textrm{and}\\
    T_4&=\langle w_{t-\taum}-w_r^*,g^r(w_{t-\taum})-g^r(w_t)\rangle.\\
    \end{align*}
    For \(T_1\):
    \begin{align*}
    T_1 
    &\le \norm{w_t-w_{t-\taum}}\norm{g^r_t(w_t)-g^r(w_t)} \tag{by Cauchy-Schwarz} \\
    & \leq \frac{1}{2 \, c_1 \, \eta_t \, \ln^2(T)} \normsq{w_t - w_{t-\taum}} + \frac{c_1 \, \eta_t \, \ln^2(T)}{2} \, \normsq{g^r_t(w_t)-g^r(w_t)} \tag{by Young's inequality} \\
    & \leq \frac{c_1 \, \eta_t \, \ln^2(T) \, B(\taum)}{2} + \frac{c_1 \, \eta_t \, \ln^2(T)}{2} \, \normsq{g^r_t(w_t)-g^r(w_t)} \tag{using~\cref{lemma:reg_tau_bound}}    
    \end{align*}    
    Simplifying $\normsq{g^r_t(w_t)-g^r(w_t)}$,
    \begin{align*}
    \normsq{g^r_t(w_t)-g^r(w_t)} & \leq 2\normsq{g^r_t(w_t)} + 2\normsq{g^r(w_t)} \tag{since $(x+y)^2 \leq 2x^2 + 2y^2$}\\
    & \leq 2 \, [\br{2+\lambda}\norm{w_t - w_r^*} + \br{3+\lambda}\zeta]^2 + 2 \, [\br{2+\lambda}\norm{w_t - w_r^*}]^2 \tag{using~\cref{lemma:reg_mitra7} and \cref{lemma:reg_mitra8}} \\
    & \leq 4 \, \br{2+\lambda}^2 \, \normsq{w_t - w_r^*} + 4 \, \br{3 + \lambda}^2 \, \zeta^2 + 2 \, \br{2+\lambda}^2 \, \normsq{w_t - w_r^*} \tag{since $(x+y)^2 \leq 2x^2 + 2y^2$}\\
    & \leq 4 \, \br{2+\lambda}^2 \, B(\taum) + 4 \, \br{3 + \lambda}^2 \, B(\taum) + 2 \, \br{2+\lambda}^2 \, B(\taum) \tag{since \(d_k \le B(\taum)\) for all \(k \in [t]\) and \(\zeta^2 \le B(\taum)\)} \\
    & = \underbrace{(4 \, \br{2+\lambda}^2 + 4 \, \br{3 + \lambda}^2 + 2 \, \br{2+\lambda}^2)}_{:= c_2} \, B(\taum) \\
    \implies \normsq{g^r_t(w_t)-g^r(w_t)} &\leq c_2 \, B(\taum)
    \end{align*}
    Combining the above inequalities, 
    \begin{align*}
    T_1 & \leq \frac{c_1 \, \eta_t \, \ln^2(T) \, B(\taum)}{2} + \frac{c_1 \, c_2 \, \eta_t \, \ln^2(T)}{2} \, B(\taum) = \eta_t \, \ln^2(T) \, B(\taum) \, \underbrace{\left[\frac{c_1}{2} + \frac{c_1 \, c_2}{2} \right]}_{:= C_1} \\
    \implies T_1 & \leq C_1 \, \eta_t \, \ln^2(T) \, B(\taum)
    \end{align*}
    For \(T_3\):
    \begin{align*}
    T_3
    &\le \norm{w_{t-\taum}-w_r^*}\norm{g^r_t(w_t)-g^r_t(w_{t-\taum})} \tag{by Cauchy-Schwarz} \\
    &= \norm{w_{t-\taum}-w_r^*}\norm{g_t(w_t)-g_t(w_{t-\taum}) - \lambda(w_t - w_{t-\taum})} \tag{by definition} \\
    &\le \norm{w_{t-\taum}-w_r^*} \br{\norm{g_t(w_t)-g_t(w_{t-\taum})} + \lambda\norm{w_t - w_{t-\taum}}} 
    \tag{triangle inequality}\\
    &\le \norm{w_{t-\taum}-w_r^*} \br{2\norm{w_{t}-w_{t-\taum}} + \lambda\norm{w_t - w_{t-\taum}}} 
    \tag{by \cref{eq:L_i_lipschitz}} \\
    & = \norm{w_{t-\taum}-w_r^*} \; (2+\lambda) \, \norm{w_{t}-w_{t-\taum}} \\ 
    & \leq \frac{1}{2 \, c_1 \, \eta_t \, \ln^2(T)} \normsq{w_t - w_{t-\taum}} + \frac{c_1 \, \eta_t \, (2+\lambda) \, \ln^2(T)}{2} \, \norm{w_{t-\taum}-w_r^*}^2 
    \tag{by Young's inequality}\\
    & \leq \frac{c_1 \, \eta_t \, \ln^2(T) \, B(\taum)}{2} + \frac{c_1 \, \eta_t \, (2+\lambda) \, \ln^2(T)}{2} \, \norm{w_{t-\taum}-w_r^*}^2 \tag{by~\cref{lemma:reg_tau_bound}} \\
    & \leq \frac{c_1 \, \eta_t \, \ln^2(T) \, B(\taum)}{2} + \frac{c_1 \, \eta_t \, (2+\lambda) \, \ln^2(T) \, B(\taum) }{2} \tag{since \(d_k \le B(\taum)\) for all \(k \in [t]\)} \\
    & = \eta_t \, \ln^2(T) \, B(\taum) \, \underbrace{\left[\frac{c_1}{2} + \frac{c_1 \, (2+\lambda)}{2}\right]}_{:= C_2} \\
    \implies T_3 & \leq C_2 \, \eta_t \, \ln^2(T) \, B(\taum)
    \end{align*}
    For \(T_4\), the same analysis applies.
    \begin{align*}
        T_4
        &\le \norm{w_{t-\taum}-w_r^*}\norm{g^r(w_{t-\taum})-g^r(w_t)} \tag{by Cauchy-Schwarz} \\
        &= \norm{w_{t-\taum}-w_r^*}\norm{g(w_t)-g(w_{t-\taum}) - \lambda(w_t - w_{t-\taum})} \tag{by definition} \\
        &\le \norm{w_{t-\taum}-w_r^*} \br{2\norm{w_{t}-w_{t-\taum}} + \lambda\norm{w_t - w_{t-\taum}}} \tag{by \cref{eq:L_lipschitz}} \\
        &= \norm{w_{t-\taum}-w_r^*} \; (2+\lambda) \, \norm{w_{t}-w_{t-\taum}} \\ 
        \intertext{Following the same analysis for $T_3$, we get that,}
        T_4 & \leq C_2 \, \eta_t \, \ln^2(T) \, B(\taum)
    \end{align*}
    
    For \(T_2\), following the proof in~\citet{mitra2025a}:
    \begin{align*}
    \E{}{T_{2}} &= \E{}{\langle w_{t-\taum} -w_r^*, g^r_t(w_{t-\taum})-g^r(w_{t-\taum})\rangle}\\
    &=\mathbb{E}\left[\mathbb{E}\left[\langle w_{t-\taum} -w_r^*, g^r_t(w_{t-\taum})-g^r(w_{t-\taum})\rangle | w_{t-\taum}\right]\right]\\
    &=\mathbb{E}\left[\langle w_{t-\taum} -w_r^*, \mathbb{E}\left[ g^r_t(w_{t-\taum})-g^r(w_{t-\taum})| w_{t-\taum}\right]\rangle\right]\\
    &\leq \mathbb{E}\sbr{\norm{w_{t-\taum} -w_r^*} \norm{\mathbb{E}\sbr{ g^r_t(w_{t-\taum})-g^r(w_{t-\taum}) \mid w_{t-\taum}}}} \tag{by Cauchy-Schwarz}\\
    &\leq \eta_T \mathbb{E}\left[\Vert w_{t-\taum} -w_r^*\Vert  \left(1+\Vert w_{t-\taum}\Vert\right)\right] \tag{by \cref{eq:reg_taum}}\\
    &\leq \eta_t \mathbb{E}\left[\Vert w_{t-\taum} -w_r^*\Vert  \left(1+\Vert w_{t-\taum}\Vert\right)\right] \tag{exponential step-size decreases}\\
    &\leq \eta_t \mathbb{E}\left[\Vert w_{t-\taum} -w_r^*\Vert \left(1+\Vert w_r^* \Vert + \Vert w_{t-\taum}-w_r^*\Vert\right)\right] \tag{triangle inequality}\\
    &\leq \eta_t \mathbb{E}\left[\Vert w_{t-\taum} -w_r^*\Vert \left(2{\zeta}+\Vert w_{t-\taum} -w_r^* \Vert\right)\right] \tag{by the definition of \(\zeta\)}\\
    &\leq 3\eta_t B(\taum) \tag{assuming \(d_k \le B(\taum)\) for all \(k \in [t]\), \(\norm{w_{t-\taum} - w_r^*} \le B(\taum)\), and \(\zeta \le B(\taum)\)} \\
    \implies \E{}{T_{2}} & \leq 3 \, \eta_t \, \ln^2(T) \, B(\taum) 
    \end{align*}
    
    Combining \(T_1, T_2, T_3, T_4\) yields:
    \begin{align*}
    e_t & \leq \underbrace{(C_1 + 3 + 2 C_2)}_{:= C} \, \eta_t \, \ln^2(T) \, B(\taum) 
    \end{align*}

\end{proof}

In addition to \(e_t\), we also need to upper bound \(\E{s_t\sim P_\pi^t \mu_0}{\norm{g^r_t(w_t) - g^r(w_t)}^2}\).
\RegGVar*
\begin{proof}
    \begin{align*}
        &\E{s_t\sim P_\pi^t \mu_0}{\norm{g^r_t(w_t) - g^r(w_t)}^2} \\
        \le& \E{}{2\norm{g^r_t(w_t)}^2 + 2\norm{g^r(w_t)}^2}. \tag{\(\norm{x-y}^2 \le 2\norm{x}^2 + 2\norm{y}^2\)} \\
        \le& 2\E{}{\br{(2+\lambda)\norm{w_t - w_r^*} + (3+\lambda)\zeta}^2 + \br{(2+\lambda)\norm{w_t - w_r^*}}^2}
            \tag{by \cref{lemma:reg_mitra7} and \cref{lemma:reg_mitra8}} \\
        \le& \underbrace{10(3+\lambda)^2}_{:=C'} \,  B(\taum). 
        \tag{assuming \(d_k \le B(\taum)\) for all \(k \in [t]\)} 
    \end{align*}

\end{proof}

For the subsequent steps, the proofs for standard TD(0) and regularized TD(0) differ. We first provide the convergence rate for standard TD(0), where the step-size depends on \(\omega\), and then show that regularized TD(0) removes this requirement.

\subsection{Standard TD(0)}
\InductionBound*
\begin{proof}
We use the above lemmas and prove the result by induction. We assume that for any \(t \ge \taum\), \(d_k \le B(\taum)\) for all \(k \in [t]\). Now we show that, with an appropriate choice of $\eta_0$, we have \(d_{k+1} \le B\). We continue the expansion in \cref{eq:reg_separate_markovian_noise} and take expectation with respect to the randomness at iteration $t$:
\begin{align*}
    &\E{}{\norm{w_{t+1} - w^*}^2}\\
    & \le \norm{w_t - w^*}^2 + 2\eta_t \E{}{\langle g_t(w_t) - g(w_t), w_t - w^* \rangle} + 2\eta_t^2 \E{}{\norm{g_t(w_t) - g(w_t)}^2} + 2\eta_t^2 \norm{g(w_t)}^2 + 2\eta_t \langle g(w_t), w_t - w^* \rangle \\
    &\le \norm{w_t - w^*}^2 + 2\eta_t \langle g(w_t), w_t - w^* \rangle + 2\eta_t^2 \norm{g(w_t)}^2
    + 2C \, \eta_t^2 \, \ln^2(T) \, B(\taum) + 2C' \, \eta_t^2 \, B(\taum)
    \tag{using~\cref{lemma:reg_e_bound,lemma:reg_g_var}} \\
    & = \norm{w_t - w^*}^2 + 2\eta_t \langle g(w_t), w_t - w^* \rangle + 2\eta_t^2 \norm{g(w_t)}^2 + 2 \, [C \, \ln^2(T) + C'] \, \eta_t^2 \, B(\taum)
\end{align*}
We note that \(\norm{w_t - w^*}^2 + 2\eta_t \langle g(w_t), w_t - w^* \rangle + 2\eta_t^2 \norm{g(w_t)}^2\) is similar to the analysis for the mean-path update in \cref{app:constant_mean}. We continue the analysis as follows:
\begin{align*}
\E{}{\norm{w_{t+1} - w^*}^2}
&\le \norm{w_t - w^*}^2 + \br{16\eta_t^2 - 2(1-\gamma)\eta_t}\norm{V_{w_t} - V_{w^*}}^2 + 2 \, [C \, \ln^2(T) + C'] \, \eta_t^2 \, B(\taum) \tag{by \cref{lemma:bound_prod_with_V} and \cref{lemma:bound_g_with_V}}
\end{align*}
Setting \(\eta_0 \le \frac{1-\gamma}{16 \, \ln(T)} < 1\), we can guarantee \(\eta_t \le \frac{1-\gamma}{16}\), thus
\begin{align*}
\E{}{\norm{w_{t+1} - w^*}^2} & \le \norm{w_t - w^*}^2 - (1-\gamma)\eta_t\norm{V_{w_t} - V_{w^*}}^2
+ 2 \, [C \, \ln^2(T) + C'] \, \eta_t^2 \, B(\taum) \\
& \le \norm{w_t - w^*}^2 - (1-\gamma)\eta_t\omega\norm{w_t - w^*}^2
+ 2 \, [C \, \ln^2(T) + C'] \, \eta_t^2 \, B(\taum)\tag{by \cref{lemma:V_and_w}} 
\end{align*} 
Under the assumption \(d_k \le B(\taum)\) for all \(k \in [t]\), and since $(1 - \gamma)   \eta_t   \omega \leq 1$, we have
\begin{align*}
    \E{}{\norm{w_{t+1} - w^*}^2} \le 
    \br{1 - (1-\gamma) \eta_t \omega + 2 \, [C \, \ln^2(T) + C'] \, \eta_t^2 } \, B(\taum).
\end{align*}
When \(\eta_0 \le \min \left\{\frac{(1-\gamma)\omega}{ 2 \, [C \, \ln^2(T) + C']}, \frac{1 - \gamma}{16 \, \ln(T)} \right\}\), we have \(\br{1 - (1-\gamma) \eta_t \omega + 2 \, [C \, \ln^2(T) + C'] \, \eta_t^2} \le 1\). 

Furthermore, since \(0 < (1-\gamma) \le 1, 0 < \omega < 1, T \ge 3\), we know that \(\eta_0 \leq 1\), and consequently, \((1-\gamma)\eta_t\omega \le (1-\gamma)\eta_0\omega < 1\), implying that \(1-(1-\gamma)\eta_t\omega > 0\). 

Hence, $\br{1 - (1-\gamma) \eta_t \omega + 2 \, [C \, \ln^2(T) + C'] \, \eta_t^2 } \in (0,1)$, and consequently, 
\begin{align*}
    \E{}{\norm{w_{t+1} - w^*}^2} \le B(\taum),
\end{align*}
which completes the induction.

Plugging in the value of \(C\) and \(C'\) with \(\lambda = 0\), we have \(\frac{(1-\gamma)\omega}{ 2  [C  \ln^2(T) + C']} = \frac{(1-\gamma)\omega}{446\ln^2(T) + 180}\). Since \(\omega \le 1\), \(\ln^2(T) \ge \ln(T)\), we have \(\frac{(1-\gamma)\omega}{446\ln^2(T) + 180} \le \frac{1 - \gamma}{16  \ln(T)}\). Thus it suffices to have \(\eta_0 \le \frac{(1-\gamma)\omega}{ 2 \, [C \, \ln^2(T) + C']}\).
\end{proof}

The next theorem quantifies the convergence rate of standard TD(0) with exponential step-sizes under Markovian sampling.
\ExpMarkovThm*
\begin{proof}
Continuing the one-step expansion with step-size \(\eta_0 \le \frac{1-\gamma}{16}\) as in \cref{lemma:induction_bound}, we have that, for the absolute constants $C$ and $C'$ defined in~\cref{lemma:reg_e_bound} and~\cref{lemma:reg_g_var} respectively, 
\begin{align*}
    \E{}{\norm{w_{t+1} - w^*}^2} & \leq \norm{w_t - w^*}^2 - (1-\gamma)\eta_t\omega\norm{w_t - w^*}^2 + \underbrace{2 \, [C \, \ln^2(T) + C']}_{:= C(T)} \, \eta_t^2 \, B(\taum)     
\end{align*} 
Taking expectation over \(t\in [T]\), we have:
\begin{align*}
\E{}{\norm{w_{T} - w^*}^2} & \le  \norm{w_0 - w^*}^2\exp\br{-\eta_0\omega(1-\gamma)\sum_{t=1}^T \alpha^t} \\
&\quad + C(T) \, B(\taum) \eta_0^2\sum_{t=1}^T \alpha^{2t}\exp \br{-\eta_0\omega(1-\gamma)\sum_{i=t+1}^T\alpha^i}.
\end{align*}
This result has the same form as in \cref{sec:exp_iid}. Applying \cref{lemma:X-bound} and \cref{lemma:Y-bound}, we obtain the convergence rate:
\begin{align*}
\E{}{\norm{w_{T+1} - w^*}^2}
&\le \norm{w_0 - w^*}^2e\exp\br{-\eta_0\omega(1-\gamma)\frac{\alpha T}{\ln(T)}} + \frac{8 C(T) \, B(\taum)}{e\br{\omega(1-\gamma)}^2}\frac{\ln^2(T)}{\alpha^2 T} \\
&= O\br{\exp\br{-\frac{\omega^2 \, T}{\ln^3(T)}} + \frac{\ln^4(T)}{\omega^2 T}\exp\br{\frac{m}{\ln(1/\rho)}}} \tag{plugging in the values of $\eta_0, B(\taum), C(T)$},
\end{align*}
where \(m\) and \(\rho\) are related to mixing time as \(\taum = \frac{\ln(4Tm/\eta_0)}{\ln(1/\rho)}\).

Additionally, for the condition \(T \ge \max\{\frac{1}{\eta_0}, 3\}\), when \(\eta_0 \le\frac{(1-\gamma)\omega}{ 2 \, [C \, \ln^2(T) + C']}\), \(T \ge 1/\eta_0\) implies \(T \ge 3\). Thus, it suffices that \(T \ge \frac{1}{\eta_0}\).
\end{proof}

\subsection{Regularized TD(0)}
Now we provide the proof for regularized TD(0), and demonstrate that it does not require \(\omega\).
\RegInductionBound*
\begin{proof}
We use the above lemmas and prove the result by induction. We assume that for any \(t \ge \taum\), \(d_k \le B(\taum)\) for all \(k \in [t]\). Now we show that, with an appropriate choice of $\eta_0$, we have \(d_{k+1} \le B\). We continue the expansion in \cref{eq:reg_separate_markovian_noise} and take expectation with respect to the randomness at iteration $t$:
\begin{align*}
    & \E{}{\norm{w_{t+1} - w^*}^2}\\
    & \leq \norm{w_t - w^*}^2 + 2\eta_t \E{}{\langle g_t(w_t) - g(w_t), w_t - w^* \rangle} + 2\eta_t^2 \E{}{\norm{g_t(w_t) - g(w_t)}^2} + 2\eta_t^2 \norm{g(w_t)}^2 + 2\eta_t \langle g(w_t), w_t - w^* \rangle \\
    & \leq \norm{w_t - w^*}^2 + 2\eta_t \langle g(w_t), w_t - w^* \rangle + 2\eta_t^2 \norm{g(w_t)}^2
    + 2C \, \eta_t^2 \, \ln^2(T) \, B(\taum) + 2C' \, \eta_t^2 \, B(\taum)
    \tag{using~\cref{lemma:reg_e_bound,lemma:reg_g_var}} \\
    & = \norm{w_t - w^*}^2 + 2\eta_t \langle g(w_t), w_t - w^* \rangle + 2\eta_t^2 \norm{g(w_t)}^2 + 2 \, [C \, \ln^2(T) + C'] \, \eta_t^2 \, B(\taum)
\end{align*}
We note that \(\norm{w_t - w^*}^2 + 2\eta_t \langle g(w_t), w_t - w^* \rangle + 2\eta_t^2 \norm{g(w_t)}^2\) is similar to the analysis for a mean-path update. We continue the analysis as follows:
    \begin{align*}
    \E{}{\norm{w_{t+1} - w_r^*}^2} & \leq
    \norm{w_t - w_r^*}^2 + \sbr{2(8+2\lambda^2)\eta_t^2 - 2\lambda\eta_t} \norm{w_t - w_r^*}^2 - 2\eta_t(1-\gamma)\omega\norm{w_t - w_r^*}^2 \\
    &\quad + 2 \, [C \, \ln^2(T) + C'] \, \eta_t^2 \, B(\taum) \tag{by \cref{lemma:reg_convex_mean} and \cref{lemma:reg_var_mean}}  \\
    & \leq \left(1 + \sbr{2(8+2\lambda^2)\eta_t^2 - 2\lambda\eta_t}\right) \, \norm{w_t - w_r^*}^2 + 2 \, [C \, \ln^2(T) + C'] \, \eta_t^2 \, B(\taum) \\
    \end{align*}    
    If $\eta_0 < \frac{1}{2\lambda}$, $\eta_t < \frac{1}{2\lambda}$ and consequently, $2(8+2\lambda^2)\eta_t^2 - 2\lambda\eta_t > -1$. Hence, for $\eta_0 \leq \frac{1}{2\lambda}$, $1 + 2(8+2\lambda^2)\eta_t^2 - 2\lambda\eta_t > 0$.

    Under the assumption \(d_k \le B(\taum)\) for all \(k \in [t]\), and consequently,      
    \begin{align*}
    \E{}{\norm{w_{t+1} - w_r^*}^2} & \leq \br{1 + 2(8+2\lambda^2)\eta_t^2 - 2\lambda\eta_t + 2 \, [C \, \ln^2(T) + C'] \, \eta_t^2} B(\taum) 
    \end{align*}
    For $\eta_0 \leq \frac{\lambda}{[C \, \ln^2(T) + C'] + (8 + 2 \lambda^2)}$, $\br{1 + 2(8+2\lambda^2)\eta_t^2 - 2\lambda\eta_t + 2 \, [C \, \ln^2(T) + C'] \, \eta_t^2} \leq 1$. Hence, 
    \begin{align*}
    \E{}{\norm{w_{t+1} - w_r^*}^2} \leq B(\taum)
    \end{align*}
    
    This completes the induction.

\end{proof}

We now state the final convergence rate for regularized TD(0) under Markovian sampling.
\RegMarkovThm*
\begin{proof}
As in the proof of \cref{lemma:reg_induction_bound}, we obtain that if $ \eta_0 \le \min \left\{\frac{1}{2\lambda}, \frac{1 - \gamma}{16 \, \ln(T)}, \frac{\lambda}{[C \, \ln^2(T) + C'] + (8 + 2 \lambda^2)} \right\}$, 
\begin{align*}
\norm{w_{t+1} - w_r^*}^2 &\leq \norm{w_t - w_r^*}^2 + \sbr{2(8+2\lambda^2)\eta_t^2 - 2\lambda\eta_t} \norm{w_t - w_r^*}^2 - 2\eta_t(1-\gamma)\omega\norm{w_t - w_r^*}^2 \\ & + \underbrace{2 \, [C \, \ln^2(T) + C']}_{:= C(T)} \, \eta_t^2 \, B(\taum) 
\end{align*}
Moreover for $C'' = 10$, since $\eta_t \leq \eta_0$, if $\eta_0 \leq \frac{\lambda}{C''} \leq \frac{2 \, \lambda}{2 \, (8 + 2\lambda^2)}$, $2(8+2\lambda^2)\eta_t^2 - 2\lambda\eta_t < 0$. 

Hence, for 
$
\eta_0 = \min \left\{\frac{1}{2\lambda}, \frac{1 - \gamma}{16 \, \ln(T)}, \frac{\lambda}{[C \, \ln^2(T) + C'] + C''}, \frac{\lambda}{C''} \right\}\,,
$    
\begin{align*}
    &\norm{w_{t+1} - w_r^*}^2 \le \br{1 - 2\eta_t(1-\gamma)\omega} \norm{w_t - w_r^*}^2 + C(T) \, \eta_t^2B(\taum)
\end{align*}
Taking expectations over \(t\in [T]\) and recursing,
\begin{align*}
    \E{}{\norm{w_{T+1} - w_r^*}^2} & \leq \norm{w_1 - w_r^*}^2 \prod_{t=1}^T\br{ 1 - 2\eta_0\alpha^t(1-\gamma)\omega } + C(T) \, B(\taum) \eta_0^2 \sum_{t=1}^T \alpha^{2t} \prod_{i=t+1}^{T} \br{ 1 - 2\eta_0\alpha^t(1-\gamma)\omega } \\
    & \le \norm{w_1 - w_r^*}^2 \exp\br{-2\eta_0\omega(1-\gamma) \sum_{t=1}^T \alpha^t}  + C(T) \, B(\taum) \eta_0^2 \sum_{t=1}^T \alpha^{2t} \exp\br{-2\eta_0\omega(1-\gamma) \sum_{i=t+1}^T \alpha^i} \\
\end{align*}
Similar to the proof in \cref{app:exp_iid}, applying \cref{lemma:X-bound} and \cref{lemma:Y-bound} yields:
\begin{align*}
\E{}{\norm{w_{T+1} - w_r^*}^2} 
    & \le \norm{w_1 - w_r^*}^2 e \exp\br{-2\eta_0\omega(1-\gamma) \frac{\alpha T}{\ln(T)}} + C(T) \, B(\taum) \frac{4}{e^2(\omega(1-\gamma))^2} \frac{\ln^2(T)}{\alpha^2 T}.
\end{align*}
Expressing the result in terms of the distance to \(w^*\):
\begin{align*}
& \E{}{\norm{w_{T+1} - w^*}^2} \\
& \le 2\E{}{\norm{w_{T+1} - w_r^*}^2} + 2\norm{w_r^* - w^*}^2 \tag{since $(x+y)^2 \leq 2 x^2 + 2 y^2$} \\
& \le \norm{w_1 - w_r^*}^2 2e \exp\br{-2\eta_0\omega(1-\gamma) \frac{\alpha T}{\ln(T)}} + \frac{8 \, C(T) \, B(\taum) }{e^2(\omega(1-\gamma))^2} \frac{\ln^2(T)}{\alpha^2 T}+ \frac{2\lambda^2\norm{w^*}^2}{(1 - \gamma)^2 \omega^2}. \tag{by \cref{lemma:reg_opt_distance}}
\end{align*}
Setting \(\lambda = \frac{1}{\sqrt{T}} < 1\) gives:
\begin{align*}
\E{}{\norm{w_{T+1} - w^*}^2} & \le \norm{w_1 - w_r^*}^2 2e \exp\br{-2\eta_0\omega(1-\gamma) \frac{\alpha T}{\ln(T)}} + \frac{8 \, C(T) \, B(\taum) }{e^2(\omega(1-\gamma))^2} \frac{\ln^2(T)}{\alpha^2 T} + \frac{2  \norm{w^*}^2}{\br{\omega(1-\gamma)}^2 T} \\
    &= O\br{\exp\br{-\frac{\omega \sqrt{T}}{\ln^3(T)}} + \frac{\ln^4 (T)}{\omega^2T}\exp\br{\frac{m}{\ln(1/\rho)}}}\,, \tag{plugging in the values of $\eta_0, B(\taum), C(T)$}
\end{align*}
where \(m\) and \(\rho\) are related to mixing time as \(\taum = \frac{\ln(4Tm/\eta_0)}{\ln(1/\rho)}\).

Additionally, for the condition on \(T\), when \(\eta_0 \le \frac{\lambda}{[C \, \ln^2(T) + C'] + (8 + 2 \lambda^2)}\), \(T \ge 1/\eta_0\) implies \(T \ge 3\). Thus, it suffices that \(T \ge \max\{\frac{1}{\eta_0}, \frac{\ln\br{4 T m / \eta_0}}{\ln\nicefrac{1}{\rho}}\}\), and that \(T\) is large enough that \(\frac{\ln^2(T)}{T} \leq \frac{1}{2a}\), \(\frac{\ln(T)}{T} \leq \frac{1}{b}\), and \(\ln(T) \ge \max\{a,b\}\).
\end{proof}

\section{Helper Lemmas}
\label{app:helper}
\begin{lemma} \label{lemma:X-bound}
\begin{equation*}
X := \sum_{t=1}^T \alpha^t 
\ge \frac{\alpha T}{\ln(T)} - \frac{1}{\ln(T)}.
\end{equation*}
\end{lemma}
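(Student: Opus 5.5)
The plan is to evaluate the geometric sum in closed form and then lower bound the denominator using the explicit form $\alpha = \exp(-\ln(T)/T)$. First I write
\[
X = \sum_{t=1}^T \alpha^t = \frac{\alpha(1-\alpha^T)}{1-\alpha} = \frac{\alpha - \alpha^{T+1}}{1-\alpha}.
\]
Because $\alpha^T = 1/T$, the numerator becomes $\alpha - \alpha/T = \alpha(1 - 1/T)$, so
\[
X = \frac{\alpha - \alpha/T}{1-\alpha}.
\]

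Next I bound the denominator from above. The elementary inequality $1 - e^{-x} \le x$ with $x = \ln(T)/T$ gives $1-\alpha \le \ln(T)/T$. The numerator is nonnegative for $T \ge 1$, so
\[
X \ge \frac{T}{\ln(T)}\Big(\alpha - \frac{\alpha}{T}\Big) = \frac{\alpha T}{\ln(T)} - \frac{\alpha}{\ln(T)}.
\]
Finally, $\alpha \le 1$, so $-\alpha/\ln(T) \ge -1/\ln(T)$ whenever $\ln(T) > 0$, i.e.\ for $T \ge 2$. This yields the claimed bound $X \ge \frac{\alpha T}{\ln(T)} - \frac{1}{\ln(T)}$.

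There is no real obstacle. The only care needed is in the direction of the inequalities. Because we are lower bounding a ratio with a nonnegative numerator, we need an upper bound on $1-\alpha$, which is the cheap direction $1-e^{-x}\le x$, rather than the lower bound $x/(1+x)$ used elsewhere in the paper. We also need $T>1$ so that $\ln(T)>0$; this holds since the paper assumes $T\ge 3$ throughout.
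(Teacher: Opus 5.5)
Your proof is correct and follows essentially the same route as the paper: write the geometric sum in closed form and use $1-\alpha \le \ln(1/\alpha) = \ln(T)/T$. The only difference is that you use $\alpha^{T+1}=\alpha/T$ to keep the numerator together. This avoids the paper's separate bound $\alpha^{T+1}/(1-\alpha) \le 1/\ln(T)$, which needs the extra inequality $1/\alpha - 1 \ge \ln(1/\alpha)$, and it gives the slightly sharper intermediate bound $\alpha(T-1)/\ln(T)$.
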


\begin{proof}
\begin{equation*}
\sum_{t=1}^T \alpha^t
= \frac{\alpha - \alpha^{T+1}}{1-\alpha}
= \frac{\alpha}{1-\alpha}
  -
  \frac{\alpha^{T+1}}{1-\alpha}.
\end{equation*}

We have
\begin{equation*}
\frac{\alpha^{T+1}}{1-\alpha}
= \frac{\alpha}{T(1-\alpha)}
= \frac{1}{T}\frac{1}{\nicefrac{1}{\alpha} - 1}
\le \frac{1}{T}\frac{1}{\ln(\nicefrac{1}{\alpha})}
= \frac{1}{\ln(T)},
\end{equation*}
where in the inequality we used Lemma 4 and the fact that \(1/\alpha>1\). Plugging back into \(X\) we get

\begin{equation*}
X 
\ge \frac{\alpha}{1-\alpha}
  -
  \frac{1}{\ln(T)}
\ge
\frac{\alpha}{\ln(1/\alpha)}
  -
  \frac{1}{\ln(T)}
=
\frac{\alpha T}{\ln(T)}
  -
  \frac{1}{\ln(T)}.
\end{equation*}
\end{proof}

\begin{lemma} \label{lemma:Y-bound}
    For \(\alpha = \frac{1}{T}^{\nicefrac{1}{T}}\) and any \(\kappa > 0\),
    \begin{equation*}
    \sum_{t=1}^T \alpha^{2t}
    \exp\left(-a\sum_{i=t+1}^T \alpha^i\right)
    \le
    \frac{4c\left(\ln(T)\right)^2}{a^2e^2\alpha^2T},
    \end{equation*}
\end{lemma}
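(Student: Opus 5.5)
The proof bounds each summand separately: first isolate the $t$-dependence in the inner geometric sum, then apply the elementary bound $\sup_{x\ge 0} x^2 e^{-x} = 4/e^2$, and finally convert $1-\alpha$ into $\ln(T)/T$. I read the constant $c$ in the statement as $c = \exp\br{\frac{a}{\ln(T)}}$. This is the factor that appears when the lemma is applied in the proof of \cref{thm:exp_iid}, where it is then bounded by $e$. (The parameter $\kappa$ in the statement plays no role.)

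\textbf{Step 1: the inner sum.} The inner sum is geometric:
\begin{align*}
\sum_{i=t+1}^T \alpha^i = \frac{\alpha^{t+1} - \alpha^{T+1}}{1-\alpha}.
\end{align*}
The proof of \cref{lemma:X-bound} already shows that $\frac{\alpha^{T+1}}{1-\alpha} \le \frac{1}{\ln(T)}$, using $\alpha^T = 1/T$ and $1/\alpha - 1 \ge \ln(1/\alpha) = \ln(T)/T$. Hence
\begin{align*}
\exp\br{-a\sum_{i=t+1}^T \alpha^i} \le c\,\exp\br{-\frac{a\,\alpha^{t+1}}{1-\alpha}}.
\end{align*}

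\textbf{Step 2: bounding each summand.} Set $x_t := \frac{a\,\alpha^{t+1}}{1-\alpha}$, so that $\alpha^{2t} = \frac{x_t^2(1-\alpha)^2}{a^2\alpha^2}$. Each summand then satisfies
\begin{align*}
\alpha^{2t}\exp\br{-a\sum_{i=t+1}^T \alpha^i} \le \frac{c\,(1-\alpha)^2}{a^2\alpha^2}\, x_t^2 e^{-x_t} \le \frac{4c\,(1-\alpha)^2}{e^2 a^2\alpha^2},
\end{align*}
where the last inequality uses $\sup_{x\ge 0} x^2 e^{-x} = 4/e^2$, attained at $x=2$.

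\textbf{Step 3: summing.} Summing the $T$ identical bounds gives
\begin{align*}
\sum_{t=1}^T \alpha^{2t}\exp\br{-a\sum_{i=t+1}^T \alpha^i} \le \frac{4c\,T(1-\alpha)^2}{e^2a^2\alpha^2}.
\end{align*}
Finally, $1-\alpha \le \ln(1/\alpha) = \frac{\ln(T)}{T}$, so $T(1-\alpha)^2 \le \frac{\ln^2(T)}{T}$. This yields exactly $\frac{4c\ln^2(T)}{a^2e^2\alpha^2 T}$.

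\textbf{Main obstacle.} The only delicate point is Step 1. Discarding the tail term $\alpha^{T+1}/(1-\alpha)$ must cost only the multiplicative constant $c$; if it were simply dropped, the bound would lose a factor depending on $T$. Everything else is routine. I also expect the uniform per-term bound in Step 2 to be loose only by constants, since crudely bounding each of the $T$ terms already gives the claimed $\ln^2(T)/T$ rate.
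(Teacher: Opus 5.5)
Your proposal is correct and follows the paper's proof essentially step for step. Both split the geometric inner sum and absorb the tail $\alpha^{T+1}/(1-\alpha) \le 1/\ln(T)$ into $c = \exp(a/\ln(T))$. Both use the same per-term bound: your $\sup_{x\ge 0} x^2 e^{-x} = 4/e^2$ is exactly the paper's \cref{lem:ineq2} with $\nu = 2$. Both finish with $1-\alpha \le \ln(1/\alpha) = \ln(T)/T$. Your reading of $c$ agrees with the paper's definition. Your handling of the stray $\kappa$ is also right: it should be the positive constant $a$, and $a > 0$ is needed for the inequality direction in your Step 1.
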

where \(c = \exp\left( a\frac{1}{\ln(T)} \right)\).
\begin{proof}
First, observe that, 
\begin{align*}
\sum_{i=t+1}^T \alpha^i = \frac{\alpha^{t+1} - \alpha^{T+1}}{1 - \alpha} 
\end{align*}
We have
\begin{align*}
    \frac{\alpha^{T+1}}{1 - \alpha} 
    &= \frac{\alpha}{T(1 - \alpha)} 
    = \frac{1}{T} \cdot \frac{1}{\nicefrac{1}{\alpha} - 1} \leq \frac{1}{T} \cdot \frac{1}{\ln(\nicefrac{1}{\alpha})} 
    = \frac{1}{\ln(T)}
    \end{align*}
These relations imply that, 
\begin{align*}
&\sum_{i=t+1}^T \alpha^i 
\geq 
\frac{\alpha^{t+1}}{1 - \alpha} - \frac{1}{\ln(T)} \\
&\implies 
\exp\left( -a \sum_{i=t+1}^T \alpha^i \right) 
\leq 
\exp\left( -a \frac{\alpha^{t+1}}{1- \alpha} + a \frac{1}{\ln(T)}\right) 
= c \exp\left(-a \frac{\alpha^{t+1}}{1- \alpha} \right),
\end{align*}
where \(c = \exp\left( a\frac{1}{\ln(T)} \right)\).
We then have
\begin{align*}
\sum_{t=1}^T \alpha^{2t} \exp\left( -a \sum_{i=t+1}^T \alpha^i \right) 
& \leq c \sum_{t=1}^T \alpha^{2t} \exp\left(-a \frac{\alpha^{t+1}}{1- \alpha} \right) \\
& \leq c \sum_{t=1}^T \alpha^{2t} \left( \frac{2 (1-\alpha) }{e a \alpha^{t+1}} \right)^2 & \tag{by \cref{lem:ineq2} with \(\nu=2\)} \\
& = \frac{4 c}{a^2 e^2 \alpha^2}   T (1 - \alpha)^2 \\
& \leq \frac{4 c}{a^2 e^2 \alpha^2}   T (\ln(\nicefrac{1}{\alpha}))^2 \\
& = \frac{4 c (\ln(T))^2}{a^2 e^2 \alpha^2 T}
\end{align*}
\end{proof}

\begin{lemma}
\label{lem:ineq2}
For all $x, \nu >0$,
\begin{align*}
    \exp(-x) \leq \left( \frac{\nu}{ex }\right)^\nu
\end{align*}
\end{lemma}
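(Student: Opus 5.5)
The target is the elementary inequality $\exp(-x)\le(\nu/(ex))^\nu$ for all $x,\nu>0$. I will rewrite it as a bound on the single-variable function $h(x)=x^\nu e^{-x}$. Multiplying both sides by $x^\nu>0$ shows the claim is equivalent to
\begin{align*}
x^\nu e^{-x} \le \nu^\nu e^{-\nu} \quad \text{for all } x>0.
\end{align*}
So it suffices to show that $h$ attains its global maximum on $(0,\infty)$ at $x=\nu$.

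To do this, I pass to logarithms and set $\psi(x)=\ln h(x)=\nu\ln x - x$. Then $\psi'(x)=\nu/x-1$, which is positive for $x<\nu$, zero at $x=\nu$, and negative for $x>\nu$. Alternatively, $\psi''(x)=-\nu/x^2<0$, so $\psi$ is strictly concave and its unique critical point is a global maximizer. Either way, $\psi(x)\le\psi(\nu)=\nu\ln\nu-\nu$. Exponentiating gives $x^\nu e^{-x}\le \nu^\nu e^{-\nu}$, and dividing by $x^\nu$ yields the stated bound.

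A slicker route uses $\ln y\le y-1$ with $y=x/\nu$. This gives $\nu\ln(x/\nu)\le x-\nu$, i.e.\ $\nu\ln x-x\le \nu\ln\nu-\nu$, which is the same inequality in one line. I would present this version.

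There is no real obstacle here. The only points needing care are that every quantity involved is strictly positive, so multiplying and dividing by $x^\nu$ and $\nu^\nu$ preserves the inequality direction, and that the logarithm is well defined. With $\nu=2$, this is exactly the form used in the proof of \cref{lemma:Y-bound}, taking $x=a\alpha^{t+1}/(1-\alpha)$.
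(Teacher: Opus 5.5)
Your proof is correct, and it differs from the paper's mainly in which variable is held fixed. The paper fixes $x$ and treats $f(\nu)=\left(\frac{\nu}{ex}\right)^\nu-e^{-x}$ as a function of $\nu$. It writes the first term as $\exp(\nu\ln\nu-\nu\ln(ex))$ and finds $f'(\nu)\ge 0$ if and only if $\nu\ge x$. So $f$ is minimized at $\nu=x$, where $f(x)=0$.

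You instead fix $\nu$, clear the factor $x^\nu$, and show that $x^\nu e^{-x}$ is maximized at $x=\nu$. In your preferred version you get the same conclusion in one line from $\ln y\le y-1$ with $y=x/\nu$.

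Both arguments establish that $x-\nu-\nu\ln(x/\nu)\ge 0$. This quantity is the logarithm of the ratio of the right side to the left side, and it vanishes only when $x=\nu$. Your $\ln y\le y-1$ route is shorter and needs no derivative computation, at the cost of a small rearrangement. The paper's route works with the inequality exactly as stated. Either suffices for the use in \cref{lemma:Y-bound} with $\nu=2$.
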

\begin{proof}
Let $x > 0$. Define $f(\nu) = \left( \frac{\nu}{ex }\right)^\nu - \exp(-x)$. We have
\begin{align*}
    f(\nu) = \exp\left( \nu\ln(\nu) - \nu\ln(ex)\right) - \exp(-x)
\end{align*}
and
\begin{align*}
    f'(\nu) = \left( \nu \cdot \frac{1}{\nu} + \ln(\nu) - \ln(ex)\right) \exp\left( \nu\ln(\nu) - \nu\ln(ex)\right)
\end{align*}
Thus
\begin{align*}
    f'(\nu) \geq 0 &\iff 1 + \ln(\nu) - \ln(ex) \geq 0 \iff \nu \geq \exp\left( \ln(ex) - 1\right) = x
\end{align*}
So $f$ is decreasing on $(0, x]$ and increasing on $[x, \infty)$. Moreover,
\begin{align*}
    f(x) = \left( \frac{x}{ex}\right)^x - \exp(-x) = \left(\frac{1}{e}\right)^x - \exp(-x) = 0
\end{align*}
and thus $f(\nu) \geq 0$ for all $\nu > 0$ which proves the lemma.
\end{proof}

\end{document}